\def\nset{{\mathbb{N}}}
\def\rset{\mathbb R}
\def\rmd{\mathrm{d}}
\def\argmin{\operatorname{Argmin}}
\def\min{\mathrm{min}}
\def\max{\mathrm{max}}
\def\1{\mathbbm{1}}
\newcommand{\un}{\ensuremath{\mathbbm{1}}}
\def\PE{\mathbb{E}} 
\newcommand{\pscal}[2]{\left\langle#1,#2\right\rangle}
\newcommand{\eqdef}{\ensuremath{\stackrel{\mathrm{def}}{=}}}
\newcommand{\kmax}{K_\mathrm{max}}
\newcommand{\km}{\mathrm{K}}
\newcommand{\fgm}{\mathrm{FGM}}
\def\barY{\overline{Y}}
\def\EM{\mathrm{EM}}
\def\FIEM{\mathrm{FIEM}}
\newcommand\true{\mathrm{true}}
\newcommand\curr{\mathrm{curr}}
\newcommand\init{\mathrm{init}}
\newcommand\batch{\mathcal{B}}
\newcommand\lbatch{\mathrm{b}}
\newcommand\A{\mathsf{A}}
\newcommand{\R}{\mathsf{R}}
\def\Zset{\mathsf{Z}}
\def\Rset{\mathbb{R}}
\newcommand\Sset{{\Rset^q}}
\def\Zsigma{\mathcal{Z}}
\newcommand{\loss}[1]{\ensuremath{\mathcal{L}_{#1}}}
\newcommand{\Q}[1]{\ensuremath{\mathsf{Q}_{#1}}}
\def\L{\mathsf{L}}
\newcommand{\bars}{\bar{s}}
\newcommand{\s}{s}
\newcommand{\hatS}{\widehat{S}}
\newcommand{\Sronde}{\widetilde{S}}
\newcommand{\Smem}{\mathsf{S}}
\newcommand{\param}{\theta}
\newcommand{\Param}{\Theta}
\newcommand{\map}{\mathsf{T}}
\def\F{\mathcal{F}} 
\newcommand\sequence[3] {\ifthenelse{\equal{#3}{}}{\ensuremath{\{
#1_{#2}\}}}{\ensuremath{\{ #1^{#2}, \eqsp #2 \in #3 \}}}}
\newcommand\sequencedown[3] {\ifthenelse{\equal{#3}{}}{\ensuremath{\{
#1_{#2}\}}}{\ensuremath{\{ #1_{#2}, \eqsp #2 \in #3 \}}}}
\newcommand{\lyap}{V}
\def\Id{\mathrm{I}}
\def\eqsp{\;}
\newcommand{\ie}{i.e.}
\newcommand{\pas}{\gamma}
\def\param{\theta}
\def\pa{\mathsf{a}}
\def\pb{\mathsf{b}}
\def\pc{\mathsf{c}}
\def\pd{\mathsf{d}}
\newcommand{\coint}[1]{\left[#1\right)}
\newcommand{\ocint}[1]{\left(#1\right]}
\newcommand{\ooint}[1]{\left(#1\right)}
\newcommand{\ccint}[1]{\left[#1\right]}
\newtheorem{assumption}{A\hspace{-3pt}}
\newtheorem{theorem}{Theorem}
\newaliascnt{proposition}{theorem}
\newtheorem{proposition}[proposition]{Proposition}
\crefname{proposition}{proposition}{propositions}
\Crefname{Proposition}{Proposition}{Propositions}
\newaliascnt{lemma}{theorem}
\newtheorem{lemma}[lemma]{Lemma}
\crefname{lemma}{lemme}{lemmes}
\Crefname{Lemma}{Lemme}{Lemmes}
\newaliascnt{corollary}{theorem}
\crefname{corollary}{corollaire}{corollaires}
\Crefname{Corollary}{Corollaire}{Corollaires}
\newaliascnt{definition}{theorem}
\newaliascnt{remark}{theorem}
\begin{document}

\title{Fast Incremental Expectation Maximization for
  finite-sum  optimization: nonasymptotic convergence. \thanks{This work is partially supported by the {\em
      Fondation Simone et Cino Del Duca} through the project OpSiMorE;
    by the French {\em Agence Nationale de la Recherche} (ANR), project under reference ANR-PRC-CE23 MASDOL and Chair ANR of research and teaching in artificial intelligence - SCAI Statistics and Computation for AI; and by the  Russian Academic Excellence Project '5-100'. }}

\author[1]{G. Fort}
\author[2]{P. Gach}
\author[3]{E. Moulines}

\affil[1]{Institut de Mathématiques de Toulouse \& CNRS, France; {\small gersende.fort@math.univ-toulouse.fr}}
\affil[2]{Institut de Mathématiques de Toulouse \& Université Toulouse 3, France; {\small pierre.gach@math.univ-toulouse.fr}}
\affil[3]{CMAP \& Ecole Polytechnique, France; {\small eric.moulines@polytechnique.edu}}

\maketitle

\begin{abstract}
  Fast Incremental Expectation Maximization (FIEM) is a version of the
  EM framework for large datasets.  In this paper, we first recast
  FIEM and other incremental EM type algorithms in the {\em Stochastic
    Approximation within EM} framework. Then, we provide nonasymptotic
  bounds for the convergence in expectation as a function of the
  number of examples $n$ and of the maximal number of iterations
  $\kmax$.  We propose two strategies for achieving an
  $\epsilon$-approximate stationary point, respectively with $\kmax =
  O(n^{2/3}/\epsilon)$ and $\kmax = O(\sqrt{n}/\epsilon^{3/2})$, both
  strategies relying on a random termination rule before $\kmax$ and
  on a constant step size in the Stochastic Approximation step. Our
  bounds provide some improvements on the literature. First, they
  allow $\kmax$ to scale as $\sqrt{n}$ which is better than $n^{2/3}$
  which was the best rate obtained so far; it is at the cost of a
  larger dependence upon the tolerance $\epsilon$, thus making this
  control relevant for small to medium accuracy with respect to the
  number of examples $n$. Second, for the $n^{2/3}$-rate, the
  numerical illustrations show that thanks to an optimized choice of
  the step size and of the bounds in terms of quantities
  characterizing the optimization problem at hand, our results design
  a less conservative choice of the step size and provide a better
  control of the convergence in expectation.
\end{abstract}

{\bf keywords:} Computational Statistical Learning \and Large Scale
  Learning \and Incremental Expectation Maximization algorithm \and
  Momentum Stochastic Approximation \and Finite-sum optimization.

{\bf Mathematics Subject Classification (2010)} MSC: 65C60 \and 68Q32
\and 65K10

\section{Introduction}
The Expectation Maximization (EM) algorithm was introduced by
\cite{Dempster:em:1977} to solve a non-convex optimization problem on
$\Param \subseteq \rset^d$ when the objective function $F$ is defined
through an integral:
  \begin{equation} \label{eq:intro:F}
  F(\param) \eqdef - \frac{1}{n}\log \int_{\Zset_n} G(z;\param)
  \ \mu_n(\rmd z) \eqsp,
  \end{equation}
  for $n \in \nset \setminus \{0\}$, a positive function $G$ and a
  $\sigma$-finite positive measure $\mu_n$ on a measurable space
  $(\Zset_n, \Zsigma_n)$. EM is a Majorize-Minimization (MM) algorithm
  which, based on the current value of the iterate $\param_\curr$,
  defines a majorizing function $\param \mapsto
  \Q{}(\param,\param_\curr)$ given, up to an additive constant, by
  \begin{multline*}
    \Q{}(\param, \param_\curr)  \eqdef - \frac{1}{n}\int_{\Zset_n}
    \log G(z; \param) \ G(z; \param_\curr) \exp(n \, F(\param_\curr))
    \mu_n(\rmd z) \eqsp.
  \end{multline*}
  The next iterate is chosen to be the/a minimum of $\Q{}(\cdot,
  \param_\curr)$.  Each iteration of EM is divided into two steps. In
  the E step (expectation step), a surrogate function is computed. In
  the M step (minimization step), the surrogate function is minimized.
  The computation of the $\Q{}$ function is straightforward when there
  exist functions $\Phi: \Param \to \rset^q$ and $S: \Zset_n \to
  \rset^q$ such that $n^{-1} \log G(z; \param) =
  \pscal{S(z)}{\Phi(\param)}$; this yields $\Q{}(\param, \param_\curr)
  = - \pscal{\bars(\param_\curr)}{\Phi(\param)}$ where
  $\bars(\theta_\curr)$ denotes the expectation of the function $S$
  with respect to (w.r.t.) the probability measure
  \[
\pi_{\param_\curr}(\rmd z) \eqdef G(z; \param_\curr) \exp(n \,
F(\param_\curr)) \mu_n(\rmd z)\eqsp.
\]
This paper is concerned with the case $\Zset_n$ is the $n$-fold
Cartesian product of the set $\Zset$ (denoted by $\Zset^n$), $z =(z_1,
\cdots, z_n) \in \Zset^n$, $S(z) = n^{-1} \sum_{i=1}^n \s_i(z_i)$,
$\mu_n$ is the tensor product of the $\sigma$-finite
positive measure $\mu$ on the measurable space $(\Zset, \Zsigma)$. This implies that
\begin{align*}
F(\param) &= \frac{1}{n}\sum_{i=1}^n f_i(\param) \eqsp, \\ f_i(\param)
& \eqdef - \log \int_\Zset \exp(\pscal{\s_i(z)}{\Phi(\param)})
\ \mu(\rmd z) \eqsp; \\ \bars(\param_\curr) & \propto \frac{1}{n}
\sum_{i=1}^n \int_{\Zset} \s_i(z)
\frac{\exp(\pscal{\s_i(z)}{\Phi(\param_\curr)}) \, \mu(\rmd
  z)}{\int_\Zset \exp(\pscal{\s_i(z')}{\Phi(\param_\curr)}) \mu(\rmd
  z')} \eqsp.
\end{align*} This finite-sum
framework is motivated by large scale learning problems. In such case,
$n$ is the number of observations, assumed to be independent; the
function $f_i$ stands for a possibly non-convex loss associated to the
observation $\# i$ and can also include a penalty (or a
regularization) term.  In the statistical context, $F$ is the negative
normalized log-likelihood of the $n$ observations in a latent variable
model, and $G$ is the complete likelihood; when $\log G(z;\param) \propto
\pscal{S(z)}{\Phi(\param)}$, it belongs to the curved exponential
family (see e.g. \cite{Brown:1986} and \cite{Sundberg:2019}).

When $n$ is large, the computation of $\bars(\param_\curr)$
is computationally costly and should be avoided. We consider incremental algorithms
which use, at each iteration,  a mini-batch
of examples. The computational complexity of these procedures typically displays a trade-off
between the loss of information incurred by the use of a subset of the observations,
and a faster progress toward the solutions since the parameters can be  updated more often.

  A pioneering work in this direction is the {\em incremental EM} by
  \cite{Neal:hinton:1998}: the data set is divided into $B$ blocks and
  a single block is visited between each parameter update. The $\Q{}$
  function of {\em incremental EM} is again a sum over $n$ terms, but
  each E step consists in updating only a block of terms in this sum
  (see~\cite{Ng:mclachlan:2003}).

  The {\em Online EM} algorithm by \cite{cappe:moulines:2009} was
  originally designed to process data streams. It replaces the
  computation of $\bars(\param_\curr)$ by an iteration of a Stochastic
  Approximation (SA) algorithm (see \cite{robbins:monro:1951}). {\em
    Online EM} in the finite sum setting is closely related to {\em
    Stochastic Gradient Descent}.  Improved versions were considered
  by \cite{chen:etal:2018} and by \cite{karimi:etal:2019} which
  introduced respectively {\em Stochastic EM with Variance Reduction
    (sEM-vr)} and {\em Fast Incremental EM (FIEM)} as variance
  reduction techniques within {\em Online EM} as an echo to {\em
    Stochastic Variance Reduced Gradient} (SVRG,
  \cite{Johnson:zhang:2013}) and {\em Stochastic Averaged Gradient}
  (SAGA, \cite{Defazio:bach:2014}) introduced as variance reduction
  techniques within {\em Stochastic Gradient Descent}.

  In this paper, we aim to study such incremental EM methods combined
  with a SA approach. The first goal of this paper is to cast {\em
    Online EM}, {\em incremental EM} and {\em FIEM} into a framework
  called hereafter {\em Stochastic Approximation within EM}
  approaches; see \autoref{sec:SAwithinEM}. We show that the E step of
  FIEM can be seen as the combination of an SA update and of a control
  variate; we propose to optimize the trade-off between update and
  variance reduction, which yields to the {\em opt-FIEM} algorithm
  (see also \autoref{sec:toymodel:exact} for a numerical exploration).

The second and main objective of this paper, is to derive
nonasymptotic upper bounds for the convergence in expectation of FIEM
(see \autoref{sec:FIEM:complexity}).

Following \cite{ghadimi:lan:2013} (see also
\cite{allenzhu:hazan:2016}, \cite{reddi:etal:2016},
\cite{fang:etal:2018}, \cite{zhou:etal:2018} and
\cite{karimi:etal:2019}), we propose to fix a maximal length $\kmax$
and terminate a path $\{\param^k, k \geq 0\}$ of the algorithm at some
random time $K$ uniformly sampled from $\{0, \ldots, \kmax -1\}$ prior
the run and independently of it; our bounds control the expectation
$\PE\left[ \| \nabla F(\param^K) \|^2 \right]$ and as a corollary, we
discuss how to fix $\kmax$ as a function of the sample size $n$ in
order to reach an $\epsilon$-approximate stationary point \ie\ to find
$\hat \param^{K,\epsilon}$ such that $\PE\left[ \| \nabla F(\hat
  \param^{K,\epsilon})\|^2 \right] \leq \epsilon$. Such a property is
sometimes called $\epsilon$-accuracy in expectation (see
e.g. \cite[Definition 1]{reddi:etal:2016}).

\cite{karimi:etal:2019} established that {\em incremental EM}, which
picks at random one example per iteration, reaches $\epsilon$-accuracy
by choosing $\kmax = O(n \epsilon^{-1})$: even if the algorithm is
terminated at a random time $K$, this random time is chosen as a
function of $\kmax$ which has to increase linearly with the size $n$
of the data set.  They also prove that for {\em FIEM},
$\epsilon$-approximate stationarity is reached with $\kmax = O(n^{2/3}
\epsilon^{-1})$ - here again, with one example picked at random per
iteration.  For these reasons, {\em FIEM} is preferable especially
when $n$ is large (see \autoref{sec:mixtureGaussian} for a numerical
illustration). Our major contribution in this paper is to show that
for {\em FIEM}, the rate depends on the choice of some design
parameters. By choosing a constant step size sequence in the SA step,
depending upon $n$ as $O(n^{-2/3})$, then $\epsilon$-accuracy requires
$\kmax = O(n^{2/3} \epsilon^{-1})$; we provide a choice of the step
size (with an explicit dependence on the constants of the problem) and
an explicit expression of the upper bound, which improve the results
reported in \cite{karimi:etal:2019} (see
\autoref{sec:FIEM:errorrate:case1}; see also
\autoref{sec:toymodel:exact} for illustration). We then prove in
\autoref{sec:FIEM:errorrate:case2} that $\epsilon$-accuracy can be
achieved with $\kmax = O(\sqrt{n} \epsilon^{-3/2})$ iterations using
another strategy for the definition of the step size. Finally, we go
beyond the uniform distribution for the random termination time $K$ by
considering a large class of distributions on the set $\{0, \ldots,
\kmax-1\}$ (see \autoref{sec:FIEM:errorrate:case3}).

\paragraph{Notations.}
$\pscal{a}{b}$ denotes the standard Euclidean scalar product on
$\rset^\ell$, for $\ell \geq 1$; and $\|a\|$ the associated norm. For
a matrix $A$, $A^T$ is its transpose. By convention, vectors are
column vectors. \\ For a smooth function $\phi$, $\dot \phi$ denotes
its gradient; for a smooth real-valued function of several variables
$\L$, $\partial_\tau^{k} \L$ stands for the partial derivative of
order $k$ with respect to the variable $\tau$. \\ For a non negative
integer $n$, $[n] \eqdef \{0, \cdots, n\}$ and $[n]^\star \eqdef \{1,
\cdots, n\}$. $a \wedge b$ is the minimum of two real numbers
$a,b$. The big $O$ notation is used to leave out constants. \\ For a
random variable $U$, $\sigma(U)$ denotes the sigma algebra generated
by $U$.

\section{Incremental EM algorithms for finite-sum optimization}
\label{sec:algo}
\subsection{EM in the expectation space}
\label{sec:motivation}
This paper deals with EM-based algorithms  to solve
\begin{equation}\label{eq:problem}
\argmin_{\param\in \Param}  F(\param), \qquad  F(\param) \eqdef \frac{1}{n} \sum_{i=1}^n \loss{i}(\param) +  \R(\param) \eqsp,
\end{equation}
where
\begin{equation}\label{eq:def:loss}
\loss{i}(\param) \eqdef - \log \int_\Zset \tilde{h}_i(z) \ \exp\left( \pscal{\s_i(z)}{\phi(\param)} \right) \ \mu(\rmd  z) \eqsp,
\end{equation}
under the following assumption:
\begin{assumption} \label{hyp:model} $\Param \subseteq \rset^d$ is a measurable convex subset.  $(\Zset, \Zsigma)$ is a measurable
    space and $\mu$ is a $\sigma$-finite positive measure on
    $(\Zset, \Zsigma)$. The functions $\R: \Param \to \rset$, $\phi : \Param
    \to \rset^q$ and $\tilde{h}_i: \Zset \to \rset_+$, $\s_i: \Zset
    \to \rset^q$ for $i \in [n]^\star$ are measurable
    functions. Finally, for any $\param \in \Param$ and $i \in
    [n]^\star$, $-\infty<\loss{i}(\param) < \infty$.
\end{assumption}
 Under A\autoref{hyp:model}, for any $\param \in \Param$ and $i \in [n]^\star$, the quantity $p_i(z; \param) \, \mu(\rmd z)$ where
\[
p_i(z; \param) \eqdef \tilde{h}_i(z) \ \exp\left(
\pscal{\s_i(z)}{\phi(\param)} + \loss{i}(\param)\right) \eqsp,
\]
defines a probability measure on $(\Zset, \Zsigma)$. We assume that
\begin{assumption} \label{hyp:bars}
  For all $\param \in \Param$ and $i \in [n]^\star$, the expectation
  \[
\bars_i(\param) \eqdef \int_\Zset \s_i(z) \ p_i(z;\param) \mu(\rmd z)
\]
exists and is computationally tractable.
\end{assumption}
For any $\param \in \Param$, define
\begin{equation}\label{eq:def:bars}
  \bars(\param) \eqdef \frac{1}{n} \sum_{i=1}^n \bars_i(\param) \eqsp.
\end{equation}
The framework defined by \eqref{eq:problem} and \eqref{eq:def:loss}
covers many computational learning problems such as empirical risk
minimization with non-convex losses: $\R$ may include a regularization
condition on the parameter $\param$, $\loss{i}$ is the loss function
associated to example $\# i$ and $n^{-1} \sum_{i=1}^n \loss{i}$ is the
empirical loss. This framework includes negative log-likelihood
inference in latent variable model (see
e.g.~\cite{little:rubin:2002}), when the complete data likelihood is
from a curved exponential family; in this framework, $z \mapsto
p_i(z;\param) \mu(\rmd z)$ is the a posteriori distribution of the
latent variable $\# i$.

Given $\param' \in \Param$, define the function $\overline{F}(\cdot,
\param'): \Param \to \rset$ by
\begin{align*}
\overline{F}(\param, \param') & \eqdef -
\pscal{\bars(\param')}{\phi(\param)} + \R(\param) + \frac{1}{n}
\sum_{i=1}^n  \mathcal{C}_i(\param')  \eqsp,\\
\mathcal{C}_i(\param') & \eqdef \loss{i}(\param')
+\pscal{\bars_i(\param')}{\phi(\param')} \eqsp.
\end{align*}
It is well known (see \cite{maclachlan:2008,lange2016mm}; see also
\autoref{supp:sec:details2} in the supplementary material) that
$\{\overline{F}(\cdot, \param'), \param' \in \Param \}$ is a family of
majorizing function of the objective function $F$ from which a
Majorize-Minimization approach for solving \eqref{eq:problem} can be
derived. Define
\begin{equation}
\label{eq:definition-bar-L}
\L(s,\cdot): \param \mapsto - \pscal{s}{\phi(\param)} + \R(\param)
\end{equation}
and consider the following assumption:
\begin{assumption} \label{hyp:Tmap}  For any $s \in \Sset$,  $\param \mapsto \L(s,\param)$ has a unique global minimum on $\Param$ denoted $\map(s)$.
\end{assumption}
In most successful applications of the EM algorithm, the function
$\param \mapsto \L(s,\param)$ is strongly convex. Strong convexity is
however not required here.  Starting from the current point
$\param^k$, the EM iterative scheme $\param^{k+1} = \map \circ
\bars(\param^k)$ first computes a point in $\bars(\Param)$ through the
expectation $\bars$, and then apply the map $\map$ to obtain the new
iterate $\param^{k+1}$. It can therefore be described in the
$\bars(\Param)$-space, a space sometimes called the {\em expectation
  space}: define the sequence $\sequence{\bars}{k}{\nset}$ by $\bars^0
\in \Sset$ and for any $k \geq 0$
\begin{align} \label{eq:exact:update:bars}
\bars^{k+1} &\eqdef \bars \circ \map(\bars^k) \eqsp.
\end{align}
Sufficient conditions for the characterization of the limit points of
any instance $\{\bars^k, k \geq 0\}$ as the critical points of $F
\circ \map$, for the convergence of the functional along the sequence
$\{F \circ \map(\bars^k), k \geq 0\}$, or for the convergence of the
iterates $\{\bars^k, k \geq 0\}$ exist in the literature (see
e.g. \cite{Wu:1983,Lange:1995,Delyon:etal:1999} in the EM context and
\cite{zangwill:1967,csiszar:tusnady:1984,gunawardana:2005,parisi:etal:2019}
for general iterative MM algorithms). Proposition~\ref{lem:nablaV}
characterizes the fixed points of $\map \circ \bars$ and of $\bars
\circ \map$ under a set of conditions which will be adopted for the
convergence analysis in Section~\ref{sec:FIEM:complexity}.
\begin{assumption} \label{hyp:regV}
  \begin{enumerate}[label=(\roman*)]
     \item \label{hyp:model:C1} The functions $\phi$ and $\R$ are
       continuously differentiable on $\Param^v$ where $\Param^v
       \eqdef \Param$ if $\Param$ is open, or $\Param^v$ is a
       neighborhood of $\Param$ otherwise. $\map$ is continuously
       differentiable on $\Sset$.
\item \label{hyp:model:F:C1} The function $F$ is continuously
  differentiable on $\Param^v$ and for any $\param \in \Param$, we
  have
         \[
\dot F(\param) = - \left( \dot{\phi}(\param)\right)^T \, \bars(\param)
+ \dot \R(\param) \eqsp.
\]
\item \label{hyp:regV:C1} For any $s \in \Sset$, $B(s) \eqdef
  \dot{\left(\phi \circ \map \right)}(s)$ is a symmetric $q \times q$
  matrix with positive minimal eigenvalue.
  \end{enumerate}
\end{assumption}
Under A\autoref{hyp:model} to
A\autoref{hyp:regV}-\ref{hyp:model:C1} and the assumption that
$\Param$ and $\phi(\Param)$ are open subsets of resp. $\Rset^d$ and
$\rset^q$, then \autoref{lem:expfam:reg}  shows that
A\autoref{hyp:regV}-\ref{hyp:model:F:C1} holds and the functions
$\loss{i}$ are continuously differentiable on $\Param$ for all $i \in
[n]^\star$.

Under A\autoref{hyp:model}, A\autoref{hyp:Tmap} and the assumptions
that \textit{(i)} $\map$ is continuously differentiable on $\Sset$ and
\textit{(ii)} for any $s \in \Sset$, $\tau \mapsto \L(s,\tau)$ (see
\eqref{eq:definition-bar-L}) is twice continuously differentiable on
$\Theta^v$ (defined in A\autoref{hyp:regV}-\ref{hyp:model:C1}), then
for any $s \in \Sset$, $\partial^2_\tau \L(s,\map(s))$ is
positive-definite and
\[
B(s) = \left( \dot \map(s)\right)^T \ \partial_\tau^2 \L(s,\map(s))
\ \left( \dot \map(s)\right) \eqsp;
\]
see \cite[Lemma 2]{Delyon:etal:1999}.  Therefore, $B(s)$ is a
symmetric matrix and if $\mathrm{rank}(\dot{\map}(s)) = q = q \wedge
d$, its minimal eigenvalue is positive.
\begin{proposition}
  \label{lem:nablaV}
Assume A\autoref{hyp:model}, A\autoref{hyp:bars} and
A\autoref{hyp:Tmap}.  Define the measurable functions $\lyap: \Sset
\to \rset$ and $h:\Sset \to \rset^q$ by
\[
\lyap(s) \eqdef F \circ \map(s) \eqsp, \qquad h(s) \eqdef \bars \circ
\map(s) -s \eqsp.
\]
\begin{enumerate}
  \item \label{lem:nablaV:item1} If $s^\star$ is a fixed point of
    $\bars \circ \map$, then $\map(s^\star)$ is a fixed point of $\map
    \circ \bars$. Conversely, if $\param^\star$ is a fixed point of
    $\map \circ \bars$ then $\bars (\param^\star)$ is a fixed point of
    $\bars \circ \map$.
  \item \label{lem:nablaV:item2} Assume also A\autoref{hyp:regV}. For
    all $s \in \Sset$, we have $\dot \lyap(s) = - B(s) \ h(s)$, and
    the zeros of $h$ are the critical points of $\lyap$.
\end{enumerate}
\end{proposition}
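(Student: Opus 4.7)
The plan is to treat the two items of the proposition separately, as they rely on very different arguments. Item~\ref{lem:nablaV:item1} is purely algebraic, while Item~\ref{lem:nablaV:item2} uses the chain rule together with the first-order optimality condition satisfied by $\map$.

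For Item~\ref{lem:nablaV:item1}, the plan is to compose the fixed point equations with the appropriate map. If $s^\star = \bars \circ \map(s^\star)$, then applying $\map$ to both sides gives $\map(s^\star) = \map \circ \bars \circ \map(s^\star) = (\map \circ \bars)(\map(s^\star))$, so $\map(s^\star)$ is a fixed point of $\map \circ \bars$. Conversely, if $\param^\star = \map \circ \bars(\param^\star)$, applying $\bars$ to both sides yields $\bars(\param^\star) = (\bars \circ \map)(\bars(\param^\star))$. No obstacles here: this is immediate from the definitions.

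For Item~\ref{lem:nablaV:item2}, the strategy is to differentiate $\lyap = F \circ \map$ by the chain rule and substitute the expression for $\dot F$ supplied by A\autoref{hyp:regV}-\ref{hyp:model:F:C1}:
\[
\dot \lyap(s) = \left(\dot \map(s)\right)^T \dot F(\map(s)) = \left(\dot \map(s)\right)^T \Bigl[ - \left(\dot \phi(\map(s))\right)^T \bars(\map(s)) + \dot \R(\map(s)) \Bigr].
\]
The key step is then to exploit the fact that $\map(s)$ is the global minimizer of $\param \mapsto \L(s,\param) = -\pscal{s}{\phi(\param)} + \R(\param)$ on $\Param$; this gives the first-order condition $\dot \R(\map(s)) = \left(\dot \phi(\map(s))\right)^T s$. (One must justify that this first-order condition is valid; this uses A\autoref{hyp:regV}-\ref{hyp:model:C1} and the convexity of $\Param$, noting that $\map(s)$ is interior if $\Param$ is open or handled through the extended neighborhood $\Param^v$ otherwise.) Substituting back and factoring yields
\[
\dot \lyap(s) = -\left(\dot \map(s)\right)^T \left(\dot \phi(\map(s))\right)^T \bigl[\bars(\map(s)) - s\bigr] = -\left(\dot \map(s)\right)^T \left(\dot \phi(\map(s))\right)^T h(s).
\]
By the chain rule, $\dot{(\phi \circ \map)}(s) = \dot \phi(\map(s)) \, \dot \map(s)$, and the symmetry of $B(s)$ asserted in A\autoref{hyp:regV}-\ref{hyp:regV:C1} gives $B(s) = \left(\dot \map(s)\right)^T \left(\dot \phi(\map(s))\right)^T$, so $\dot \lyap(s) = -B(s) h(s)$.

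The characterization of critical points is then immediate: if $h(s)=0$, clearly $\dot \lyap(s)=0$; conversely, if $\dot \lyap(s)=0$, then $B(s) h(s) = 0$, and since the minimal eigenvalue of $B(s)$ is positive by A\autoref{hyp:regV}-\ref{hyp:regV:C1}, $B(s)$ is invertible and $h(s)=0$. I expect the only mildly delicate point in the whole argument to be the careful justification of the stationarity condition $\partial_\tau \L(s,\map(s))=0$, which requires keeping track of the regularity and domain assumptions; the rest is bookkeeping with the chain rule.
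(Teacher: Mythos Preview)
Your proposal is correct and follows essentially the same approach as the paper's proof: apply $\map$ (resp.\ $\bars$) to the fixed-point equation for Item~\ref{lem:nablaV:item1}, and for Item~\ref{lem:nablaV:item2} combine the chain rule, the expression for $\dot F$ from A\ref{hyp:regV}-\ref{hyp:model:F:C1}, the first-order optimality condition $\partial_\tau \L(s,\map(s))=0$, and the symmetry and positive-definiteness of $B(s)$. Your identification of the stationarity condition as the only mildly delicate point matches the paper's treatment exactly.
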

The proof is in \autoref{sec:proof:nablaV}.
As a conclusion, the EM algorithm summarized in \autoref{algo:EM}, is
designed to converge to the zeros of
\begin{equation}\label{eq:meanfield}
s \mapsto h(s) \eqdef \bars \circ \map (s) -s \eqsp,
\end{equation}
which, for some models, are the critical points of $F \circ
\map$.

\begin{algorithm}[htbp]
  \KwData{$\kmax \in \nset$, $\bars^0 \in \Sset$}
  \KwResult{The EM sequence: $\bars^k, k \in [\kmax]$}
    \For{$k=0, \ldots, \kmax-1$}{$\bars^{k+1}= \bars \circ \map(\bars^k)$  \label{line:EM:updateS}}
    \caption{EM in the expectation space} \label{algo:EM}
\end{algorithm}

\subsection{Stochastic Approximation within EM}
\label{sec:SAwithinEM}
In the finite-sum framework, the number of expectation evaluations
$\bars_i$ per iteration of EM is the number $n$ of examples (see
Line~\ref{line:EM:updateS} of \autoref{algo:EM} and
\eqref{eq:def:bars}). It is therefore very costly in the large scale
learning framework. We review in this section few alternatives of EM
which all substitute the EM update $\bars^{k+1} = \bars \circ
\map(\bars^k)$ (see Line~\ref{line:EM:updateS} in \autoref{algo:EM})
with an update $\hatS^k \to \hatS^{k+1}$ of the form
\begin{equation}
  \label{eq:SAEM}
\hatS^{k+1} =\hatS^k + \pas_{k+1} \s^{k+1} \eqsp,
\end{equation}
where $\{\pas_k ,k \geq 1\}$ is a deterministic positive sequence of
{\em step sizes} (also called {\em learning rates}) chosen by the user
and $\s^{k+1}$ is an approximation of $h(\hatS^k) = \bars \circ
\map(\hatS^k) - \hatS^k$. When it is a random approximation, the
iterative algorithm described by \eqref{eq:SAEM} is a SA algorithm
designed to target the zeros of the mean field $s \mapsto h(s)$ (see
\eqref{eq:meanfield}); see
e.g. \cite{benveniste:etal:1990,borkar:2008} for a general review on
SA. Many stochastic approximations of EM can be described by
\eqref{eq:SAEM}: let us cite for example the {\tt Stochastic EM}
by~\cite{celeuxd85}, the {\tt Monte Carlo EM} (MCEM, introduced by
\cite{Wei:tanner:1990} and studied by \cite{Fort:moulines:2003}) which
corresponds to $\pas_{k+1}=1$ and the {\tt Stochastic Approximation
  EM} (SAEM) introduced by \cite{Delyon:etal:1999}.

In the finite-sum framework, observe from \eqref{eq:meanfield} that
for any $s \in \rset^q$,
\begin{equation}
  \label{eq:h:to:meanfield}
h(s) = \PE\left[ \bars_I \circ \map(s) + W \right] - s \eqsp,
\end{equation}
where $I$ is a uniform random variable on $[n]^\star$ and $W$ is a
zero-mean random vector. Such an expression gives insights for the
definition of SA schemes, including the combination with a variance
reduction techniques through an adequate choice of $W$ (see e.g.
\cite[Section 4.1.]{Glasserman:2004} for an introduction to control
variates). We review below recent EM-based algorithms, designed for
the finite-sum setting.

\subsubsection{The Fast Incremental EM algorithm}
\label{sec:Fi-EM}
{\tt Fast Incremental EM (FIEM)} was introduced by
\cite{karimi:etal:2019}; it is given in \autoref{algo:FIEM}.
\begin{algorithm}[htbp]
  \KwData{$\kmax \in \nset$, $\hatS^0 \in \Sset$, $\pas_{k} \in
    \ooint{0,\infty}$ for $k \in [\kmax]^\star$} \KwResult{The FIEM
    sequence: $\hatS^k, k \in [\kmax]$} $\Smem_{0,i} = \bars_i \circ
  \map(\hatS^0)$ for all $i \in [n]^\star$\; $\Sronde^0 = n^{-1}
  \sum_{i=1}^n \Smem_{0,i}$\; \For{$k=0, \ldots, \kmax-1$}{Sample
    $I_{k+1}$ uniformly from $[n]^\star$ \label{line:FIEM:debutaux}\; $
    \Smem_{k+1,i} = \Smem_{k,i}$ for $i \neq I_{k+1}$ \;
    $\Smem_{k+1,I_{k+1}} = \bars_{I_{k+1}} \circ \map(\hatS^k)$\;
    $\Sronde^{k+1} = \Sronde^k + n^{-1} \left(\Smem_{k+1,I_{k+1}} -
    \Smem_{k,I_{k+1}}\right)$ \label{line:FIEM:finaux} \; Sample
    $J_{k+1}$ uniformly from $[n]^\star$ \label{line:FIEM:single} \;
    $\hatS^{k+1} = \hatS^k + \pas_{k+1} (\bars_{J_{k+1}} \circ
    \map(\hatS^k) - \hatS^k + \Sronde^{k+1} -
    \Smem_{k+1,J_{k+1}})$ \label{line:FIEM:update}}
    \caption{Fast Incremental EM \label{algo:FIEM}}
\end{algorithm}
Lines~\ref{line:FIEM:debutaux} to \ref{line:FIEM:finaux} are a
recursive computation of $n^{-1} \sum_{i=1}^n \Smem_{k+1,i}$, stored
in $\Sronde^{k+1}$, where for $k \geq 0$,
\begin{equation}
\label{eq:def:Smem}
\Smem_{k+1,i} \eqdef \left\{ \begin{array}{ll}
\bars_{I_{k+1}} \circ \map(\hatS^k) & \ \text{if $i = I_{k+1}$ \eqsp,} \\
\Smem_{k,i} & \ \text{otherwise \eqsp.}
\end{array} \right.
\end{equation}
This procedure avoids the computation of a sum with $n$ terms at each
iteration of FIEM, but at the price of a memory footprint since the
$\rset^q$-valued vectors $\Smem_{k,i}$ for $i \in [n \wedge
  \kmax]^\star$ have to be stored.  Line~\ref{line:FIEM:update} is of
the form \eqref{eq:SAEM} with $\s^{k+1}$ equal to the sum of two
terms: $\bars_{J_{k+1}} \circ \map(\hatS^k) - \hatS^k$ is an oracle
for $\PE\left[\bars_I \circ \map(s) -s \right]$ evaluated at $s =
\hatS^k$; and $W \eqdef \Sronde^{k+1} - \Smem_{k+1,J_{k+1}}$ acts as a
control variate, which conditionally to the past $\F_{k+1/2} \eqdef
\sigma(\hatS^0, I_1, J_1, \ldots, I_k, J_k, I_{k+1}\}$, is centered.
A natural extension, which is not addressed in this paper, is to
replace the draws $I_{k+1}, J_{k+1}$ by mini-batches of examples
sampled in $[n]^\star$ - uniformly, with or without replacement.

The introduction of such a variable $W$ is inherited from the {\tt
  Stochastic Averaged Gradient} (SAGA, by \cite{Defazio:bach:2014}).
The convergence analysis of {\tt FIEM} was given in
\cite{karimi:etal:2019}: they derive nonasymptotic convergence results
in expectation. The theoretical contribution of our paper, detailed in
\autoref{sec:FIEM:complexity}, is to complement and improve these
results.

On the computational side, each iteration of {\tt FIEM} requires two
draws from $[n]^\star$, two expectation evaluations of the form
$\bars_i(\param)$ and a maximization step; there is a space complexity
through the storage of the auxiliary quantity $\Smem_{k, \cdot}$ - its
size being proportional to $q(2 \kmax \wedge n)$ (in some specific
situations, the size can be reduced - see the comment in \cite[Section
  4.1]{Schmidt:2017}). The initialization step also requires a
maximization step and $n$ expectation evaluations.

\subsubsection{An optimized FIEM algorithm, opt-FIEM}
\label{sec:beyondFIEM}
From \eqref{eq:h:to:meanfield}, \autoref{line:FIEM:update} of
\autoref{algo:FIEM} and the control variate technique, we explore here
the idea to modify the original {\tt FIEM} as follows (compare to
\autoref{line:FIEM:update} in \autoref{algo:FIEM})
\begin{multline}
  \label{eq:update:FIEMopt}
\hatS^{k+1} = \hatS^{k} + \pas_{k+1} \left( \bars_{J_{k+1}} \circ
\map(\hatS^k) - \hatS^k   \right. \\
\left. + \lambda_{k+1} \left(\Sronde^{k+1} -
\Smem_{k+1,J_{k+1}}\right)\right)
\end{multline}
where $\lambda_{k+1} \in \rset$ is chosen in order to minimize the
conditional fluctuation
\[
\pas_{k+1}^{-2} \ \PE\left[ \| \hatS^{k+1} - \hatS^{k}\|^2 \vert
  \F_{k+1/2} \right] \eqsp.
\]
Upon noting that $\PE\left[ \hatS^{k+1} - \hatS^{k} \vert \F_{k+1/2}
  \right] = \pas_{k+1} h(\hatS^k)$, it is easily seen that
equivalently, $\lambda_{k+1}$ is chosen as the minimum of the
conditional variance
\[
 \PE\left[ \| \pas_{k+1}^{-1} \left( \hatS^{k+1} - \hatS^{k} \right) -
   h(\hatS^k) \|^2 \vert \F_{k+1/2} \right] \eqsp.
\]
We will refer to this technique as the optimized FIEM ({\tt opt-FIEM})
below; {\tt FIEM} corresponds to the choice $\lambda_{k+1} =1$ for any
$k \geq 0$ and {\tt Online EM} corresponds to the choice
$\lambda_{k+1} =0$ for any $k \geq 0$ (see \autoref{algo:SA}).

Upon noting that, given two random variables $U,V$ such that
$\PE[\|V\|^2] >0$, the function $\lambda \mapsto \PE\left[ \|U +
  \lambda V\|^2 \right]$ reaches its minimum at a unique point given
by $\lambda_\star \eqdef - \PE\left[ U^T V\right] / \PE\left[\|V\|^2
  \right]$, the optimal choice for $\lambda_{k+1}$ is given by
(remember that conditionally to $\F_{k+1/2}$, $\Sronde^{k+1} -
\Smem_{k+1,J_{k+1}}$ is centered),
\begin{equation}
  \label{eq:optimal:lambda}
  \lambda_{k+1}^{\star} \eqdef - \frac{\mathrm{Tr} \
    \mathrm{Cov}\left(\bars_J \circ \map(\hatS^k), \Sronde^{k+1} -
    \Smem_{k+1,J} \vert \F_{k+1/2}\right)}{\mathrm{Tr} \
    \mathrm{Var}\left( \Sronde^{k+1} - \Smem_{k+1,J} \vert
    \F_{k+1/2}\right)}
  \end{equation}
where $J$ is a uniform random variable on $[n]^\star$,
independent of $\F_{k+1/2}$, $\mathrm{Tr}$ denotes the trace of a
matrix, and $\mathrm{Cov}$, $\mathrm{Var}$ are resp. the covariance
and variance matrices. With this optimal value, we have from \eqref{eq:update:FIEMopt}
\begin{align}
& \pas_{k+1}^{-2} \ \PE\left[ \| \hatS^{k+1} - \hatS^{k}\|^2 \vert
  \F_{k+1/2} \right] \nonumber \\ & =\mathrm{Tr} \, \mathrm{Var}\left(\bars_J
\circ \map(\hatS^k) - \hatS^k \vert \F_{k+1/2} \right) \, \cdots \nonumber  \\
& \times \left( 1 -
\mathrm{Corr^2} \left(\bars_J \circ \map(\hatS^k), \Sronde^{k+1} -
  \Smem_{k+1,J} \vert \F_{k+1/2}\right)\right) \eqsp, \label{eq:optFIEM:redvar}
  \end{align}
where
\[
\mathrm{Corr}(U,V) \eqdef \mathrm{Tr} \mathrm{Cov}(U,V) / \{
\mathrm{Tr} \mathrm{Var}(U) \ \mathrm{Tr} \mathrm{Var}(V)
\}^{1/2} \eqsp.
\]

If the {\tt opt-FIEM} algorithm $\{(\hatS^k, \Smem_{k,\cdot}), k \geq
0 \}$ were converging to $(s^\star, \Smem_{\star,\cdot})$, we would
have $n^{-1} \sum_{i=1}^n \Smem_{\star,i} = s^\star = \bars \circ
\map(s^\star)$ and $\Smem_{\star,i} = \bars_i \circ \map(s^\star)$
thus giving intuition that asymptotically when $k \to \infty$,
$\lambda_{k}^\star \approx 1$ (which implies that the correlation is
$1$ in \eqref{eq:optFIEM:redvar}). The value $\lambda=1$ is the value
proposed in the original {\tt FIEM}: therefore, asymptotically {\tt
  opt-FIEM} and {\tt FIEM} should be equivalent and {\tt opt-FIEM}
should have a better behavior in the first iterations of the
algorithm. We will compare numerically {{\tt FIEM}, {\tt opt-FIEM} and
  \tt Online EM} in \autoref{sec:toymodel:exact}.

Upon noting that
\begin{align*}
  \lambda_{k+1}^\star & = - \frac{n^{-1} \sum_{j=1}^n \pscal{\bars_j
      \circ \map(\hatS^k)}{\Sronde^{k+1} - \Smem_{k+1,j}}}{n^{-1}
    \sum_{j=1}^n \|\Sronde^{k+1} - \Smem_{k+1,j} \|^2} \eqsp, \\ &= -
  \frac{n^{-1} \sum_{j=1}^n \pscal{\bars_j \circ
      \map(\hatS^k)}{\Sronde^{k+1} - \Smem_{k+1,j}}}{n^{-1}
    \sum_{j=1}^n \|\Smem_{k+1,j}\|^2 - \|\Sronde^{k+1} \|^2} \eqsp,
\end{align*}
the computational cost of $\lambda_{k+1}^\star$ is proportional to
$n$: it is therefore an intractable quantity in the large scale
learning setting considered in this paper. A numerical approximation
has to be designed: for example, a Monte Carlo approximation of the
numerator; and a recursive approximation (along the iterations $k$) of
the denominator, mimicking the same idea as the recursive computation
of the sum $\Sronde^{k} = n^{-1} \sum_{i=1}^n \Smem_{k,i}$ in {\tt
  FIEM}.

\subsubsection{Online EM}
\label{sec:onlineEM}
      {\tt Online EM} is given by \autoref{algo:SA}; this description
      is a natural extension of the algorithm by
      \cite{cappe:moulines:2009} which was designed to process a
      stream of data.
\begin{algorithm}[htbp]
  \KwData{$\kmax \in \nset$, $\hatS^0 \in \Sset$, $\gamma_{k} \in
    \ooint{0,\infty}$ for $k \in [\kmax]^\star$} \KwResult{The Online
    EM sequence: $\hatS^k, k \in [\kmax]$} \For{$k=0, \ldots,
    \kmax-1$}{Sample $I_{k+1}$ uniformly from
    $[n]^\star$ \label{line:SA:single}\; $\hatS^{k+1}= \hatS^k +
    \pas_{k+1} \left( \bars_{I_{k+1}} \circ \map(\hatS^k) - \hatS^k
    \right)$. \label{line:SA} }
    \caption{Online EM} \label{algo:SA}
\end{algorithm}
{\tt Online EM} is of the form \eqref{eq:SAEM} with $\s^{k+1} \eqdef
\bars_{I_{k+1}} \circ \map(\hatS^k) - \hatS^k $ which corresponds to a
natural oracle for \eqref{eq:h:to:meanfield} when $W=0$. Conditionally
to the past $\hatS^k$, $\s^{k+1}$ is an unbiased approximation of
$h(\hatS^k)$.

Each iteration requires one draw in $[n]^\star$, one expectation
evaluation and one maximization step. Instead of sampling one
observation per iteration, a mini-batch of examples can be used:
line~\ref{line:SA} would get into
\[
\hatS^{k+1} = \hatS^k + \pas_{k+1} \left( \lbatch^{-1} \sum_{i \in
  \batch_{k+1}} \bars_i \circ \map(\hatS^k) - \hatS^k \right)
\]
where $\batch_{k+1}$ is a set of integers of cardinality $\lbatch$,
sampled uniformly from $[n]^\star$, with or without replacement.

 Almost-sure convergence of the iterates in the long-time behavior
 ($\kmax \to \infty$) for {\tt Online EM} was addressed in
 \cite{cappe:moulines:2009}; similar convergence results in the
 mini-batch case for the ML estimation of exponential family mixture
 models were recently established by
 \cite{nguyen:etal:2020}. Nonasymptotic rates for the convergence in
 expectation are derived in \cite{Karimi:miasojedow:2019}.

\subsubsection{The incremental EM algorithm}
\label{sec:i-EM}
The {\tt Incremental EM (iEM)} algorithm is described by
\autoref{algo:iEM}. This description generalizes the original
incremental EM proposed by \cite{Neal:hinton:1998}, which corresponds
to the case $\pas_{k+1} = 1$ and to a deterministic visit to the
successive examples.
\begin{algorithm}[htbp]
  \KwData{$\kmax \in \nset$, $\hatS^0 \in \Sset$, $\gamma_{k} \in
    \ooint{0,\infty}$ for $k \in [\kmax]^\star$} \KwResult{The iEM
    sequence: $\hatS^k, k \in [\kmax]$} $\Smem_{0,i} = \bars_i \circ
  \map(\hatS^0)$ for all $i \in [n]^\star$\; $\Sronde^0 = n^{-1}
  \sum_{i=1}^n \Smem_{0,i}$\; \For{$k=0, \ldots, \kmax-1$}{ Sample
    $I_{k+1}$ uniformly from $[n]^\star$ \label{line:iEM:debutaux}\; $
    \Smem_{k+1,i} = \Smem_{k,i}$ for $i \neq
    I_{k+1}$ \label{line:iEM:debut}\; $\Smem_{k+1,I_{k+1}} =
    \bars_{I_{k+1}} \circ
    \map(\hatS^k)$ \label{line:iEM:updatecompo}\; $\Sronde^{k+1} =
    \Sronde^k + n^{-1} \left(\Smem_{k+1,I_{k+1}} -
    \Smem_{k,I_{k+1}}\right)$ \label{line:iEM:finaux} \; $\hatS^{k+1}
    = \hatS^k + \pas_{k+1} (\Sronde^{k+1} -
    \hatS^k)$ \label{line:iEM:fin}}
    \caption{incremental EM \label{algo:iEM}}
\end{algorithm}
As for FIEM, Lines~\ref{line:iEM:debutaux} to \ref{line:iEM:finaux}
are a recursive computation of $\Sronde^{k+1} = n^{-1} \sum_{i=1}^n
\Smem_{k+1,i}$; and the update mechanism in Line~\ref{line:iEM:fin} is
of the form \eqref{eq:SAEM} with $\s^{k+1} \eqdef \Sronde^{k+1} -
\hatS^k$. Conditionally to the past $\sigma(\hatS^0, I_1, \ldots,
I_{k})$, $\s^{k+1}$ is a {\em biased} approximation of $h(\hatS^k)$.

 \autoref{algo:iEM} can be adapted in order to use a mini-batch of
 examples per iteration: the data set is divided into $B$ blocks prior
 running {\tt iEM}.  \cite{Ng:mclachlan:2003} provided a numerical
 analysis of the role of $B$ when {\tt iEM} is applied to fitting a
 normal mixture model with fixed number of components;
 \cite{gunawardana:2005} provided sufficient conditions for the
 convergence in likelihood in the case the $B$ blocks are visited
 according to a deterministic cycling.

Per iteration, the computational cost of {\tt iEM} is one draw, one
expectation evaluation and one maximization step. As for {\tt FIEM},
there is a memory footprint for the storage of the $\rset^q$-valued
vectors $\Smem_{k,i}$ for $i \in [n \wedge \kmax]^\star$.  The
initialization requires $n$ expectation evaluations and one maximization
step.

\section{Nonasymptotic  bounds for convergence in expectation}
\label{sec:FIEM:complexity}
The bounds are obtained by strengthening A\autoref{hyp:regV} with the
following assumptions
\begin{assumption} \label{hyp:regV:bis}
  \begin{enumerate}[label=(\roman*)]
   \item \label{hyp:regV:C1:vmax} There exist $0 < v_\min \leq
     v_{max}< \infty $ such that for all $s\in \Sset$, the spectrum of
     $B(s)$ is in $\ccint{v_\min, v_\max}$; $B(s)$ is defined in
     A\autoref{hyp:regV}.
        \item \label{hyp:Tmap:smooth} For any $i \in [n]^\star$,
  $\bars_i \circ \map$ is globally Lipschitz on $\Sset$ with constant
  $L_i$.
      \item \label{hyp:regV:DerLip} The function $s \mapsto \dot
        \lyap(s) = - B(s) h(s)$ is globally Lipschitz on $\Sset$ with
        constant $L_{\dot \lyap}$.
        \end{enumerate}
\end{assumption}

\subsection{A general result}
Finding a point $\hat \param^\epsilon$ such that $F(\hat
\param^\epsilon) - \min F \leq \epsilon$ is NP-hard in the non-convex
setting (see \cite{murty:kabadi:1987}). Hence, in non-convex
deterministic optimization of a smooth function $F$, convergence is
often characterized by the quantity $\inf_{1 \leq k \leq \kmax} \|
\nabla F(\param^k)\|$ along a path of length $\kmax$; in non-convex
stochastic optimization, the quantity $\inf_{1 \leq k \leq \kmax}
\PE\left[ \| \nabla F(\param^k)\|^2 \right]$ is sometimes considered
when the expectation is w.r.t. the randomness introduced to replace
intractable quantities with oracles.  Nevertheless, in many frameworks
such as the finite-sum optimization one we are interested in, such a
criterion can not be used to define a termination rule for the
algorithm since $\nabla F$ is intractable.

For EM-based methods in the expectation space, \autoref{lem:nablaV}
and \eqref{eq:meanfield} imply that the convergence can be
characterized by a "distance" of the path $\{\hatS^k, k \geq 0\}$ to
the set of the roots of $h$. We therefore introduce the following
criteria: given a maximal number of iterations $\kmax$, and a random
variable $K$ taking values in $[\kmax-1]$, define
\begin{align*}
 \mathsf{E_0} &\eqdef \frac{1}{v_\max^2} \PE\left[ \| \dot
   \lyap(\hatS^K)\|^2 \right] \eqsp, \\ \mathsf{E_1} &\eqdef \PE\left[
   \| h(\hatS^K) \|^2 \right] \eqsp, \\ \mathsf{E_2} & \eqdef
 \PE\left[ \| \Sronde^{K+1} - \bars \circ \map(\hatS^K) \|^2 \right]
 \eqsp,
\end{align*}
where $K$ is chosen independently of the path. Upper bounds of these
quantities provide a control of convergence in expectation for FIEM
stopped at the random time $K$. Below $K$ is the uniform r.v. on
$[\kmax-1]$, except in \autoref{sec:FIEM:errorrate:case3}.

The quantities $\mathsf{E_0}$ and $\mathsf{E_1}$ are classical in the
literature: they stand for a measure of resp. a distance to a
stationary point of the objective function $\lyap = F \circ \map$, and
a distance to the fixed points of EM. $\mathsf{E_2}$ is specific to
FIEM: it quantifies how far the control variate $\Sronde^{k+1}$ is
from the intractable mean $\bars \circ \map(\hatS^k)$ (see
\autoref{sec:Fi-EM} for the definition of $\Sronde^{k+1}$). Under our
assumptions, $\mathsf{E_0}$ and $\mathsf{E_1}$ are related as stated
in \autoref{lem:fromVdot:to:h}, which is a straightforward consequence
of \autoref{lem:nablaV}.
\begin{proposition}
  \label{lem:fromVdot:to:h}
  Assume A\autoref{hyp:model}, A\autoref{hyp:bars},
  A\autoref{hyp:Tmap}, A\autoref{hyp:regV} and
  A\autoref{hyp:regV:bis}-\ref{hyp:regV:C1:vmax}. For any $s \in
  \Sset$, we have $\pscal{h(s)}{\dot \lyap(s)} \leq - v_\min
  \|h(s)\|^2$ and $\mathsf{E}_0 \leq \mathsf{E}_1$.
\end{proposition}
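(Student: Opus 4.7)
The plan is to apply Proposition~\ref{lem:nablaV}, which under the stated hypotheses gives the explicit identity $\dot \lyap(s) = -B(s) h(s)$, and then exploit the spectral bounds on $B(s)$ provided by A\autoref{hyp:regV:bis}-\ref{hyp:regV:C1:vmax}.

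For the first inequality, I would simply substitute the identity into the inner product to obtain $\pscal{h(s)}{\dot \lyap(s)} = -\pscal{h(s)}{B(s) h(s)} = -h(s)^T B(s) h(s)$. Since $B(s)$ is a symmetric $q \times q$ matrix by A\autoref{hyp:regV}-\ref{hyp:regV:C1} whose spectrum lies in $\ccint{v_\min, v_\max}$, the Rayleigh quotient bound $h(s)^T B(s) h(s) \geq v_\min \|h(s)\|^2$ yields the claim.

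For the second inequality, I would bound the operator norm of $B(s)$. Again by symmetry and the spectral assumption, the spectral radius equals the operator norm, so $\|B(s) u\| \leq v_\max \|u\|$ for every $u \in \Sset$. Applied to $u = h(s)$ and combined with $\dot \lyap(s) = -B(s) h(s)$, this gives the pointwise inequality
\begin{equation*}
\frac{1}{v_\max^2} \|\dot \lyap(s)\|^2 \leq \|h(s)\|^2 \eqsp.
\end{equation*}
Taking this inequality at $s = \hatS^K$ and passing to expectation (which preserves the inequality) produces $\mathsf{E_0} \leq \mathsf{E_1}$ by the very definitions of these two quantities.

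There is no real obstacle here: both inequalities are immediate consequences of Proposition~\ref{lem:nablaV}-\ref{lem:nablaV:item2} together with the linear-algebraic fact that a symmetric matrix with spectrum in $\ccint{v_\min, v_\max}$ satisfies $v_\min \|u\|^2 \leq u^T B u$ and $\|B u\| \leq v_\max \|u\|$ for all $u$. The proof is essentially two lines per inequality.
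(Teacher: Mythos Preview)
Your proposal is correct and follows exactly the approach the paper indicates: the paper states that \autoref{lem:fromVdot:to:h} is a straightforward consequence of \autoref{lem:nablaV}, and your argument---substituting $\dot \lyap(s) = -B(s) h(s)$ and applying the Rayleigh-quotient and operator-norm bounds from the spectral hypothesis on $B(s)$---is precisely that consequence spelled out.
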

Theorem~\ref{theo:FIEM:NonUnifStop} is a general result for the
control of quantities of the form
\[
\sum_{k=0}^{\kmax-1} \left\{ \alpha_k \PE\left[ \|h(\hatS^k) \|^2 \right] +
 \delta_k \PE\left[ \| \Sronde^{k+1} - \bars \circ
  \map(\hatS^k) \|^2 \right] \right\}
\]
where $\alpha_k \in \rset$ and $\delta_k >0$. In
\autoref{sec:FIEM:errorrate:case1} and
\autoref{sec:FIEM:errorrate:case2}, we discuss how to choose the step
sizes $\{\pas_k, k \geq 1\}$ such that for any $k \in [\kmax-1]$,
$\alpha_k $ is non-negative and such that $A_{\kmax} \eqdef
\sum_{k=0}^{\kmax-1} \alpha_k$ is positive. We then deduce from
\autoref{theo:FIEM:NonUnifStop} an upper bound for
\begin{multline}\label{eq:FIEM:error}
\sum_{k=0}^{\kmax-1} \frac{\alpha_k}{A_{\kmax}}\PE\left[\| h(\hatS^k) \|^2 \right]  
+ \sum_{k=0}^{\kmax-1} \frac{\delta_k}{A_{\kmax}} \PE\left[ \|
  \Sronde^{k+1} - \bars \circ \map(\hatS^k) \|^2 \right]
\end{multline}
such that the larger $A_{\kmax}$ is, the better the bound
is. \eqref{eq:FIEM:error} is then used to obtain upper bounds on
$\mathsf{E}_1$ and $\mathsf{E}_2$; which provide in turn an upper
bound on $\mathsf{E}_0$ by \autoref{lem:fromVdot:to:h}.
\begin{theorem} \label{theo:FIEM:NonUnifStop}
Assume A\autoref{hyp:model}, A\autoref{hyp:bars}, A\autoref{hyp:Tmap},
A\autoref{hyp:regV} and A\autoref{hyp:regV:bis}.  Define $L^2 \eqdef
n^{-1} \sum_{i=1}^n L_i^2$.

Let $\kmax$ be a positive integer, $\sequencedown{\gamma}{k}{\nset}$
be a sequence of positive step sizes and $\hatS^0 \in \Sset$. Consider
the FIEM sequence $\sequence{\hatS}{k}{[\kmax]}$ given by
\autoref{algo:FIEM}. Set $\Delta \lyap \eqdef \PE\left[\lyap(\hatS^0)
  \right] - \PE\left[\lyap(\hatS^{\kmax}) \right]$.

We have
\begin{align*}
 \sum_{k=0}^{\kmax -1} \alpha_k \ \PE\left[ \| h(\hatS^k)\|^2 \right]
 + \sum_{k=0}^{\kmax -1} \delta_k \PE\left[ \| \Sronde^{k+1}- \bars
   \circ \map(\hatS^k) \|^2 \right] \leq \Delta \lyap \eqsp,
\end{align*}
with, for any $k \in [\kmax-1]$,
\begin{align*}
\alpha_k & \eqdef \gamma_{k+1} v_{min} - \gamma_{k+1}^2 \left( 1 +
\Lambda_{k} L^2\right) \frac{L_{\dot \lyap}}{2}  \eqsp, \\
\delta_k & \eqdef
\gamma_{k+1}^2 \left( 1 + \frac{ \Lambda_k \beta_{k+1} L^2}{ \left(1+
  \beta_{k+1} \right)} \right) \frac{L_{\dot\lyap}}{2} \eqsp,
\end{align*}
where $\beta_{k+1}$ is any positive number, and for $k \in [\kmax-2]$,
\begin{align*}
\Lambda_k & \eqdef \left(1 + \frac{1}{\beta_{k+1}} \right) \cdots
\qquad \times \sum_{j=k+1}^{\kmax-1} \gamma_{j+1}^2
\ \prod_{\ell=k+2}^j \left(1 - \frac{1}{n}+ \beta_\ell + \gamma_\ell^2
L^2 \right) \eqsp.
\end{align*}
By convention, $\Lambda_{\kmax-1} = 0$.
\end{theorem}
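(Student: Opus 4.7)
The plan is to follow the standard smoothness-plus-variance-reduction template for SAGA-type analyses, with the technical subtlety that the "memory variance" quantity evolves recursively and must be unrolled to produce the coefficients $\Lambda_k$.

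First I would set up the filtrations $\F_k \eqdef \sigma(\hatS^0, I_1, J_1, \ldots, I_k, J_k)$ and $\F_{k+1/2} \eqdef \sigma(\F_k, I_{k+1})$, so that $\hatS^k$, $\Sronde^{k+1}$ and $\Smem_{k+1,\cdot}$ are $\F_{k+1/2}$-measurable, while $J_{k+1}$ is uniform on $[n]^\star$ and independent of $\F_{k+1/2}$. Setting $\zeta^{k+1} \eqdef \gamma_{k+1}^{-1}(\hatS^{k+1}-\hatS^k)$, a direct calculation using the definition of $\Sronde^{k+1} = n^{-1}\sum_j \Smem_{k+1,j}$ and the uniform sampling of $J_{k+1}$ yields the unbiasedness $\PE[\zeta^{k+1}\mid\F_{k+1/2}]=h(\hatS^k)$. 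Applying the $L_{\dot\lyap}$-smoothness of $\lyap$ (from A\autoref{hyp:regV:bis}-\ref{hyp:regV:DerLip}) to expand $\lyap(\hatS^{k+1})$ around $\hatS^k$, taking conditional expectation given $\F_k$, and invoking Proposition~\ref{lem:fromVdot:to:h} to get $\pscal{\dot\lyap(\hatS^k)}{h(\hatS^k)}\leq -v_{\min}\|h(\hatS^k)\|^2$ would produce the descent estimate
\[
\PE[\lyap(\hatS^{k+1}) - \lyap(\hatS^k)\mid\F_k] \leq -\gamma_{k+1}v_{\min}\|h(\hatS^k)\|^2 + \tfrac{L_{\dot\lyap}\gamma_{k+1}^2}{2}\PE[\|\zeta^{k+1}\|^2\mid\F_k].
\]

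The next step is to bound the conditional second moment of $\zeta^{k+1}$. Using the conditional unbiasedness and the uniformity of $J_{k+1}$, one has $\PE[\|\zeta^{k+1}\|^2\mid\F_{k+1/2}] = \|h(\hatS^k)\|^2 + \mathrm{Var}(\bars_{J_{k+1}}\!\circ\map(\hatS^k) - \Smem_{k+1,J_{k+1}}\mid\F_{k+1/2})$, and the variance is controlled by the second moment $n^{-1}\sum_j \|\bars_j\!\circ\map(\hatS^k)-\Smem_{k+1,j}\|^2$. The heart of the argument is then a one-step recursion on the quantity $\mathcal{V}_k \eqdef \|\Sronde^{k+1}-\bars\!\circ\map(\hatS^k)\|^2$ (and, via Young's inequality with parameter $\beta_{k+1}$, on the averaged memory discrepancy). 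Writing
\[
\Sronde^{k+2}-\bars\!\circ\map(\hatS^{k+1}) = \bigl(\Sronde^{k+1}-\bars\!\circ\map(\hatS^k)\bigr) + \bigl(\bars\!\circ\map(\hatS^k) - \bars\!\circ\map(\hatS^{k+1})\bigr) + n^{-1}\bigl(\bars_{I_{k+2}}\!\circ\map(\hatS^{k+1})-\Smem_{k+1,I_{k+2}}\bigr),
\]
taking conditional expectation over $I_{k+2}$, using $L$-Lipschitz continuity of $\bars\!\circ\map$ (which follows from A\autoref{hyp:regV:bis}-\ref{hyp:Tmap:smooth} and Cauchy--Schwarz with $L^2=n^{-1}\sum L_i^2$), and splitting via Young's inequality with parameter $\beta_{k+2}$, would give a recursion of the form $\PE[\mathcal{V}_{k+1}]\leq (1-\tfrac1n+\beta_{k+2}+\gamma_{k+2}^2 L^2)\PE[\mathcal{V}_k] + \gamma_{k+2}^2(1+\tfrac{1}{\beta_{k+2}})\,\PE[\|h(\hatS^{k+1})\|^2] + \cdots$.

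The final step is to unroll this recursion backwards in time: iterating over $\ell = k+2,\ldots, j$ produces the geometric factor $\prod_{\ell=k+2}^{j}(1-\tfrac1n+\beta_\ell+\gamma_\ell^2 L^2)$ forced by $\gamma_{j+1}^2$, and the leading factor $(1+1/\beta_{k+1})$ comes from the initial Young split relating the memory second moment to $\mathcal{V}_k$. This assembles exactly into $\Lambda_k$. Combining with the descent inequality, taking total expectation, summing over $k\in[\kmax-1]$, and telescoping $\sum_k \PE[\lyap(\hatS^k)-\lyap(\hatS^{k+1})] = \Delta\lyap$ delivers the stated bound with $\alpha_k$ arising as the coefficient of $\PE[\|h(\hatS^k)\|^2]$ (after subtracting the Young contributions transferred from later iterations) and $\delta_k$ as the coefficient of $\PE[\mathcal{V}_k]$. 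The main obstacle is the bookkeeping: carefully tracking how the Young splits at each iteration redistribute mass between $\|h(\hatS^k)\|^2$ and $\mathcal{V}_k$, and verifying that the unrolled weights assemble precisely into $\Lambda_k$ rather than a looser expression, is the delicate combinatorial piece of the proof.
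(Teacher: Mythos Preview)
Your overall architecture—smoothness descent, unbiasedness $\PE[\zeta^{k+1}\mid\F_{k+1/2}]=h(\hatS^k)$, a variance recursion, unrolling, and telescoping—matches the paper's. But there is a genuine gap in the variance recursion, and it is the crux of the proof.

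You propose to run the recursion on $\mathcal{V}_k \eqdef \|\Sronde^{k+1}-\bars\circ\map(\hatS^k)\|^2$. This cannot work: the quantity that must be \emph{upper}-bounded in the descent step is the conditional variance of $\zeta^{k+1}$, which (by the paper's Lemma~\ref{lem:control:field}) equals exactly
\[
\mathcal{W}_k - \mathcal{V}_k, \qquad \mathcal{W}_k \eqdef \frac{1}{n}\sum_{j=1}^n \|\bars_j\circ\map(\hatS^k)-\Smem_{k+1,j}\|^2 .
\]
By Jensen, $\mathcal{V}_k\le \mathcal{W}_k$, so controlling $\mathcal{V}_k$ says nothing about $\mathcal{W}_k$. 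Moreover, the decomposition you display for $\Sronde^{k+2}-\bars\circ\map(\hatS^{k+1})$, once squared and averaged over $I_{k+2}$, reintroduces a term of the form $n^{-2}\,\mathcal{W}_{k+1}$, so the recursion on $\mathcal{V}_k$ does not close. The paper instead tracks the \emph{per-index} staleness in the state space,
\[
\Delta_k \eqdef \frac{n-1}{n^2}\sum_{i=1}^n L_i^2\,\PE\bigl[\|\hatS^k-\hatS^{<k,i}\|^2\bigr],
\]
where $\hatS^{<k,i}$ is the iterate at which slot $i$ was last refreshed; this majorizes $\PE[\mathcal{W}_k]$ via A\ref{hyp:regV:bis}\ref{hyp:Tmap:smooth}. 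The key identity $\hatS^{<k,i}=\hatS^{k-1}\un_{I_k=i}+\hatS^{<k-1,i}\un_{I_k\neq i}$, combined with a refined Young inequality that exploits $\PE[\hatS^k-\hatS^{k-1}\mid\F_{k-1/2}]=\gamma_k h(\hatS^{k-1})$, yields the closed recursion
\[
\Delta_k \le \frac{n-1}{n}\Bigl\{(1+\beta_k+\gamma_k^2L^2)\Delta_{k-1}+\gamma_k^2L^2\bigl(1+\tfrac{1}{\beta_k}\bigr)\PE[\|h(\hatS^{k-1})\|^2]-\gamma_k^2L^2\,\PE[\mathcal{V}_{k-1}]\Bigr\},
\]
and unrolling this is what produces $\Lambda_k$. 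The factor $(1+1/\beta_{k+1})$ you attribute to a ``memory-second-moment to $\mathcal{V}_k$'' split actually arises here, in the state-space recursion.

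A smaller but related point: you bound the conditional variance by $\mathcal{W}_k$, discarding the $-\mathcal{V}_k$ term. In the paper this is kept as an \emph{equality} (Lemma~\ref{lem:control:field}); the $-\mathcal{V}_k$ is precisely what produces the ``$1$'' in $\delta_k=\gamma_{k+1}^2\bigl(1+\tfrac{\Lambda_k\beta_{k+1}L^2}{1+\beta_{k+1}}\bigr)\tfrac{L_{\dot\lyap}}{2}$. With your inequality you would obtain only the second summand in $\delta_k$, not the stated constant.
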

\begin{proof}
\label{pageref:sketch}
The detailed proof is in Section~\ref{sec:proofs}; let us give here a
sketch of proof.  Define $H_{k+1}$ such that $\hatS^{k+1} = \hatS^k +
\gamma_{k+1} H_{k+1}$.  $\lyap$ is regular enough so that
\begin{align*}
  \lyap(\hatS^{k+1}) - \lyap(\hatS^k) - \gamma_{k+1}
  \pscal{H_{k+1}}{\dot \lyap(\hatS^k)} \leq \gamma_{k+1}^2
  \frac{L_{\dot \lyap}}{2} \|H_{k+1}\|^2 \eqsp.
  \end{align*}
  Then, the next step is to prove that
\begin{align*}
& \PE\left[ \lyap(\hatS^{k+1}) \right] - \PE\left[\lyap(\hatS^k)
    \right]+ \pas_{k+1} \left(v_\min - \pas_{k+1}
  \frac{L_{\dot \lyap}}{2} \right) \PE\left[\| h(\hatS^k) \|^2 \right]
  \\ & \leq \pas_{k+1}^2 \frac{L_{\dot \lyap}}{2} \PE\left[\|H_{k+1} -
    \PE\left[H_{k+1} \vert \F_{k+1/2} \right]\|^2 \right] \eqsp,
\end{align*}
which, by summing from $k=0$ to $k=\kmax-1$, yields
\begin{multline}
\sum_{k=0}^{\kmax-1} \pas_{k+1} \left(v_\min - \pas_{k+1}
\frac{L_{\dot \lyap}}{2} \right)  \PE\left[\| h(\hatS^k) \|^2 \right] \\
\leq \PE\left[ \lyap(\hatS^{0}) \right] - \PE\left[\lyap(\hatS^{\kmax})
  \right] \nonumber \\  + \frac{L_{\dot \lyap}}{2}
\sum_{k=0}^{\kmax-1} \pas_{k+1}^2 \PE\left[\|H_{k+1} -
  \PE\left[H_{k+1} \vert \F_{k+1/2} \right]\|^2 \right]
\eqsp.\label{eq:condition:lambdak}
\end{multline}
The most technical part is to prove that the last term on the RHS is
upper bounded by
\begin{multline*}
  \frac{L_{\dot \lyap}}{2} \sum_{k=0}^{\kmax-1} \pas_{k+1}^2 L^2
 \ \left\{  \Lambda_{k} \PE\left[\| h(\hatS^k) \|^2 \right]
  \right. \\ \left. - \left( 1+ (1+\beta_{k+1}^{-1})^{-1}  \Lambda_{k}  \right) \ \PE\left[\|
    \Sronde^{k+1} - \bars \circ \map(\hatS^k) \|^2 \right] \right\}
  \eqsp.
\end{multline*}
This concludes the proof.
\end{proof}

In the Stochastic Gradient Descent literature, complexity is evaluated
in terms of {\em Incremental First-order Oracle} introduced by
\cite{agarwal:bottou:2015}, that is, roughly speaking, the number of
calls to an oracle which returns a pair $(f_i(x), \nabla f_i(x))$. In
our case, the equivalent cost is the number of expectation evaluations
$\bars_i(\param)$ and the number of optimization steps $s \mapsto
\map(s)$. $\kmax$ iterations of FIEM calls $2 \kmax$ evaluations of
such expectations and $\kmax$ optimization steps. As a consequence,
the complexity analyses consist in discussing how $\kmax$ has to be
chosen as a function of $n$ and $\epsilon$ in order to reach an
$\epsilon$-approximate stationary point defined by $\mathsf{E}_1
\leq\epsilon$.

\subsection{A uniform random stopping rule for a $n^{2/3}$-complexity}
\label{sec:FIEM:errorrate:case1}
The main result of this section establishes that by choosing a
constant step size and a termination rule $K$ sampled uniformly from
$[\kmax -1]$, an $\epsilon$-approximate stationary point can be
reached before
\[
\kmax = O(n^{2/3}
\epsilon^{-1} L_{\dot \lyap}^{1/3} L^{2/3})
\] iterations.

For $\lambda \in \ooint{0,1}$, $C>0$ and $n$ such that $n^{-1/3} <
\lambda/C$, define
\begin{equation}\label{eq:fn:statement}
f_n(C,\lambda) \eqdef \left( \frac{1}{n^{2/3}}+ \frac{C}{\lambda -
  C/n^{1/3}} \left(\frac{1}{n} + \frac{1}{1-\lambda} \right)\right)
\eqsp.
\end{equation}
\begin{proposition}[application of Theorem~\ref{theo:FIEM:NonUnifStop}]\label{coro:optimal:sampling}
  Let $\mu \in \ooint{0,1}$. Choose $\lambda \in \ooint{0,1}$ and $C
  \in \ooint{0, +\infty}$ such that
  \begin{equation} \label{eq:bounds:def:C}
\sqrt{C} f_n(C,\lambda) = 2 \mu v_\min \frac{L}{L_{\dot \lyap}}\eqsp.
\end{equation}
Let $\{\hatS^k, k \in \nset \}$ be the FIEM sequence given by
\autoref{algo:FIEM} run with the constant step size
  \begin{equation} \label{eq:C:uniform}
\pas_\ell = \pas_{\fgm} \eqdef \frac{\sqrt{C}}{n^{2/3} L} = \frac{2 \mu
  v_\min}{f_n(C,\lambda) \, n^{2/3} L_{\dot \lyap}} \eqsp.
\end{equation}
For any $n > (C/\lambda)^{3}$ and $\kmax \geq 1$, we
have \begin{align} \mathsf{E}_1 &+ \frac{\mu}{(1-\mu)f_n(C,\lambda) \,
    n^{2/3}} \mathsf{E_2}  \leq \frac{n^{2/3}}{\kmax}
  \frac{L_{\dot \lyap} \, f_n(C,\lambda)}{2 \mu (1-\mu) v_\min^2}
  \Delta \lyap \eqsp, \label{eq:borne:optimal:sampling}
\end{align}
where the errors $\mathsf{E}_i$ are defined with a random variable $K$
sampled uniformly from $[\kmax -1]$.
\end{proposition}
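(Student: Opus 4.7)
The plan is to specialize Theorem~\ref{theo:FIEM:NonUnifStop} to the constant step size $\pas_{k+1}=\pas_\fgm$ and to a constant auxiliary parameter $\beta_{k+1}=\beta$, and then tune $\beta$ so that $\Lambda_k L^2$ admits a uniform-in-$k$ upper bound matching the function $f_n(C,\lambda)$. The natural choice is $\beta = (1-\lambda)/n$. With this, the common ratio appearing in the product inside $\Lambda_k$ is
\[
1 - \tfrac{1}{n} + \beta + \pas_\fgm^2 L^2 \;=\; 1 - \tfrac{\lambda - C/n^{1/3}}{n} \in (0,1),
\]
where positivity of the numerator uses the hypothesis $n > (C/\lambda)^{3}$. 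A geometric-series bound then yields $\sum_{j=k+1}^{\kmax-1}\prod_{\ell=k+2}^j(\cdots)\leq n/(\lambda - C/n^{1/3})$ uniformly in $k$, and combined with $1 + 1/\beta = 1 + n/(1-\lambda)$ gives
\[
\Lambda_k L^2 \;\leq\; \Bigl(1 + \tfrac{n}{1-\lambda}\Bigr) \cdot \tfrac{C}{n^{4/3}} \cdot \tfrac{n}{\lambda - C/n^{1/3}}.
\]

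The heart of the proof is an algebraic rearrangement of this right-hand side into the two terms $C/(n^{1/3}(\lambda - C/n^{1/3}))$ and $Cn^{2/3}/((1-\lambda)(\lambda - C/n^{1/3}))$; multiplying by $\pas_\fgm = \sqrt{C}/(n^{2/3}L)$ and adding $\pas_\fgm \cdot 1$ then produces exactly
\[
\pas_\fgm\bigl(1 + \Lambda_k L^2\bigr) \;\leq\; \tfrac{\sqrt{C}}{L}\, f_n(C,\lambda).
\]
Combining this with the defining relation \eqref{eq:bounds:def:C}, namely $\sqrt{C}\,f_n(C,\lambda)\,L_{\dot\lyap} = 2\mu v_\min L$, reduces $\pas_\fgm^2(1+\Lambda_k L^2) L_{\dot\lyap}/2$ to at most $\mu v_\min \pas_\fgm$, and hence yields $\alpha_k \geq \pas_\fgm\, v_\min\, (1-\mu)$ for every $k \in [\kmax-1]$.

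For $\delta_k$ I would only use the trivial lower bound $\delta_k \geq \pas_\fgm^2 L_{\dot\lyap}/2$, which follows from $\Lambda_k \beta L^2 /(1+\beta) \geq 0$. Since $K$ is uniform on $[\kmax-1]$ one has $\kmax\,\mathsf{E}_1 = \sum_k \PE[\|h(\hatS^k)\|^2]$ and $\kmax\,\mathsf{E}_2 = \sum_k \PE[\|\Sronde^{k+1}-\bars\circ\map(\hatS^k)\|^2]$, so inserting the two lower bounds into the conclusion of Theorem~\ref{theo:FIEM:NonUnifStop} gives
\[
\pas_\fgm v_\min(1-\mu) \,\kmax\, \mathsf{E}_1 \;+\; \tfrac{\pas_\fgm^2 L_{\dot\lyap}}{2}\, \kmax\, \mathsf{E}_2 \;\leq\; \Delta\lyap.
\]
Dividing through by $\pas_\fgm v_\min(1-\mu)\kmax$ and substituting $\pas_\fgm = 2\mu v_\min/(f_n(C,\lambda)\,L_{\dot\lyap}\, n^{2/3})$ in the two surviving occurrences converts the coefficient of $\mathsf{E}_2$ into $\mu/((1-\mu)f_n(C,\lambda)\,n^{2/3})$ and the right-hand side into the claimed expression.

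The only genuinely delicate step is the identification $\pas_\fgm(1+\Lambda_k L^2) \leq \sqrt{C}\,f_n(C,\lambda)/L$: the function $f_n$ has been reverse-engineered precisely to make this bound tight, but one still has to perform the two-term expansion and check both terms against the two summands of $f_n - 1/n^{2/3}$. Everything else is a specialization of the master inequality of Theorem~\ref{theo:FIEM:NonUnifStop} together with a single geometric-series estimate and the trivial nonnegativity $\Lambda_k \beta L^2/(1+\beta)\geq 0$.
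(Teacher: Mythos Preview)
Your proposal is correct and follows essentially the same approach as the paper's proof: the paper also specializes Theorem~\ref{theo:FIEM:NonUnifStop} with constant $\beta_\ell=(1-\lambda)/n$ and $\pas_\ell=\sqrt{C}/(n^{2/3}L)$, bounds $\Lambda_k$ by a geometric series with common ratio $1-(\lambda-C n^{-1/3})/n$, obtains the uniform lower bound $\alpha_k \geq \pas_\fgm v_\min(1-\mu)$ via the identity $\sqrt{C}f_n(C,\lambda)=2\mu v_\min L/L_{\dot\lyap}$, and uses the trivial bound $\delta_k\geq \pas_\fgm^2 L_{\dot\lyap}/2$. The only cosmetic difference is that the paper first carries free exponents $\pb,\pc,\pd$ in $\beta_\ell=(1-\lambda)/n^{\pb}$ and $\pas_\ell^2=C/(L^2 n^{2\pc}\kmax^{2\pd})$ and then sets $\pb=1$, $\pc=2/3$, $\pd=0$, whereas you go directly to these values.
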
 The proof of \autoref{coro:optimal:sampling} is in \autoref{sec:proof:coro:optimal:sampling}.  The first
suggestion to solve the equation \eqref{eq:bounds:def:C} is to choose
$\lambda =C$ and $C \in \ooint{0,1}$ such that
\[
\sqrt{C} f_n(C,C) = 2 \mu v_\min L /L_{\dot \lyap} \eqsp.
\] This equation possesses an unique solution $C^\star$ in $\ooint{0,1}$  which
is upper bounded by $C^+$ given by
\[
C^+ \eqdef \frac{\sqrt{1+ 16 \mu^2 v_\min^2 L^2 L_{\dot \lyap}^{-2}} -1}{4 \mu v_\min L L_{\dot \lyap}^{-1}} \eqsp.
\]
The consequence is that, given $\varepsilon \in \ooint{0,1}$,  by setting
\begin{align*}
M \eqdef \frac{L_{\dot \lyap}}{2 \mu (1-\mu) v_\min^2} f_n(C^\star,
C^\star) \leq \frac{L_{\dot \lyap}}{2 \mu (1-\mu) v_\min^2} f_2(C^+,
C^+) \eqsp,
\end{align*}
we have
\[
  \kmax = M \ n^{2/3} \varepsilon^{-1} \Longrightarrow \mathsf{E}_1 +
  \frac{L_{\dot \lyap}}{2(1-\mu) L} \frac{\sqrt{C^\star} }{v_\min
    n^{2/3}} \mathsf{E_2} \leq \varepsilon \, \Delta \lyap \eqsp;
\]
see \autoref{secApp:proof:coro:optimal:sampling} in the supplementary
material for a detailed proof of this comment.

Another suggestion is to exploit how \eqref{eq:fn:statement} behaves
when $n \to +\infty$; we prove in the supplementary material
(\autoref{secApp:proof:coro:optimal:sampling}) that there exists
$N_\star$ depending only upon $L, L_{\dot \lyap}, v_\min$ such that
for any $n \geq N_\star$,
\[
\mathsf{E}_1 + \frac{1}{3 n^{2/3}} \left(\frac{L_{\dot \lyap}}{L
  v_\min} \right)^{2/3} \mathsf{E_2} \leq \frac{n^{2/3}}{\kmax}
\frac{8}{3} \frac{L}{v_\min} \left( \frac{L_{\dot \lyap}}{L
  v_\min}\right)^{1/3} \ \Delta \lyap \eqsp,
\]
by choosing $C \leftarrow 0.25 \, \left(v_\min L / L_{\dot
  \lyap}\right)^{2/3}$ in the definition of the step size $\pas_\fgm$.

The conclusions of \autoref{coro:optimal:sampling} confirm and improve
previous results in the literature: \cite[Theorem 2]{karimi:etal:2019}
proved that for FIEM applied with the constant step size
\begin{equation} \label{eq:stepsize:karimi}
\pas_{\km} \eqdef \frac{v_{\min} n^{-2/3}}{\max(6, 1+ 4 v_\min) \ \max(L_{\dot
    \lyap}, L_1, \ldots, L_n) } \eqsp,
\end{equation}
there holds
\begin{equation} \label{eq:control:karimi}
  \mathsf{E}_1 \leq \frac{n^{2/3}}{\kmax} \ \Delta \lyap
  \ \frac{\left( \max(6, 1+ 4 v_\min) \right)^2 \ \max(L_{\dot \lyap},
    L_1, \cdots, L_n)}{v_\min^2}\eqsp.
\end{equation}

We improve this result. Firstly, we show that the RHS in
\eqref{eq:borne:optimal:sampling} controls a larger quantity than
$\mathsf{E}_1$. Secondly, numerical explorations (see
e.g. \autoref{sec:toymodel:exact}) show that $\pas_{\fgm}$ is larger
than $\pas_\km$ thus providing a more aggressive step size which may
have a beneficial effect on the efficiency of the algorithm. Thirdly,
these numerical illustrations also show that
\autoref{coro:optimal:sampling} provides a tighter control of the
convergence in expectation. In both contributions however, the step
size depends upon $n$ as $O(n^{-2/3})$ and the bounds depend on $n$
and $\kmax$ resp. as the increasing function $n \mapsto n^{2/3}$ and
the decreasing function $\kmax \to 1/\kmax$. The dependence upon $n$
of the step size is the same as what was observed for Stochastic
Gradient Descent (see e.g. \cite{allenzhu:hazan:2016}).

\subsection{A uniform random stopping rule for a $\sqrt{n}$-complexity}
\label{sec:FIEM:errorrate:case2}
Here again, we consider an FIEM path run with a constant step size and
stopped at a random time $K$ sampled uniformly from $[\kmax -1]$: we
prove that an $\epsilon$-stationary point can be reached before
\[
\kmax = O( \sqrt{n} \epsilon^{-3/2})
\]
iterations. Define
\begin{equation}
  \label{eq:fn:statement:Ketn}
\tilde f_n(C,\lambda) \eqdef \frac{1}{(n \kmax)^{1/3}} + C
\left(\frac{1}{n} + \frac{1}{1-\lambda} \right) \eqsp.
\end{equation}

\begin{proposition}[application of Theorem~\ref{theo:FIEM:NonUnifStop}]
  \label{coro:optimal:sampling:Ketn}
  Let $\mu \in \ooint{0,1}$. Choose $\lambda \in \ooint{0,1}$ and $C
  >0$ such that
  \begin{equation} \label{eq:bounds:def:C:Ketn}
  \sqrt{C} \tilde f_n(C,\lambda) = 2 \mu v_\min \frac{L}{L_{\dot \lyap}}\eqsp.
  \end{equation}
Let $\{\hatS^k, k \in [\kmax] \}$ be the FIEM sequence given by
\autoref{algo:FIEM} run with the constant step size
  \begin{align} \label{eq:C:uniform:Ketn}
\pas_\ell & = \tilde \pas_{\fgm} \eqdef \frac{\sqrt{C}}{n^{1/3}
  \kmax^{1/3} L} \frac{2 \mu v_\min}{ L_{\dot \lyap} \tilde
  f_n(C,\lambda) \, n^{1/3} \kmax^{1/3} }  \eqsp.
\end{align}
 For any positive integers $n,\kmax$ such that $n^{1/3} \kmax^{-2/3}
 \leq \lambda/C$, we have
  \[
 \mathsf{E_1} + \frac{\mu}{(1-\mu) \tilde f_n(C,\lambda)}
 \frac{1}{(n\kmax)^{1/3}} \mathsf{E_2} \leq
 \frac{n^{1/3}}{\kmax^{2/3}} \frac{L_{\dot \lyap} \, \tilde
   f_n(C,\lambda)}{2 \mu (1-\mu) v_\min^2} \, \Delta \lyap \eqsp,
  \]
  where the errors $\mathsf{E}_i$ are defined with a random variable
  $K$ sampled uniformly from $[\kmax -1]$.
\end{proposition}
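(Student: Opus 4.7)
The plan is to apply \autoref{theo:FIEM:NonUnifStop} with the constant step size $\gamma_\ell = \tilde\pas_\fgm$ and the constant auxiliary parameter $\beta_{k+1} = \beta \eqdef (1-\lambda)/n$. This particular choice of $\beta$ is what makes the uniform bound on $\Lambda_k$ align with the structure of $\tilde f_n(C,\lambda)$.

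First, I would verify that the factor $\rho \eqdef 1 - 1/n + \beta + \tilde\pas_\fgm^2 L^2 = 1 - \lambda/n + C/(n\kmax)^{2/3}$ satisfies $\rho \leq 1$: this reduces to $C/(n\kmax)^{2/3} \leq \lambda/n$, equivalently $n^{1/3}/\kmax^{2/3} \leq \lambda/C$, which is exactly the assumed constraint. Under $\rho \leq 1$, the geometric sum inside $\Lambda_k$ is dominated by $\kmax$, so that
\[
\Lambda_k \leq \left(1 + \tfrac{1}{\beta}\right) \tilde\pas_\fgm^2 \kmax = \frac{1-\lambda+n}{1-\lambda} \cdot \frac{C\,\kmax}{(n\kmax)^{2/3} L^2}.
\]
A short computation then yields
\[
\frac{\Lambda_k L^2}{(n\kmax)^{1/3}} \leq C\left(\frac{1}{n} + \frac{1}{1-\lambda}\right),
\]
so combining with the term coming from $\tilde\pas_\fgm L = \sqrt{C}/(n\kmax)^{1/3}$ gives
\[
\tilde\pas_\fgm L\,(1 + \Lambda_k L^2) \leq \sqrt{C}\,\tilde f_n(C,\lambda) = \frac{2\mu v_\min L}{L_{\dot\lyap}},
\]
where the last equality is \eqref{eq:bounds:def:C:Ketn}. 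This immediately entails $\alpha_k \geq (1-\mu)\tilde\pas_\fgm v_\min$ uniformly in $k \in [\kmax-1]$.

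For the control-variate coefficient I would use the crude lower bound $\delta_k \geq \tilde\pas_\fgm^2 L_{\dot\lyap}/2$, simply discarding the non-negative $\Lambda_k$ contribution. Since $K$ is uniform on $[\kmax-1]$, \autoref{theo:FIEM:NonUnifStop} then yields
\[
(1-\mu)\kmax \tilde\pas_\fgm v_\min \, \mathsf{E}_1 + \frac{\kmax \tilde\pas_\fgm^2 L_{\dot\lyap}}{2}\, \mathsf{E}_2 \leq \Delta \lyap.
\]
Dividing by $(1-\mu)\kmax \tilde\pas_\fgm v_\min$ and substituting $\sqrt{C} = 2\mu v_\min L/(L_{\dot\lyap}\tilde f_n(C,\lambda))$ in both the coefficient of $\mathsf{E}_2$ and in the right-hand side reproduces the announced inequality after routine algebra.

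The only genuine design choice is $\beta = (1-\lambda)/n$: it is the unique scaling for which $(1+1/\beta)\tilde\pas_\fgm^2 L^2 \kmax$, once divided by $(n\kmax)^{1/3}$, decomposes cleanly into $C/n + C/(1-\lambda)$, thereby reproducing $\tilde f_n(C,\lambda)$. The same scaling simultaneously pins down the feasibility condition $n^{1/3}/\kmax^{2/3} \leq \lambda/C$, which is the main technical point of the argument.
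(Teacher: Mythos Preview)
Your proof is correct and follows essentially the same route as the paper: you make the identical choices $\beta_\ell=(1-\lambda)/n$ and $\gamma_\ell=\sqrt{C}/\bigl(L(n\kmax)^{1/3}\bigr)$, bound the geometric sum in $\Lambda_k$ by $\kmax$ under the constraint $n^{1/3}\kmax^{-2/3}\le\lambda/C$, deduce $\alpha_k\ge(1-\mu)\tilde\pas_\fgm v_\min$, use $\delta_k\ge L_{\dot\lyap}\tilde\pas_\fgm^2/2$, and then divide through. The only difference is cosmetic: the paper first keeps generic exponents $\pb,\pc,\pd$ and specializes to $\pb=1$, $\pc=\pd=1/3$, whereas you plug these in from the start.
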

The proof of \autoref{coro:optimal:sampling:Ketn} is in
\autoref{sec:proof:coro:optimal:sampling:Ketn}.  From this upper
bound, it can be shown (see \autoref{secApp:errorrate:case2} in the
supplementary material) that for any $\tau >0$, there exists $M>0$
depending upon $L, L_{\dot \lyap}, v_\min, \mu$ and $\tau$ such that
for any $\varepsilon >0$,
\[
\kmax \geq \left( \sqrt{n} \tau^{3/2} \right) \vee \left( M \sqrt{n}
\varepsilon^{-3/2} \right)  \Longrightarrow
\frac{n^{1/3}}{\kmax^{2/3}} \frac{L_{\dot \lyap} \, \tilde f_n(\lambda
  \tau ,\lambda)}{ 2 \mu (1-\mu) v_\min^2} \leq \varepsilon \eqsp.
\]
To our best knowledge, this is the first result in the literature
which establishes a nonasymptotic control for FIEM at such a rate: the
upper bound depends on $n$ as the increasing function of $n \mapsto
n^{1/3}$ and depends on $\kmax$ as the decreasing function of $\kmax
\mapsto \kmax^{-2/3}$.

As a corollary of \autoref{coro:optimal:sampling} and
\autoref{coro:optimal:sampling:Ketn}, we have two upper bounds of the
errors $\mathsf{E}_1, \mathsf{E}_2$: the first one is $O(n^{2/3}
\kmax^{-1})$ and the second one is $O(n^{1/3} \kmax^{-2/3})$. The
first or second strategy will be chosen depending on the accuracy
level $\varepsilon$: if $\varepsilon = n^{-\mathsf{e}}$ for some
$\mathsf{e}>0$, then we have to choose $\kmax = O(n^{2/3}
\varepsilon^{-1}) = O(n^{2/3+\mathsf{e}})$ in the first strategy and
$\kmax = O(\sqrt{n} \varepsilon^{-3/2}) = O(n^{1/2+3 \mathsf{e}/2})$
in the second one; if $\mathsf{e} \in \ooint{0,1/3}$, the second
approach is preferable.

When $\kmax = A \sqrt{n} \epsilon^{-3/2}$, then the constant step size
is $\tilde \pas_{\fgm} = \sqrt{C\epsilon} (L A^{1/3} \sqrt{n})^{-1}$.
In the case
$\sqrt{n} \epsilon^{-3/2} < \tilde A n^{2/3} \epsilon^{-1}$, we have
$\tilde \pas_{\fgm} > \sqrt{C} / (L A^{1/3} \tilde A n^{2/3})$ thus
showing that the step size is lower bounded by $O(n^{-2/3})$ (see
$\pas_{\fgm}$ in \autoref{coro:optimal:sampling}).  We have
$\tilde \pas_\fgm \propto 1/\sqrt{n}$ when $\kmax \propto \sqrt{n}$:
the result of \autoref{coro:optimal:sampling:Ketn} is obtained with a
slower step size (seen as a function of $n$) than what was required in
\autoref{coro:optimal:sampling}.

We now discuss a choice for the pair $(\lambda,C)$ which exploits how
\eqref{eq:fn:statement:Ketn} behaves when $n \to +\infty$; we prove in
\autoref{secApp:errorrate:case2} in the supplementary material that
for any $\tau >0$, there exists $N_\star$ depending only upon $L,
L_{\dot \lyap}, v_\min, \tau$ such that for any $N_\star \leq n \leq
\tau^3 \kmax^2$,
\begin{multline*}
\mathsf{E_1} + \frac{2^{10/3} (1-\lambda_\star)^{-1/3} \mu^2 }{\tilde
  f_n^2(\lambda_\star \tau,\lambda_\star)}\left( \frac{L
  v_\min}{L_{\dot \lyap}} \right)^{2/3} \frac{1}{(n\kmax)^{1/3}}
\mathsf{E_2} \\ \leq \frac{n^{1/3}}{\kmax^{2/3}} \frac{4}{3} \left(
\frac{ 2 L^2 L_{\dot \lyap}}{v_\min^4 }\right)^{1/3}
(1-\lambda_\star)^{-1/3} \ \Delta \lyap \eqsp,
\end{multline*}
where $\lambda_\star$ is the unique solution of $\left(v_\min L
\right)^2 \tau^3 (1-\lambda_\star)^2 = (2 L_{\dot \lyap})^2
\lambda^3_\star$.

\subsection{A non-uniform random termination rule}
\label{sec:FIEM:errorrate:case3}
Given a distribution $p_0, \ldots, p_{\kmax-1}$ for the r.v. $K$, we
show how to fix the step sizes $\pas_1, \ldots, \pas_{\kmax}$ in order
to deduce from \autoref{theo:FIEM:NonUnifStop} a control of the errors
$\mathsf{E}_1$ and $\mathsf{E}_2$.  For $\lambda \in \ooint{0,1}$,
$C>0$ and $n > (C/\lambda)^3$, define the function $F_{n,C,\lambda}$
  \begin{align*}
  F_{n,C,\lambda}: x & \mapsto \frac{L_{\dot \lyap}}{2L^2 n^{2/3}} x \left( v_\min \frac{2L }{L_{\dot \lyap}}- x f_n(C,\lambda)
  \right) \eqsp,
  \end{align*}
  where $f_n$ is defined by \eqref{eq:fn:statement}. $F_{n,C,\lambda}$
  is positive, increasing and continuous on $\ocint{0, v_\min L/(L_{\dot \lyap}
    f_n(C,\lambda))}$.

\begin{proposition}[application of \autoref{theo:FIEM:NonUnifStop}]
  \label{coro:given:sampling} Let $K$ be a $[\kmax-1]$-valued random variable with positive weights  $p_0, \ldots, p_{\kmax-1}$.
  Choose $\lambda \in \ooint{0,1}$ and $C>0$ such that
   \begin{equation}
     \label{eq:FIEM:NonUnifStep:C}
\sqrt{C} \, f_n(C,\lambda) =  v_\min \frac{L}{ L_{\dot \lyap}}\eqsp.
   \end{equation}
For any $n > (C/\lambda)^3$ and $\kmax \geq 1$, we have
 \begin{multline*}
 \mathsf{E}_1 + \frac{L_{\dot \lyap}^2}{ v_\min^2} n^{2/3} \, \max_k
 p_k \, f_n(C,\lambda) \ \sum_{k=0}^{\kmax-1} \pas_{k+1}^2 \PE\left[
   \| \Sronde^{k+1} - \bars \circ \map(\hatS^k) \|^2 \right] \\ \leq
 n^{2/3} \ \max_k p_k \, \frac{2 L_{\dot \lyap} \, f_n(C,\lambda)
 }{v_\min^2} \ \Delta \lyap \eqsp,
  \end{multline*}
  where the FIEM sequence $\{\hatS^k, k \in [\kmax] \}$ is obtained
  with
  \[
  \gamma_{k+1} = \frac{1}{n^{2/3} L} \ F^{-1}_{n,C,\lambda}
  \left(\frac{p_k}{\max_\ell p_\ell} \frac{v_\min^2}{2 L_{\dot \lyap}
    f_n(C,\lambda)} \frac{1}{n^{2/3}}\right) \eqsp.
  \]
\end{proposition}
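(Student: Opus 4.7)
The plan is to specialize Theorem~\ref{theo:FIEM:NonUnifStop} to the present step-size sequence, using the design of $\gamma_{k+1}$ to absorb the weight $p_k$ into the coefficient $\alpha_k$ of $\PE[\|h(\hatS^k)\|^2]$; this will convert the generic bound $\sum_k \alpha_k \PE[\|h(\hatS^k)\|^2] + \sum_k \delta_k \PE[\|\Sronde^{k+1} - \bars \circ \map(\hatS^k)\|^2] \leq \Delta \lyap$ into a bound on $\mathsf{E}_1 = \sum_k p_k \PE[\|h(\hatS^k)\|^2]$. A preliminary observation is that the quadratic $F_{n,C,\lambda}$ attains its maximum at $x_\star = v_\min L/(L_{\dot \lyap} f_n(C,\lambda))$, which equals $\sqrt{C}$ thanks to the defining relation \eqref{eq:FIEM:NonUnifStep:C}; hence $F_{n,C,\lambda}^{-1}$ maps into $\ocint{0,\sqrt{C}}$ and the step sizes are uniformly bounded by $\sqrt{C}/(n^{2/3}L) = \pas_\fgm$, the constant step size used in Proposition~\ref{coro:optimal:sampling}.

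The core technical step is to establish, for an appropriate choice of the free parameters $\{\beta_{k+1}\}$ appearing in Theorem~\ref{theo:FIEM:NonUnifStop}, the uniform upper bound
\[
1 + \Lambda_k L^2 \leq n^{2/3} f_n(C,\lambda), \qquad k \in [\kmax-1].
\]
I would reuse the prescription for $\{\beta_{k+1}\}$ coming from the proof of Proposition~\ref{coro:optimal:sampling} (tied to the same $\lambda$): since $\gamma_{k+1} \leq \pas_\fgm$ pointwise in $k$, each telescoping product $\prod_\ell (1 - 1/n + \beta_\ell + \gamma_\ell^2 L^2)$ is dominated by its constant-step-size counterpart and the estimates from that proof transfer verbatim. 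The hypothesis $n > (C/\lambda)^3$ is used precisely here, through the positivity of $\lambda - C/n^{1/3}$ inside $f_n(C,\lambda)$; I expect this control of the telescoping product under time-varying step sizes to be the main obstacle of the argument.

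Once the uniform bound on $\Lambda_k L^2$ is in hand, the definition of $\alpha_k$ yields $\alpha_k \geq \gamma_{k+1} v_\min - \gamma_{k+1}^2 L_{\dot \lyap} n^{2/3} f_n(C,\lambda)/2 = F_{n,C,\lambda}(n^{2/3} L \gamma_{k+1})$, and by the very construction of $\gamma_{k+1}$ in the statement the right-hand side equals $(p_k/\max_\ell p_\ell) \cdot v_\min^2/(2 L_{\dot \lyap} f_n(C,\lambda) n^{2/3})$. Separately, $\delta_k \geq \gamma_{k+1}^2 L_{\dot \lyap}/2$ holds trivially since the factor $1 + \Lambda_k \beta_{k+1} L^2/(1+\beta_{k+1})$ is at least one. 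Substituting these two lower bounds into the inequality of Theorem~\ref{theo:FIEM:NonUnifStop}, multiplying through by the positive constant $2 L_{\dot \lyap} f_n(C,\lambda) n^{2/3} \max_\ell p_\ell/v_\min^2$, and recognizing $\sum_k p_k \PE[\|h(\hatS^k)\|^2]$ as $\mathsf{E}_1$ then produces the announced inequality.
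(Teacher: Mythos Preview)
Your proposal is correct and matches the paper's proof essentially step for step: the paper also sets $\beta_\ell = (1-\lambda)/n$, bounds $\Lambda_k$ by replacing every $\gamma_\ell^2$ by its uniform upper bound $C/(L^2 n^{4/3})$ (your monotonicity argument), obtains $\alpha_k \geq F_{n,C,\lambda}(n^{2/3}L\gamma_{k+1})$ (the paper writes this function as $\mathcal{A}(\sqrt{C_{k+1}})$ with $\sqrt{C_{k+1}} = n^{2/3}L\gamma_{k+1}$, but it is the same function), and then chooses $\gamma_{k+1}$ via $F^{-1}_{n,C,\lambda}$ so that $\alpha_k \geq (p_k/\max_\ell p_\ell)\, v_\min^2/(2L_{\dot\lyap} f_n(C,\lambda) n^{2/3})$, together with the trivial lower bound $\delta_k \geq L_{\dot\lyap}\gamma_{k+1}^2/2$. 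Your observation that $x_\star=\sqrt{C}$ and hence all step sizes are bounded by $\pas_\fgm$ is exactly the paper's \eqref{eq:cond:Cmax:F}--\eqref{eq:Cmax:lambda}.
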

The proof of \autoref{coro:given:sampling} is in
\autoref{sec:proof:coro:givensample}.
 As already commented in \autoref{sec:FIEM:errorrate:case1}, if we
 choose $\lambda=C$, then \eqref{eq:FIEM:NonUnifStep:C} gets into
 \[
 \sqrt{C} \ \left( \frac{1}{n^{2/3}}+ \frac{1}{1 - n^{-1/3}}
 \left(\frac{1}{n} + \frac{1}{1-C} \right)\right) = \frac{v_\min
   L}{L_{\dot \lyap}} \eqsp.
 \]
 There exists an unique solution $C^\star$, which is upper bounded by
 a quantity which only depends upon the quantities $L, L_{\dot \lyap},
 v_\min$; hence, so $f_n(C^\star, C^\star)$ is and the control of
 $\mathsf{E}_i$ given in \autoref{coro:given:sampling} depends on $n$
 at most as $n \mapsto n^{2/3}$ and on $\kmax$ as $\kmax \mapsto
 \max_k p_k$.

 If we choose $\lambda=1/2$, the constant $C$ satisfies $C \leq
 \left(v_\min L / (4L_{\dot \lyap}) \right)^{2/3}$ (see
 \autoref{secApp:errorrate:case3} in the supplementary material), and
 the nonasymptotic control given by \autoref{coro:given:sampling} is
 available for $8 n > (v_\min L/ L_{\dot \lyap})^2$. 

Since $\sum_k p_k=1$, we have
$\max_k p_k \geq 1/\kmax$ thus showing that among the distributions
$\{p_j, j \in [\kmax -1]\}$, the quantity $\max_k p_k$ is minimal with
the uniform distribution. In that case, the results of
\autoref{coro:given:sampling} can be compared to the results of
\autoref{coro:optimal:sampling}: both RHS are increasing functions of
$n$ at the rate $n^{2/3}$; both are decreasing functions of $\kmax$ at
the rate $1/\kmax$; the constants $C,\lambda$ solving the equality in
\eqref{eq:bounds:def:C} in the case $\mu=1/2$ are the same as the
constants $C,\lambda$ solving \eqref{eq:FIEM:NonUnifStep:C}: as a
consequence,
\[
\frac{2 L_{\dot \lyap} f_n(C,\lambda) }{v_\min^2} = \frac{L_{\dot \lyap} f_n(C,\lambda)}{2 \mu(1-\mu) v_\min^2} \eqsp, \qquad \mu = 1/2.
\]
Finally, when $k \mapsto p_k$ is constant, the step sizes given by
\autoref{coro:given:sampling} are constant as in
\autoref{coro:optimal:sampling}; and they are equal since
\[
F_{n,C,\lambda}^{-1}\left( \frac{v^2_\min n^{-2/3}}{2L_{\dot \lyap} f_n(C,\lambda)} \right) = \sqrt{C}
= \frac{v_\min L}{L_{\dot \lyap} f_n(C,\lambda)} \eqsp.
\]
 Hence \autoref{coro:given:sampling} and
 \autoref{coro:optimal:sampling} are the same when $p_k = 1/\kmax$ for
 any $k$.

\section{A toy example}\label{sec:toymodel:exact}
In this section, we consider a very simple optimization problem which
could be solved without requiring the incremental EM
machinery~\footnote{The numerical applications are developed in MATLAB
  by the first author of the paper. The code files are publicly
  available from
  https://github.com/gfort-lab/OpSiMorE/tree/master/FIEM \label{footnote:github}}

$\mathcal{N}_p(\mu,\Gamma)$ denotes a $\rset^p$-valued Gaussian
distribution, with expectation $\mu$ and covariance matrix $\Gamma$.

\subsection{Description}
$n$ $\rset^y$-valued observations are modeled as the realization of
$n$ vectors $Y_i \in \rset^y$ whose distribution is described as
follows: conditionally to $(Z_1, \ldots, Z_n)$, the r.v. are
independent with distribution $Y_i \sim \mathcal{N}_y(A Z_i,\Id_y)$
where $A \in \mathbb{R}^{y \times p}$ is a deterministic matrix and
$\Id_y$ denotes the $y \times y$ identity matrix; $(Z_1,\ldots,Z_n)$
are i.i.d. under the distribution $\mathcal{N}_p(X \theta,\Id_p)$,
where $\theta \in \Theta \eqdef \rset^q$ and $X \in \rset^{p \times
  q}$ is a deterministic matrix. Here, $X$ and $A$ are known, and
$\theta$ is unknown; we want to estimate $\theta$, as a solution of a
(possibly) penalized maximum likelihood estimator, with penalty term
$\rho(\theta) \eqdef \upsilon \|\theta\|^2 /2$ for some $\upsilon \geq
0$. If $\upsilon =0$, it is assumed that the rank of $X$ and $AX$ are
resp. $q = q \wedge y$ and $p = p \wedge y$.  In this model, the
r.v. $(Y_1,\ldots,Y_n)$ are i.i.d. with distribution $\mathcal{N}_y(A
X \theta; \Id_y + A A^T)$.  The minimum of the function $\theta
\mapsto F(\theta) \eqdef - n^{-1} \log g(Y_{1:n};\theta) +
\rho(\theta)$, where $ g(Y_{1:n};\cdot)$ denotes the likelihood of the
vector $(Y_1, \ldots, Y_n)$, is unique and is given by
 \begin{align*}
   \theta_\star & \eqdef \left(\upsilon \Id_q + X^T A^T \left(\Id_y +
   A A^T\right)^{-1} A X \right)^{-1} \ X^T A^T \left(\Id_y + A
   A^T\right)^{-1} \barY_n \eqsp, \\ \barY_n & \eqdef \frac{1}{n}
   \sum_{i=1}^n Y_i \eqsp.
 \end{align*}
 Nevertheless, using the above description of the distribution of
 $Y_i$, this optimization problem can be cast into the general
 framework described in Section~\ref{sec:motivation}. The loss
 function (see \eqref{eq:def:loss}) is the normalized negative
 log-likelihood of the distribution of $Y_i$ and is of the form
 \eqref{eq:def:loss} with
 $$\phi(\theta) \eqdef \theta, \quad \R(\theta) \eqdef \frac{1}{2}
 \theta^T (X^T X + \upsilon \Id_q) \theta , \quad s_i(z) \eqdef X^T
 z.$$ Under the stated assumptions on $X$, the function $\param
 \mapsto - \pscal{s}{\phi(\param)} + R(\param)$ is defined on $
 \rset^q$ and for any $s \in \rset^q$, it possesses an unique minimum
 given by
 \[
 \map(s) \eqdef (\upsilon \Id_q + X^T X)^{-1} s \eqsp.
 \]
Define
 \begin{align*}
   \Pi_1 & \eqdef X^T (\Id_p + A^T A)^{-1} A^T \in \rset^{q \times y}
   \eqsp, \\ \Pi_2 & \eqdef X^T (\Id_p + A^T A)^{-1} X (\upsilon \Id_q
   + X^T X)^{-1} \in \rset^{q \times q} \eqsp.
   \end{align*}
   The a posteriori distribution $p_i(\cdot, \theta) \rmd \mu$ of the
   latent variable $Z_i$ given the observation $Y_i$ is a Gaussian
   distribution
 \[
 \mathcal{N}_p\left( (\Id_p + A^T A)^{-1} (A^T Y_i + X \theta), (\Id_p
 + A^T A)^{-1} \right),
 \]
 so that for all $i\in \{1, \ldots,n\}$,
 \begin{align*}
   \bars_i(\theta) & \eqdef X^T (\Id_p + A^T A)^{-1} (A^T Y_i+X\theta) \\
                   & =
                     \Pi_1 Y_i + X^T (\Id_p + A^T A)^{-1} X \theta \in \rset^q \eqsp, \\
   \bars_i \circ \map(s) &= \Pi_1 Y_i + \Pi_2 s \eqsp.
   \end{align*}
   Therefore, A\autoref{hyp:model}, A\autoref{hyp:bars},
   A\autoref{hyp:Tmap} and A\autoref{hyp:regV}-\ref{hyp:model:C1},
   \ref{hyp:model:F:C1} are satisfied. Since $\phi \circ \map(s) =
   \map(s)$ then $B(s) = (\upsilon \Id_q + X^T X)^{-1}$ for any $s \in
   \Sset$, and A\autoref{hyp:regV}-\ref{hyp:regV:C1} and
   A\autoref{hyp:regV:bis}-\ref{hyp:regV:C1:vmax} hold with
 \begin{align*}
   v_\min & \eqdef \frac{1}{\upsilon+\mathrm{max\_eig}(X^T X)} \eqsp,
   \\ v_\max
          & \eqdef \frac{1}{\upsilon+\mathrm{min\_eig}(X^T X)} \eqsp;
 \end{align*}
here, $\mathrm{max\_eig}$ and $\mathrm{min\_eig}$ denote resp. the
maximum and the minimum of the eigenvalues. $\bars_i \circ \map(s) =
\Pi_1 Y_i + \Pi_2 s$ thus showing that
A\autoref{hyp:regV:bis}-\ref{hyp:Tmap:smooth} holds with the same
constant $L_i =L$ for all $i$. Finally, $s \mapsto B^T(s) \left(\bars
\circ \map(s) -s \right)$ is globally Lipschitz with constant
\[
L_{\dot \lyap} \eqdef  \max \left| \mathrm{eig}\left( (\upsilon \Id_q + X^T X)^{-1} (\Pi_2 - \Id_q) \right) \right|;
\]
here $\mathrm{eig}$ denotes the eigenvalues.  This concludes the proof
of A\autoref{hyp:regV:bis}-\ref{hyp:regV:DerLip}.

 \subsection{The algorithms}
Given the current value $\hatS^k$, one iteration of {\tt EM}, {\tt
  Online EM}, {\tt FIEM} and {\tt opt-FIEM} are given by
\autoref{algo:toy:EM} and \autoref{algo:toy:FIEM}.

{\tt Online EM} requires $\kmax$ random draws from $[n]^\star$ per run
of length $\kmax$ iterations; {\tt FIEM} and {\tt opt-FIEM} require $2
\times \kmax$ draws. For a fair comparison of the algorithms along one
run, the same seed is used for all the algorithms when sampling the
examples from $[n]^\star$. Such a protocol allows to compare the
strategies by "freezing" the randomness due to the random choice of
the examples, and to really explain the different behaviors only by
the values of the design parameters (the step size, for example) or by
the updating scheme which is specific to each algorithm.

All the paths, whatever the algorithms, are started at the same value
$\hatS^0$.

\begin{algorithm}[htbp]
  \KwData{ $\hatS^k \in \Sset$, $\Pi_1$, $\Pi_2$ and $\barY_n$}
  \KwResult{$\hatS^{k+1}_{\EM}$}
    $\hatS^{k+1}_{\EM} = \Pi_1 \barY_n + \Pi_2 \hatS^k$
    \caption{Toy example: one iteration of {\tt EM}. \label{algo:toy:EM}}
\end{algorithm}

\begin{algorithm}[htbp]
  \KwData{ $\hatS^k \in \Sset$, $\Smem \in \Sset^n$, $\Sronde \in
    \Sset$; a step size $\pas_{k+1}\in\ocint{0,1}$ and a coefficient
    $\lambda_{k+1}$; the matrices $\Pi_1$, $\Pi_2$; the examples $Y_1,
    \cdots, Y_n$} \KwResult{$\hatS^{k+1}_{\FIEM}$} Sample
  independently $I_{k+1}$ and $J_{k+1}$ uniformly from $[n]^\star$ \; Store
  $s = \Smem_{I_{k+1}}$ \; Update $\Smem_{I_{k+1}} = \Pi_1
  Y_{I_{k+1}}+ \Pi_2 \hatS^k$ \; Update $\Sronde =
  \Sronde+n^{-1}(\Smem_{I_{k+1}} -s)$ \; Update $\hatS^{k+1}_{\FIEM} =
  \hatS^k + \gamma_{k+1} \left( \Pi_1 Y_{J_{k+1}} + \Pi_2 \hatS^k -
  \hatS^k + \lambda_{k+1} \left\{ \Sronde - \Smem_{J_{k+1}}
  \right\}\right)$
    \caption{Toy example: one iteration of {\tt Online EM} ($\lambda_{k+1}
      =0$), {\tt FIEM} ($\lambda_{k+1} =1$) and {\tt
        opt-FIEM}. \label{algo:toy:FIEM}}
\end{algorithm}

 \subsection{Numerical analysis}
 We choose $Y_i \in \rset^{15}$, $Z_i \in \rset^{10}$ and
 $\theta_\true \in \rset^{20}$. The entries of the matrix $A$
 (resp. $X$) are obtained as a stationary Gaussian auto-regressive
 process: the first column is sampled from $\sqrt{1-\rho^2} \,
 \mathcal{N}_{15}(0; \Id)$ (resp. from $\sqrt{1-\tilde \rho^2} \,
 \mathcal{N}_{10}(0; \Id)$) with $\rho =0.8$ (resp. $\tilde \rho =
 0.9$). $\theta_\true$ is sparse with $40 \%$ of the components set to
 zero; and the other ones are sampled uniformly from $\ccint{-5,5}$.

 The regularization parameter $\upsilon$ is set to $0.1$.

 \paragraph{{\tt FIEM}: the step sizes and the nonasymptotic controls.}
The first analysis is to compare the nonasymptotic bounds and the
constant step sizes provided by \autoref{coro:optimal:sampling},
\autoref{coro:optimal:sampling:Ketn} and \cite[Theorem
  2]{karimi:etal:2019} (see also \eqref{eq:stepsize:karimi} and
\eqref{eq:control:karimi}): the bounds are of the form
\[
\frac{n^\pa}{\kmax^\pb} \ \mathcal{B}  \ \Delta \lyap \eqsp;
\]
the numerical results below correspond to $\Delta \lyap =1$ and are
obtained with a data set of size $n=1e6$.  \autoref{fig:constant:C1}
shows the value of the constant $C$ solving \eqref{eq:bounds:def:C}
when $\lambda$ is successively set to $\{0.25, 0.5, 0.75 \}$ and as a
function of $\mu \in \ooint{0.01, 0.9}$. \autoref{fig:constant:C2}
shows the same analysis for the constant $C$ solving
\eqref{eq:bounds:def:C:Ketn}.  \autoref{fig:constant:B1} and
\autoref{fig:constant:B2} display the quantity $\mathcal{B}$ as a
function of $\mu$ and when the pair $(\lambda, C)$ is fixed to
$\lambda \in \{0.25, 0.5, 0.75 \}$ and $C$ solves resp.
\eqref{eq:bounds:def:C} and \eqref{eq:bounds:def:C:Ketn}. The role of
$\lambda$ looks quite negligible; the bound $\mathcal{B}$ seems to be
optimal with $\mu \approx 0.25$. Note that the constants $C$ and
$\mathcal{B}$ given by \autoref{coro:optimal:sampling:Ketn} depend on
$\kmax$: the results displayed here correspond to $\kmax=n$ but we
observed that the plots are the same with $\kmax = 1e2 \, n$ and
$\kmax = 1e3 \, n$ (remember that $n=1e6$).

\autoref{fig:stepsize} displays the step sizes as a function of $\mu
\in \ooint{0.01, 0.9}$, when $\lambda=1/2$ and for different
strategies of $\kmax$: $\kmax \in \{n, 1e2 \, n, 1e3 \, n\}$.
\autoref{fig:control} displays the quantity $n^\pa \kmax^{-\pb}
\mathcal{B}$. {\tt Case 1} (resp. {\tt Case 2}) corresponds to the
definition given in \autoref{coro:optimal:sampling}
(resp. \autoref{coro:optimal:sampling:Ketn}). For {\tt Case 1} and
{\tt Karimi et al}, $(\pa,\pb) =(2/3, 1)$ and for {\tt Case 2}, $(\pa,
\pb) = (1/3, 2/3)$. The first conclusion is that our results improve \cite{karimi:etal:2019}: we provide a larger step size
(improved by a factor up to $55$, with the strategy {\tt Case 1}, $\mu
=0.25$, $\lambda=0.5$) and a tighter bound (reduced by a factor up to
$235$, with the strategy {\tt Case 1}, $\mu =0.25$,
$\lambda=0.5$). The second conclusion is about the comparison of
\autoref{coro:optimal:sampling} and
\autoref{coro:optimal:sampling:Ketn}: as already commented (see
\autoref{sec:FIEM:errorrate:case2}), the first strategy is preferable
when the tolerance level $\epsilon$ is small (w.r.t. $n^{-1/3}$).
\begin{figure}[h]
  \includegraphics[width=\columnwidth]{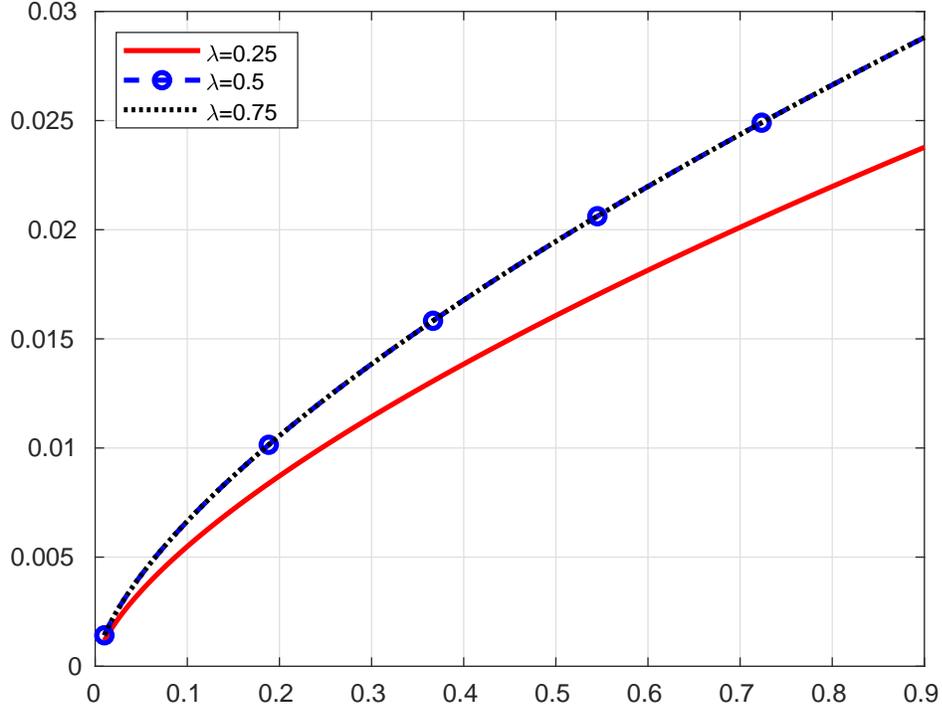}
\caption{For $\lambda \in \{0.25, 0.5, 0.75\}$ and
  $\mu \in \ooint{0.01, 0.9}$, evolution of the constant $C$ solving
  \eqref{eq:bounds:def:C}}
\label{fig:constant:C1}
\end{figure}
\begin{figure}[h]
  \includegraphics[width=\columnwidth]{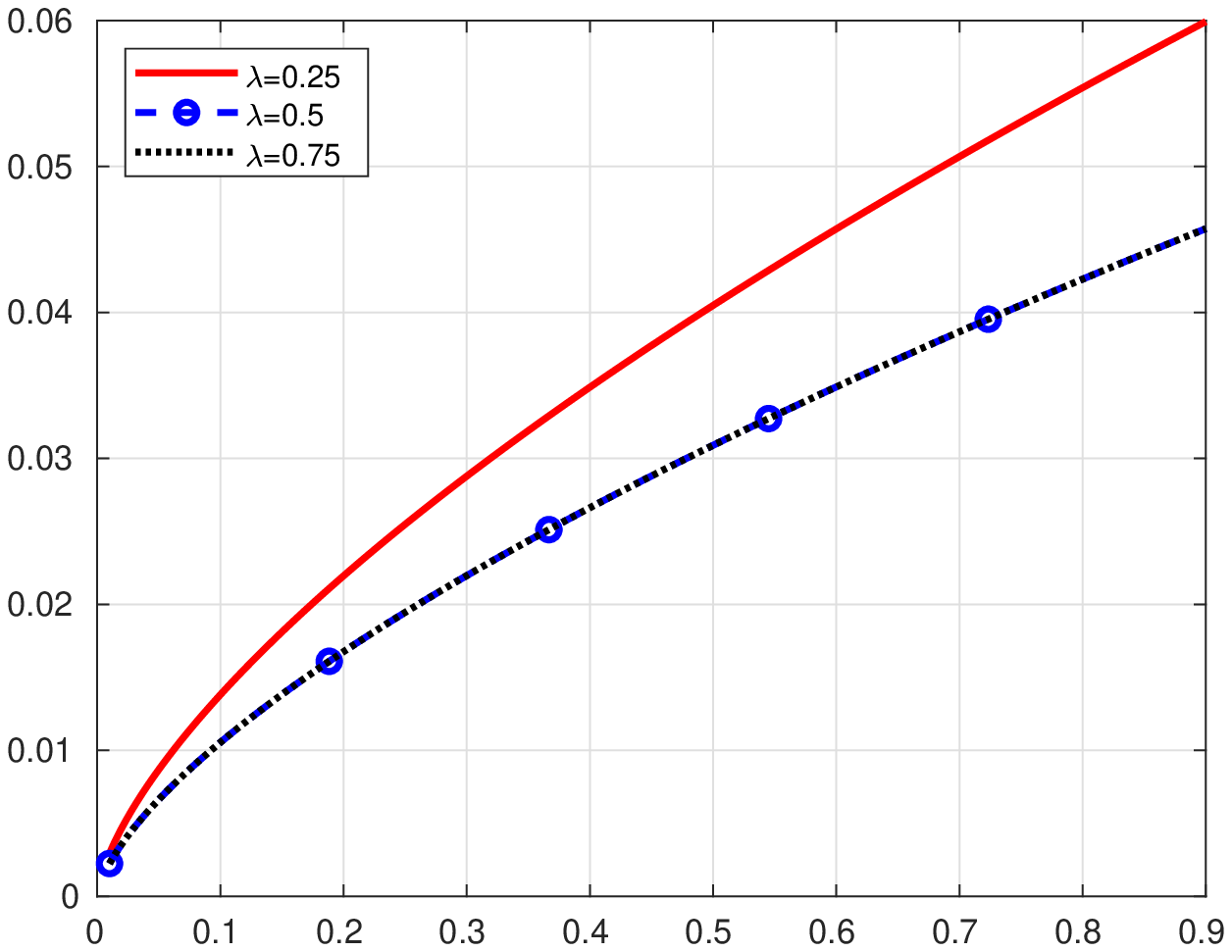}
\caption{For $\lambda \in \{0.25, 0.5, 0.75\}$ and $\mu \in
  \ooint{0.01, 0.9}$, evolution of the constant $C$ solving
  \eqref{eq:bounds:def:C:Ketn}}
\label{fig:constant:C2}
\end{figure}
\begin{figure}[h]
  \includegraphics[width=\columnwidth]{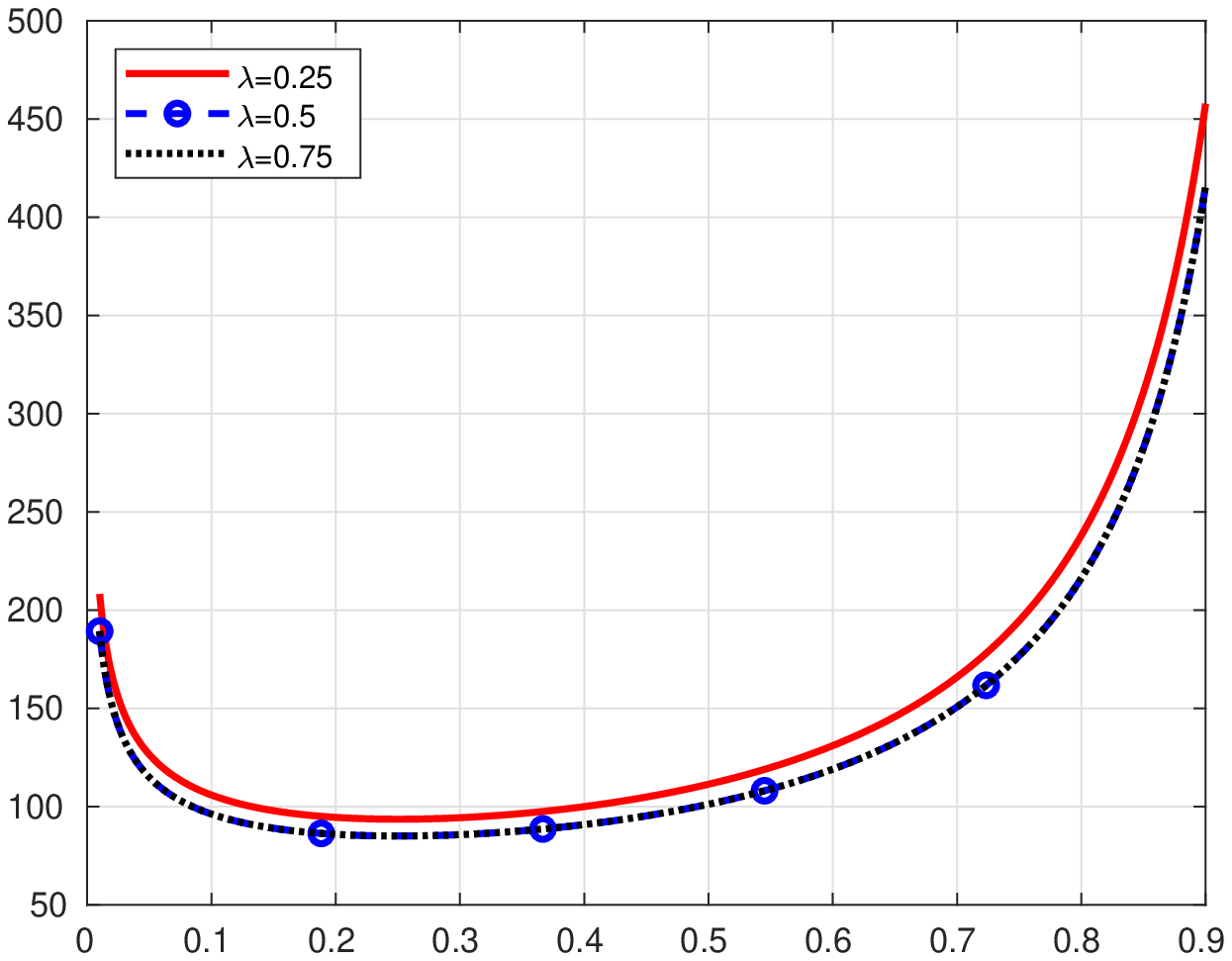}
\caption{For $\lambda \in \{0.25, 0.5, 0.75\}$ and $\mu \in
  \ooint{0.01, 0.9}$, evolution of the quantity $\mathcal{B}$ given by \autoref{coro:optimal:sampling}}
\label{fig:constant:B1}
\end{figure}
\begin{figure}[h]
  \includegraphics[width=\columnwidth]{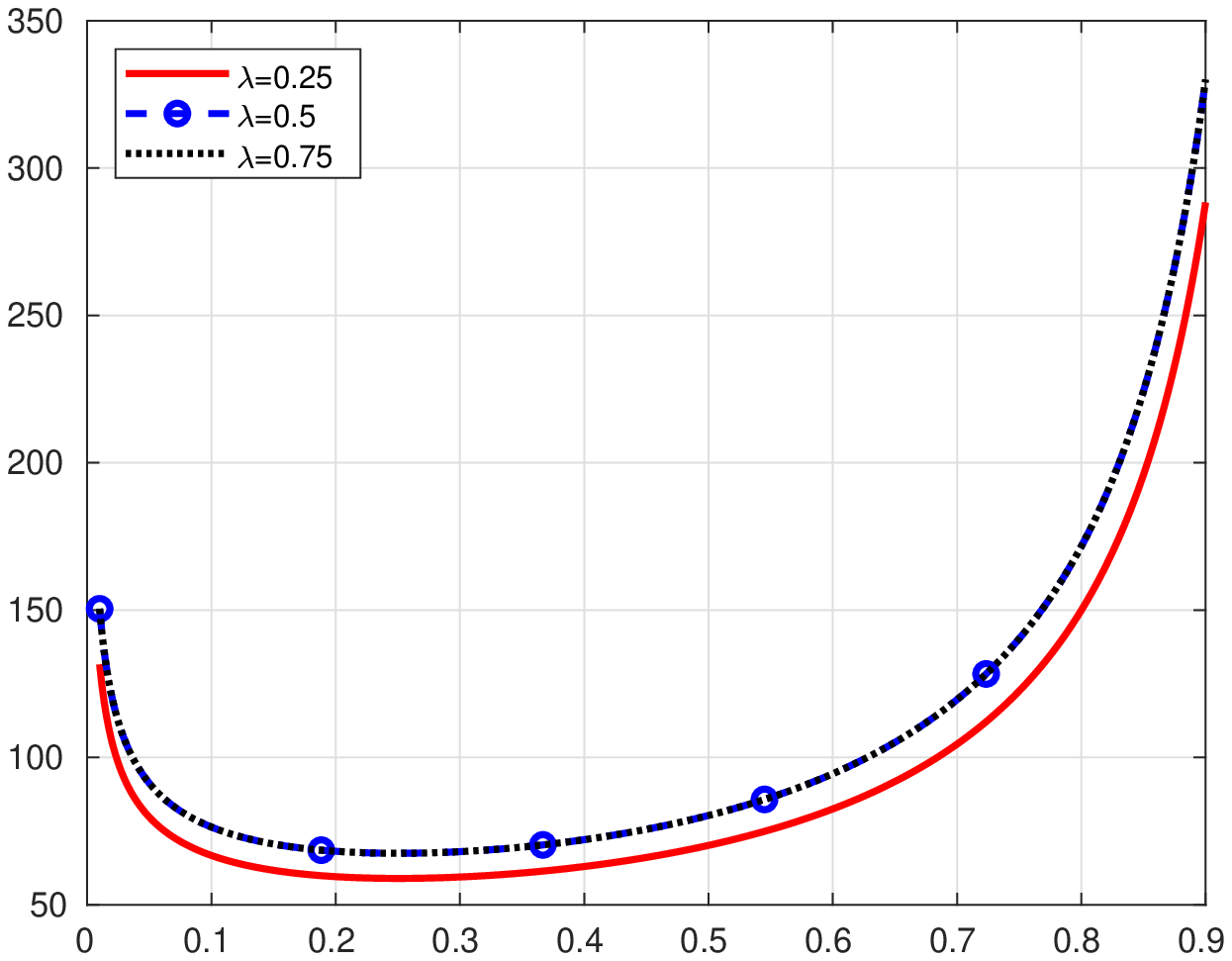}
\caption{For $\lambda \in \{0.25, 0.5, 0.75\}$ and $\mu \in
  \ooint{0.01, 0.9}$, evolution of the quantity $\mathcal{B}$ given by \autoref{coro:optimal:sampling:Ketn}}
\label{fig:constant:B2}
\end{figure}
\begin{figure}[h]
  \includegraphics[width=\columnwidth]{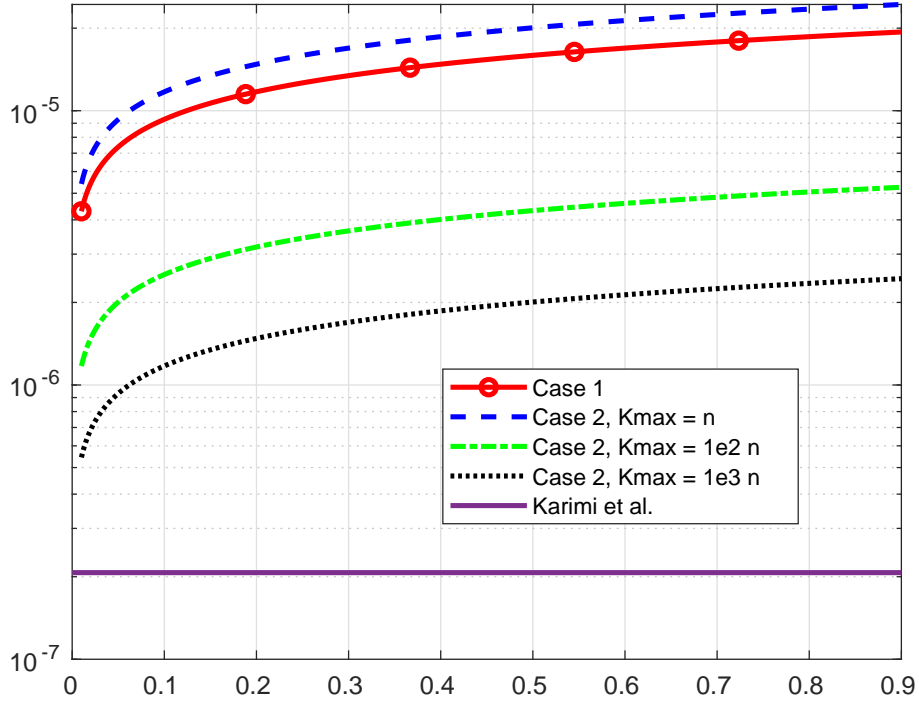}
  \caption{Value of the constant step size given by Karimi et al.,
    \autoref{coro:optimal:sampling} ({\tt Case 1}) and
    \autoref{coro:optimal:sampling:Ketn} ({\tt Case 2}). The step size
    is shown as a function of $\mu \in \ooint{0.01, 0.9}$. In
      {\tt Case 2}, different strategies for $\kmax$ are considered.}
\label{fig:stepsize}
\end{figure}
\begin{figure}[h]
  \includegraphics[width=\columnwidth]{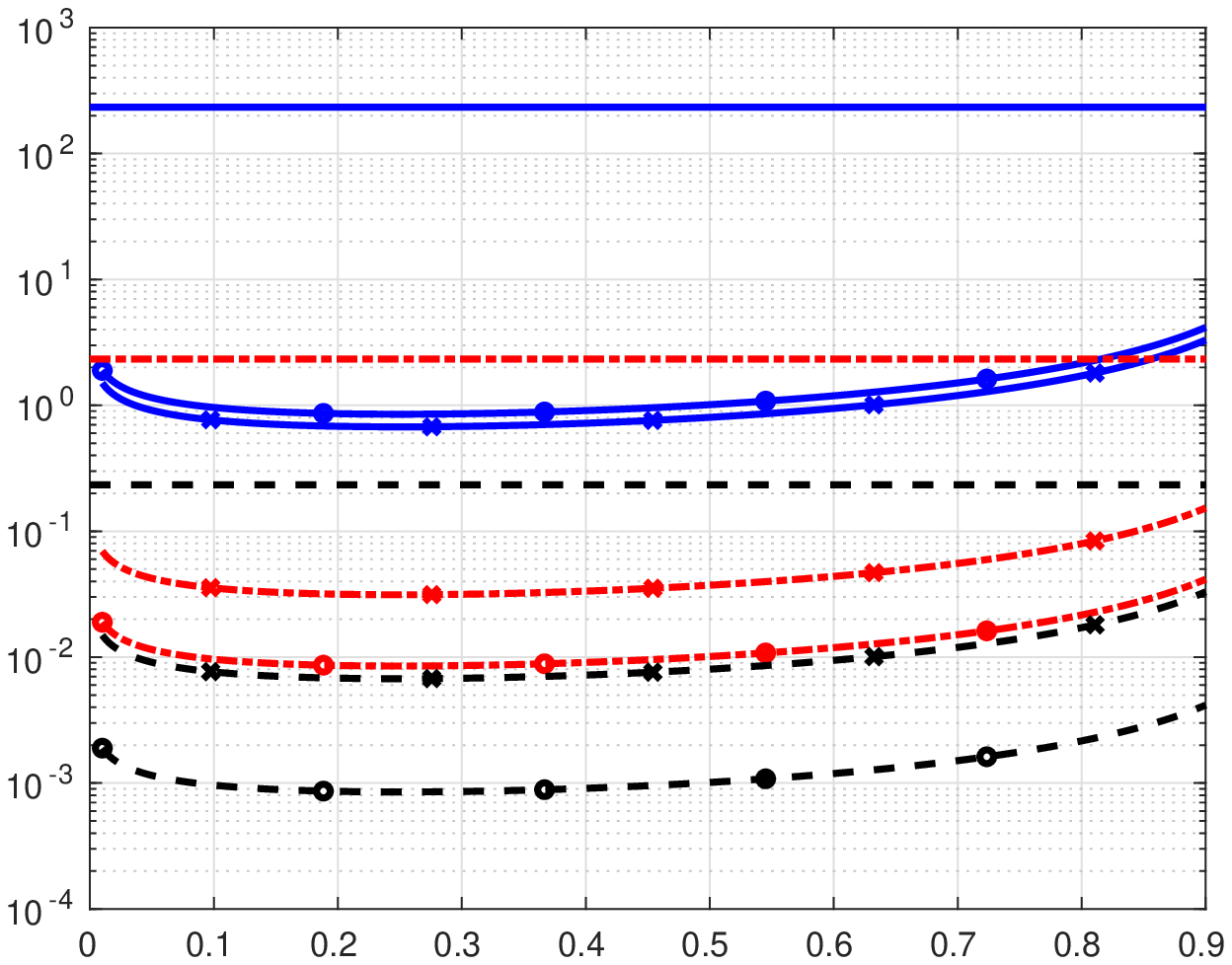}
\caption{Value of the control $n^\pa \kmax^{-\pb} \mathcal{B}$ given
  by \autoref{coro:optimal:sampling} ({\tt Case 1}, with a circle),
  \autoref{coro:optimal:sampling:Ketn} ({\tt Case 2}, with a cross)
  and Karimi et al. (no markers). The control is displayed as a
  function of $\mu \in \ooint{0.01, 0.9}$ and for different values of
  $\kmax$: $\kmax =n$ (solid line), $\kmax = 1e2 \, n$ (dash-dot line)
  and $\kmax = 1e3 \, n$ (dashed line).}
\label{fig:control}
\end{figure}

\paragraph{Comparison of {\tt Online EM}, {\tt FIEM} and {\tt opt-FIEM}.}
The algorithms are run with the same constant step size given by
\eqref{eq:C:uniform} when $C$ solves \eqref{eq:bounds:def:C} with $\mu
= 0.25$ and $\lambda = 0.5$. The size of the data set is $n=1e3$ and
the maximal number of iterations is $\kmax = 20 \, n$. Since the non
asymptotic bounds are essentially based on the control of
$\pas_{k+1}^{-2} \, \PE\left[ \| \hatS^{k+1} - \hatS^k\|^2 \right]$
(see the sketch of proof of \autoref{theo:FIEM:NonUnifStop} in
\autoref{sec:FIEM:complexity}), we first compare the algorithms
through this criterion: the expectation is approximated by a Monte
Carlo sum over $1e3$ independent runs.  The second criterion for
comparison is a distance of the iterates to the unique solution
$\param_\star$ via the expectation $\PE\left[ \|\param^k -
  \param_\star\| \right]$ and the standard deviation
$\mathrm{std}\left( \|\param^k - \param_\star\| \right)$ again
approximated by a Monte Carlo sum over the same $1e3$ independent
runs.

\autoref{fig:lambdaopt} displays the evolution of $k \mapsto
\lambda_{k+1}^\star$, the optimal coefficient given by
\eqref{eq:optimal:lambda}; in this toy example, it is computed
explicitly. As intuited in \autoref{sec:beyondFIEM}, we obtain
$\lambda_{k+1}^\star \approx 1$ for large iteration indexes $k$; {\tt
  FIEM} and {\tt opt-FIEM} have the same (or almost the same) update
scheme $\hatS^k \to \hatS^{k+1}$.
\begin{figure}[h]
  \includegraphics[width=\columnwidth]{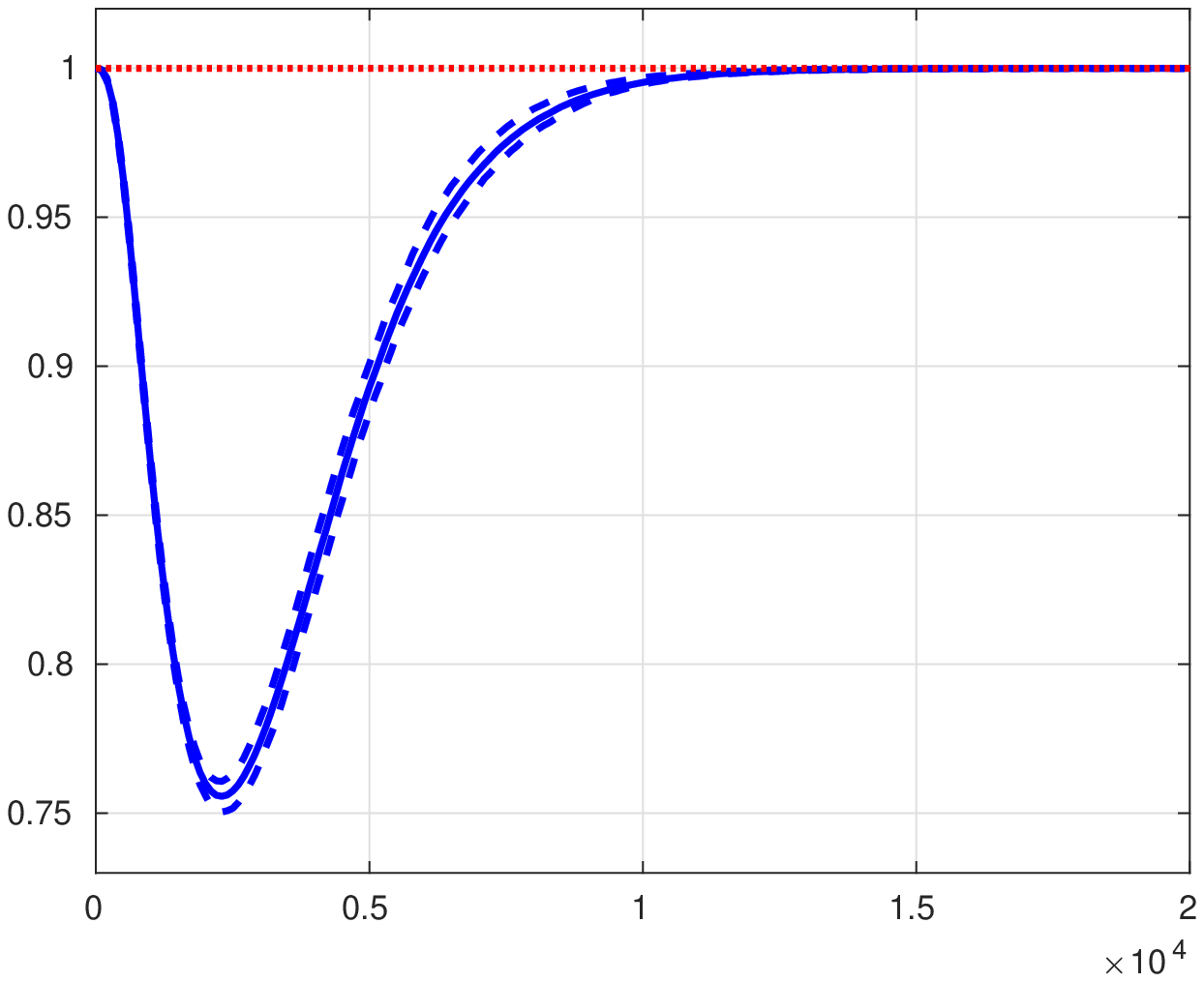}
\caption{The coefficient $\lambda_k^\star$ (see
  \eqref{eq:optimal:lambda}) as a function of the number of iterations
  $k$; it is a random variable, and the solid line is the mean value
  (the dashed lines are resp. the quantiles $0.25$ and $0.75$) over
  $1e3$ independent paths.}
\label{fig:lambdaopt}
\end{figure}
The ratio of the expectations $\PE\left[
  \|\param^k_{\mathrm{opt\_FIEM}}- \param_\star\| \right] / \PE\left[
  \|\param^k_{\mathrm{Alg}} - \param_\star\| \right]$ and of the
standard deviations $\mathrm{std}(\|\param^k_{\mathrm{opt\_FIEM}}-
\param_\star\|) / \mathrm{std}(\|\param^k_{\mathrm{Alg}} -
\param_\star\|)$ are displayed on \autoref{fig:ratio:errortheta} when
$\mathrm{Alg}$ is {\tt FIEM} and {\tt Online EM}.  They are shown as a
function of $k$ for $k \in \{1e2, 5e2, 1e3, 1.5e3, \ldots, 6e3, 7e3,
\ldots, 20e3\}$.  When $\mathrm{Alg}$ is {\tt FIEM} and the number of
iterations $k$ is large, we observe that both the ratio of the mean
values and the ratio of the standard deviations tend to one: this is
an echo to the previous comment $\lambda_{k}^\star \approx 1$. Note
also that when $k$ is large, {\tt Online EM} has a really poor
behavior when compared to {\tt opt-FIEM} (and therefore also to {\tt
  FIEM}). For the first iterations of the algorithm, we observe first
that {\tt opt-FIEM} and {\tt Online-EM} escape more rapidly from the
(possibly bad) initial value than {\tt FIEM}; {\tt opt-FIEM} surpasses
{\tt FIEM} by reducing the variance up to $22 \%$. Second, the plot
also shows that {\tt Online EM} may reduce the variability of {\tt
  opt-FIEM} up to $18 \%$, but {\tt opt-FIEM} provides a drastic
variability reduction in the first iterations. Since we advocate to
stop {\tt FIEM} at a random time $K$ sampled in the range $\{0,
\ldots, \kmax-1\}$, {\tt opt-FIEM} gives insights on how to improve
the behavior of incremental EM algorithms in the first iterations.
\begin{figure}[h]
  \includegraphics[width=\columnwidth]{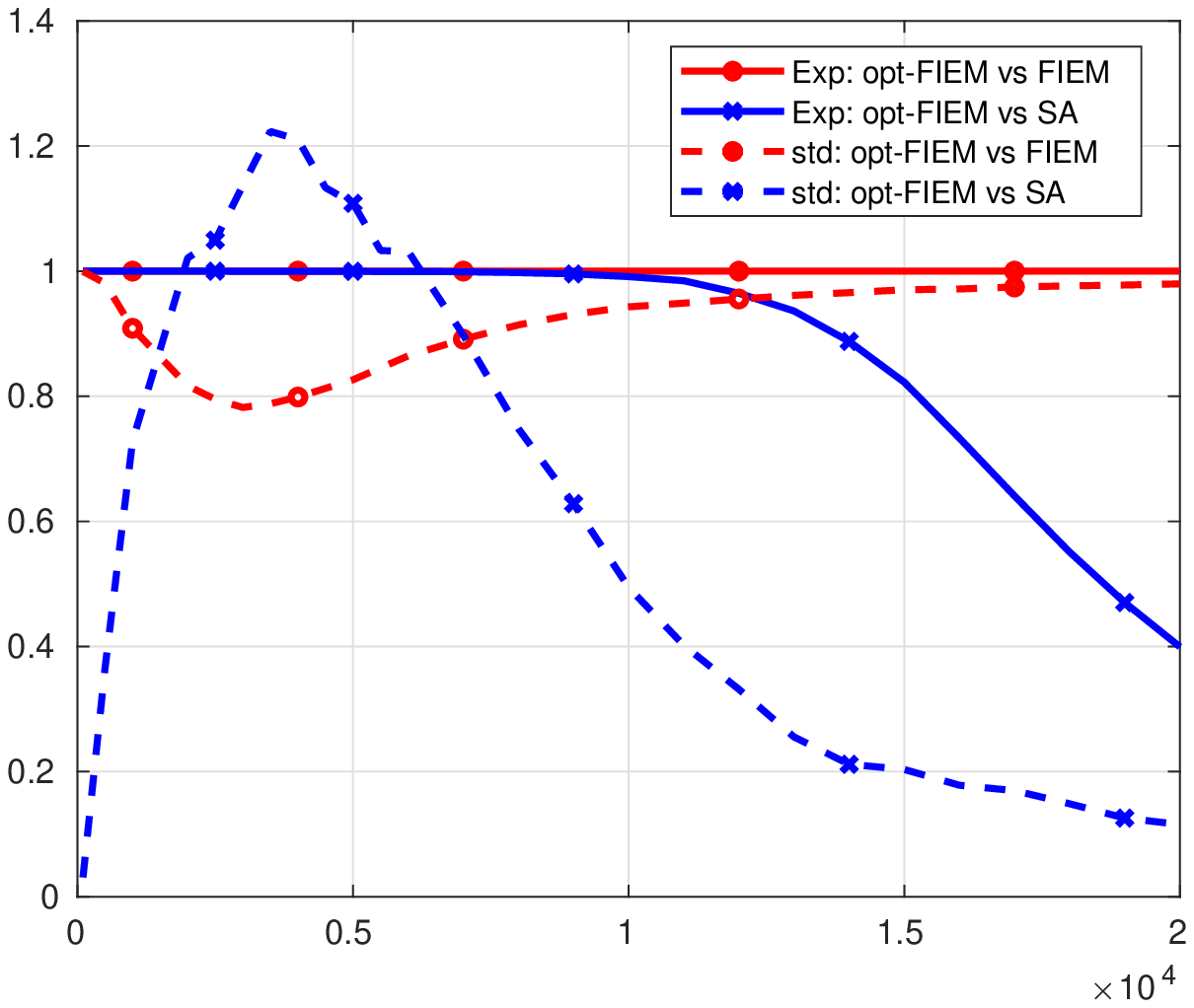}
\caption{For $k \in \{1e2, 5e2, 1e3, 1.5e3, \ldots, 6e3, 7e3, \ldots,
  20e3\}$, ratio of the expectations (Exp) $\PE\left[
    \|\param^k_{\mathrm{opt-FIEM}}- \param_\star\| \right] / \PE\left[
    \|\param^k_{\mathrm{Alg}} - \param_\star\| \right]$ when
  $\mathrm{Alg}$ is FIEM (solid line with circle) and then {\tt Online
    EM} (solid line with cross); and the standard deviations (std)
  $\mathrm{std}(\|\param^k_{\mathrm{opt_FIEM}}- \param_\star\|) /
  \mathrm{std}(\|\param^k_{\mathrm{Alg}} - \param_\star\|)$ when
  $\mathrm{Alg}$ is {\tt FIEM} (dashed line with circle) and then {\tt
    Online EM} (dashed line with cross). The expectations and standard
  deviations are approximated by a Monte Carlo sum over $1e3$
  independent runs.}
\label{fig:ratio:errortheta}
\end{figure}
\autoref{fig:fieldH} shows $k \mapsto \pas_{k+1}^{-2}
\PE\left[\|\hatS^{k+1} - \hatS^k\|^2 \right]$ for the three algorithms
when $k \in \ccint{1.5e3, 5e3}$. The plot illustrates again that {\tt
  opt-FIEM} improves {\tt FIEM} during these first iterations; and
improves drastically {\tt Online EM}.
\begin{figure}[h]
  \includegraphics[width=\columnwidth]{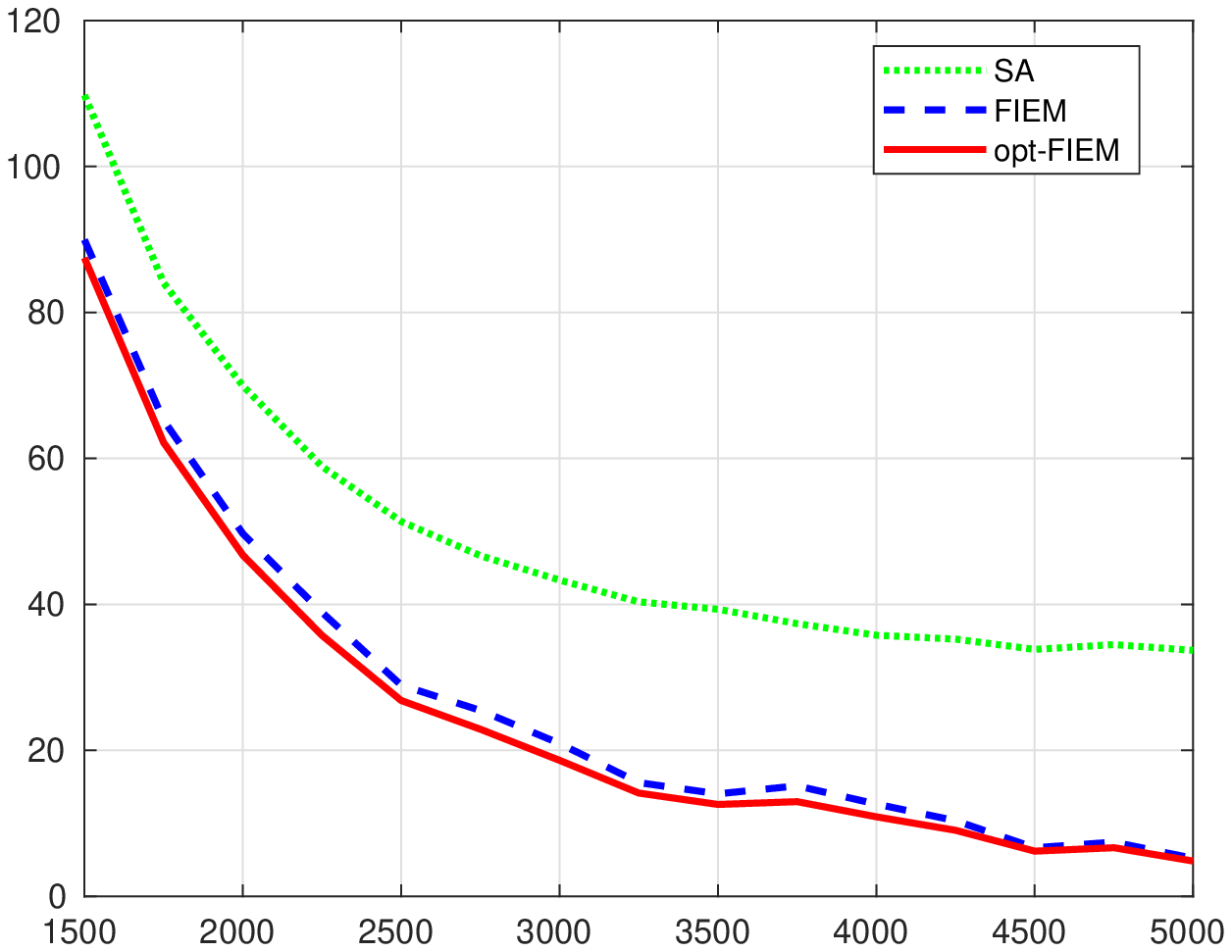}
\caption{Monte Carlo approximation (over $1e3$ independent runs) of $k
  \mapsto \pas_{k+1}^{-2} \PE\left[\|\hatS^{k+1} - \hatS^k\|^2
    \right]$ for {\tt Online EM}, {\tt FIEM} and {\tt opt-FIEM}.}
\label{fig:fieldH}
\end{figure}

\section{Mixture of Gaussian distributions}
\label{sec:mixtureGaussian}
{\bf Notations.} For two $p \times p$ matrices $A,B$, $\pscal{A}{B}$
is the trace of $B^T A$: $\pscal{A}{B} \eqdef \mathrm{Tr}(B^T
A)$. $\Id_p$ stands for the $p \times p$ identity matrix. $\otimes$
stands for the Kronecker product. $\mathcal{M}_p^+$ denotes the set of
the invertible $p \times p$ covariance matrices. $\mathrm{det}(A)$ is
the determinant of the matrix $A$.

\medskip

In this section~\footnote{see footnote~\ref{footnote:github}}, {\tt
  FIEM} is applied to solve Maximum Likelihood inference in a mixture
of $g$ Gaussian distributions centered at $\mu_\ell$ and sharing the
same covariance matrix $\Sigma$ (see \cite{book:EM:mixture:2019} for a
recent review on mixture models): given $n$ $\rset^p$-valued
observations $y_1, \ldots, y_n$, find a point
$\hat{\param}_n^{\mathrm{ML}} \in \Param$ satisfying
$F(\hat{\param}_n^{\mathrm{ML}}) \eqdef
\R(\hat{\param}_n^{\mathrm{ML}})+ n^{-1} \sum_{i=1}^n
\loss{i}(\hat{\param}_n^{\mathrm{ML}}) \leq F(\param)$ for any $\param
\in \Param$ where $\theta \eqdef(\alpha_1, \ldots, \alpha_g, \mu_1,
\ldots, \mu_g, \Sigma)$,
\[
\Param \eqdef \left\{ \alpha_\ell \geq 0, \sum_{\ell=1}^g \alpha_\ell
=1 \right \} \times \rset^{pg} \times (\mathcal{M}^{+}_p) \subseteq
\rset^{g+pg+(p \times p)} \eqsp.
\]
 In addition,
\[
\R(\param) + \frac{1}{n} \sum_{i=1}^n \loss{i}(\param) = - \frac{1}{n} \sum_{i=1}^n \log
\sum_{\ell=1}^g \alpha_\ell \ \mathcal{N}_p(\mu_\ell, \Sigma)[y_i]
\eqsp,
\]
where we set (the term $p \log(2 \pi)/2$ is omitted)
\begin{align*}
R(\param) & \eqdef \frac{1}{2} \log \mathrm{det}(\Sigma) +
\frac{1}{2n} \sum_{i=1}^n y_i^T \Sigma^{-1} y_i = \frac{1}{2}
\left(\pscal{\Sigma^{-1}}{\frac{1}{n} \sum_{i=1}^n y_i y_i^T} - \log
\mathrm{det}(\Sigma^{-1}) \right) \eqsp.
\end{align*}
In this example,
$\loss{i}(\param) = - \log \sum_{z=1}^g
\exp(\pscal{\s_i(z)}{\phi(\param)})$ with
 \[
\s_i(z) = \A_{y_i} \ \left[ \begin{matrix} \1_{z=1} \\ \cdots
    \\ \1_{z=g} \end{matrix} \right] \in \rset^{g + pg} \eqsp, \qquad
\A_y \eqdef \left[ \begin{matrix} \Id_g \\ \Id_g \otimes
    y \end{matrix} \right] \eqsp.
 \]
  We use the MNIST dataset~\footnote{available at
    http://yann.lecun.com/exdb/mnist/}. The data are pre-processed as
  in \cite{nguyen:etal:2020}: the training set contains $n=6e4$ images
  of size $28 \times 28$; among these $784$ pixels, $67$ are non
  informative since they are constant over all the pictures so they
  are removed yielding to $n$ observations of length $717$; each
  feature is centered and standardized (among the $n$ observations)
  and a PCA of the associated $717 \times 717$ covariance matrix is
  applied in order to summarize the features by the first $p = 20$
  principal components. In the numerical applications, we fix $g=12$
  components in the mixture.

The maximization step $ s \mapsto \map(s)$ is given by
\begin{align*}
\widehat{\alpha_\ell} &\eqdef \frac{s_\ell}{\sum_{u=1}^g s_u} \eqsp,
\ell \in [g]^\star \eqsp, \\ \widehat{\mu_\ell} & \eqdef \frac{s_{g
    +(\ell-1)p+1:g+\ell p}}{s_\ell}\eqsp, \ell \in [g]^\star\eqsp,
\\ \widehat{\Sigma} & \eqdef \frac{1}{n} \sum_{i=1}^n y_i y_i^T -
\sum_{\ell=1}^g s_\ell \, \widehat{\mu_\ell} \, \widehat{\mu_\ell}^T
\eqsp,
\end{align*}
where $s \eqdef (s_1, \ldots, s_{g + pg})$; see
\autoref{secAPP:GMM:mapT} in the supplementary material. Since we want
$\map(s) \in \Param$, $\map$ is defined at least on $\mathcal{S}
\subset \rset^{g+ pg}$:
\begin{multline*}
  \mathcal{S} \eqdef \left\{n^{-1} \sum_{i=1}^n \A_{y_i} \rho_{i}:
  \right. \\ \left.  \rho_i=(\rho_{i,1}, \ldots, \rho_{i,g}), \rho_{i,
    \ell} \geq 0, \sum_{\ell=1}^g \rho_{i,\ell}=1 \right\} \eqsp;
\end{multline*}
see \autoref{secApp:GMM:domainS} in the supplementary material.

This model is used to go beyond the theoretical framework adopted in
this paper.  The first extension concerns the domain of $\map$:
A\autoref{hyp:Tmap} assumes that $\map$ is defined on $\rset^q$ (here,
$q = g + pg$) while the above description shows that it is not always
true.  This gap between theory and application is classical for
mixture of Gaussian distributions (while $\rho_{\ell,i}$ may be a
signed quantity or while we may have $\sum_{\ell=1}^g \rho_{i,\ell}
\neq 1$ for the considered algorithms (see \autoref{sec:supp:GMM:EM}
to \autoref{sec:supp:GMM:FIEM} in the supplementary material for a
detailed derivation), numerically we always obtained quantities
$\hatS^k$ which were in $\mathcal{S}$.

The second extension concerns the use of mini-batches at each
iteration of incremental EM algorithms: instead of sampling one
example per iteration (see e.g. \autoref{line:FIEM:debutaux},
\autoref{line:FIEM:single} in \autoref{algo:FIEM},
\autoref{line:SA:single} in \autoref{algo:SA} and
\autoref{line:iEM:debutaux} in \autoref{algo:iEM}), a mini-batch of
size $\lbatch$ is used - sampled at random from the $n$ available
examples, possibly with replacement. In the supplementary material, we
provide in \autoref{secAPP:GMM:algos} a description of {\tt iEM}, {\tt
  Online EM} and {\tt FIEM} in the case $\lbatch>1$.

{\tt EM}, {\tt iEM}, {\tt Online EM} and {\tt FIEM} are compared when
used to solve the above Maximum Likelihood inference problem. All the
paths of these algorithms are started from the same point
$\param^0 \in \Param$ defined by the randomization scheme described in
\cite[section 4]{kwedlo:2015}; we then set
$\hatS^0 \eqdef n^{-1} \sum_{i=1}^n \bars_i(\param^0)$; the normalized
log-likelihood $-F(\param^0)$ is equal to $-58.3097$ (equivalently,
the unnormalized log-likelihood is $-3.4986 \, \mathrm{e}{+6}$). Note
that, as mentioned below, the evaluation of the log-likelihood does
not include the constant $+ p \log(2 \pi)/2$.

Each iteration of {\tt iEM}, {\tt Online EM} (resp. {\tt FIEM}) calls
a mini-batch of $\lbatch = 100$ examples (resp. $2$ mini-batches of
size $\lbatch =100$ examples each) sampled uniformly from $[n]^\star$
with replacement; for a fair comparison of the paths produced by these
algorithms, the same seed is used.

The paths are seen as cycles of {\em epochs}, an epoch being defined
as the processing of $n$ examples: for {\tt EM}, an epoch is one
iteration; for {\tt iEM} and {\tt Online EM}, an epoch is $n/\lbatch$
iterations; for {\tt FIEM}, an epoch is $n / (2\lbatch)$
iterations. Below, the paths are run until $100 \, n$ examples are
processed, which means $100$ iterations or epochs for {\tt EM}, and
$100 \, n/\lbatch$ iterations (or $100$ epochs) for both {\tt iEM} and
{\tt Online EM}. Instead of a pure {\tt FIEM} algorithm, we implement
{\tt h-FIEM}, an hybrid algorithm obtained by first running
$\mathrm{kswitch}$ epochs of {\tt Online EM} and then switching to
epochs of {\tt FIEM}: we choose $\mathrm{kswitch} =6$ so that {\tt
  h-FIEM} processes $100 \, n$ examples after $6 \, n/\lbatch$
iterations (or $6$ epochs) of {\tt Online EM} and $94 \, n/(2\lbatch)$
iterations (or $94$ epochs) of {\tt FIEM}. The use of {\tt h-FIEM} is
to explicitly illustrate the variance reduction of the {\tt FIEM}
iterations when compared to the {\tt Online EM} ones.

{\tt iEM} is run with the constant step size $\pas_{k+1} =1$; {\tt
  Online EM} and {\tt FIEM} are run with $\pas_{k+1} = 5 \,
\mathrm{e}{-3}$.

\autoref{fig:gauss:loglike:init} and \autoref{fig:gauss:loglike:end}
display the normalized log-likelihood along a path of {\tt EM}, {\tt
  iEM}, {\tt Online EM} and {h-FIEM}, resp. for the first epochs (from
$1$ to $25$) and by discarding the first ones (from $15$ to
$100$). The first conclusion is that the incremental methods forget
the initial value far more rapidly than {\tt EM}, which is the
consequence of the incremental processing of the observations which
allow many updates of the parameter $\param^k$ (or equivalently, of
the statistic $\hatS^k$) before the use of $n$ examples (which is
equivalent to the learning cost of one iteration of {\tt EM}). The
second conclusion is that the incremental EM-based methods perform a
better maximization of the normalized log-likelihood $-F$. Finally,
{\tt Online EM} and {\tt h-FIEM} are better than {\tt iEM}: the
log-likelihood converges resp. to $-1.9094 \, \mathrm{e}{+6}$,
$-1.9080 \, \mathrm{e}{+6}$ and $-1.9100 \, \mathrm{e}{+6}$ (the plot
displays the normalized log-likelihood); and it is clear that {\tt
  h-FIEM} reduces the variability of the {\tt Online EM} path. The
same conclusions are drawn from different runs; the supplementary
material provides a similar plot when the curves are the average over
$10$ independent paths; \autoref{tab:mixtgauss} reports the mean value
and the standard deviation of the log-likelihood over these $10$ runs.

\begin{figure}[h]
  \includegraphics[width=\columnwidth]{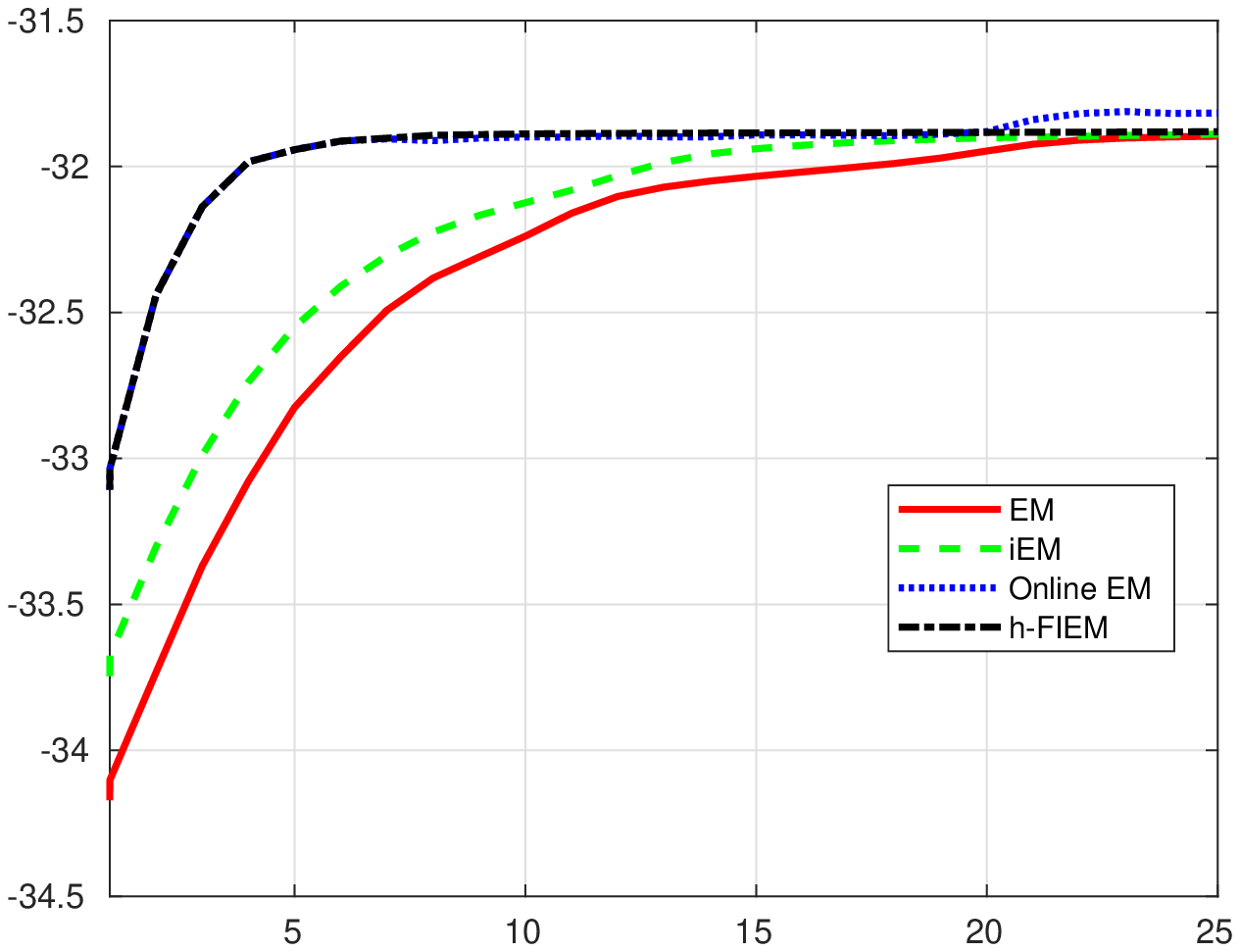}
  \caption{Evolution of the normalized log-likelihood along one path
    of length $100$ epochs: only the epochs $1$ to $25$ are
    displayed. All the paths start from the same value at time $t=0$,
    with a normalized log-likelihood equal to $-58.31$. }
\label{fig:gauss:loglike:init}
\end{figure}

\begin{figure}[h]
  \includegraphics[width=\columnwidth]{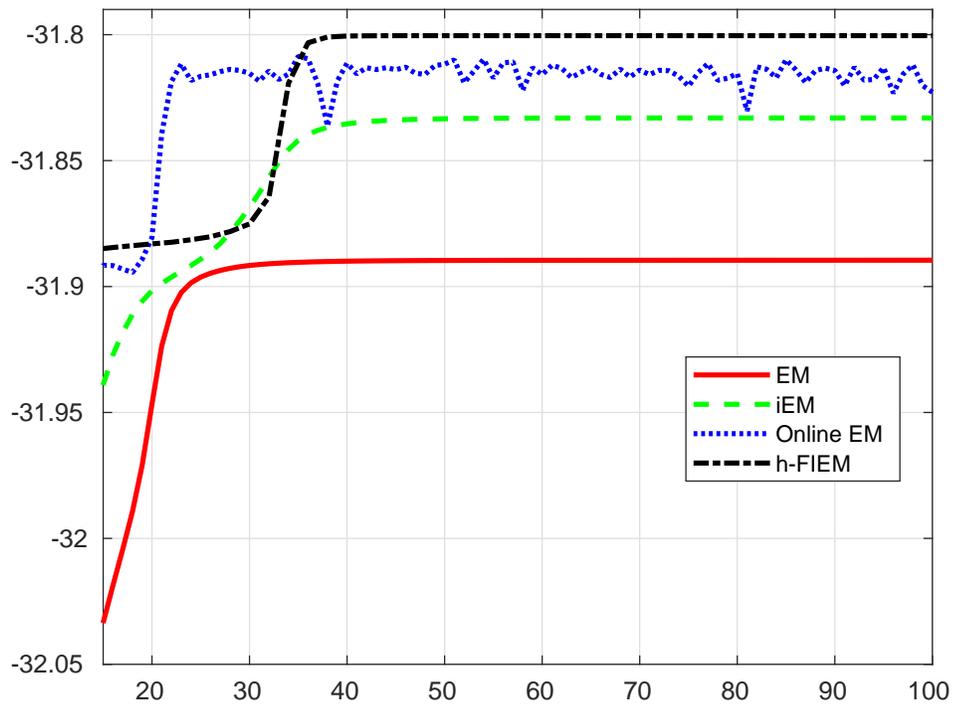}
\caption{Evolution of the normalized log-likelihood along one path of length
  $100$ epochs: the first $14$ epochs are discarded. All the paths
  start from the same value at time $t=0$, with a normalized log-likelihood equal
  to $-58.31$. }
\label{fig:gauss:loglike:end}
\end{figure}

\begin{table*}[h]
  \caption{Normalized log-likelihood along a {\tt EM}, {\tt iEM}, {\tt
      Online EM} and {\tt h-FIEM} path, at epoch $\# 1, 15, 25, 50,
    100$. The value is the average over $10$ independent runs (the
    standard deviation is in parenthesis). The log-likelihood is
    obtained by multiplying by $n=6\mathrm{e}{+4}$.\label{tab:mixtgauss}}
  \begin{tabular}{|c||c|c|c|c|c|}
    \hline
    & $\# 1$ & $\# 15$ & $\# 25$ & $\#50$ & $\# 100$ \\ \hline
    EM & $-3.4102 \mathrm{e}{+1}$ & $-3.2033 \mathrm{e}{+1}$  & $-3.1896 \mathrm{e}{+1}$  & $-3.1890 \mathrm{e}{+1}$  & $-3.1889 \mathrm{e}{+1}$  \\
   & - & - & - & - & - \\ \hline
    iEM & $-3.3672 \mathrm{e}{+1}$ & $-3.1982 \mathrm{e}{+1}$  & $-3.1869 \mathrm{e}{+1}$  & $-3.1843 \mathrm{e}{+1}$  & $-3.1827 \mathrm{e}{+1}$  \\
    & $(4.90\mathrm{e}{-3})$ & $(5.33\mathrm{e}{-2})$ & $(1.87\mathrm{e}{-2})$ & $(1.87\mathrm{e}{-2})$  & $(1.38\mathrm{e}{-2})$ \\ \hline
    Online EM & \textbf{-3.2999$\mathrm{e}{+1}$} & \textbf{-3.1872$\mathrm{e}{+1}$}  & \textbf{-3.1828$\mathrm{e}{+1}$}  & $-3.1823 \mathrm{e}{+1}$  & $-3.1823 \mathrm{e}{+1}$  \\
    &  $(2.67\mathrm{e}{-2})$  &  $(5.14\mathrm{e}{-2})$  &  $(4.67\mathrm{e}{-2})$  &  $(4.68\mathrm{e}{-2})$ &  $(4.50\mathrm{e}{-2})$ \\ \hline
    h-FIEM &  $-3.2999 \mathrm{e}{+1}$ & $-3.1900 \mathrm{e}{+1}$  & $-3.1853 \mathrm{e}{+1}$  & \textbf{-3.1806$\mathrm{e}{+1}$}  & \textbf{-3.1804$\mathrm{e}{+1}$}  \\
   & $(2.67\mathrm{e}{-2})$ & $(5.31\mathrm{e}{-2})$ & $(6.94\mathrm{e}{-2})$ & $(5.18\mathrm{e}{-2})$ & $(5.25\mathrm{e}{-2})$ \\ \hline
  \end{tabular}
\end{table*}

A fluctuation of $1\%$ (resp. $1$ \textperthousand) around the optimal
normalized log-likelihood corresponds to a lower bound of $-32.1184$
(resp. $-31.8322$): for {\tt EM} such an accuracy is reached after
$12$ iterations (resp. is never reached); for {\tt iEM}, it is reached
after $11$ epochs (resp. is never reached); for {\tt Online EM}, after
$4$ epochs (resp. $23$ epochs); for {\tt h-FIEM}, after $4$ epochs
(resp. $34$ epochs). An accuracy of $1$ \textpertenthousand\ is never
reached by {\tt Online EM} and is reached after $36$ epochs for {\tt
  h-FIEM}.

\autoref{fig:gauss:weight:path} shows the estimation of the $g=12$
weights $\alpha_\ell$ along a path of length $100$ epochs.  The
comparison of {\tt Online EM} (bottom left) and {\tt h-FIEM} (bottom
right) shows that {\tt h-FIEM} acts as a variability reduction
technique along the path, without slowing down the convergence
rate. \autoref{fig:gauss:weight:singlerun} displays the limiting value
of these paths i.e. the estimate of the weights $\alpha_1, \ldots,
\alpha_g$ defined as the value of the parameter at the end of $100$
epochs; the weights are sorted in descending order. {\tt Online EM}
and {\tt h-FIEM} provide similar estimates.

\begin{figure}[h]
  \includegraphics[width=\columnwidth]{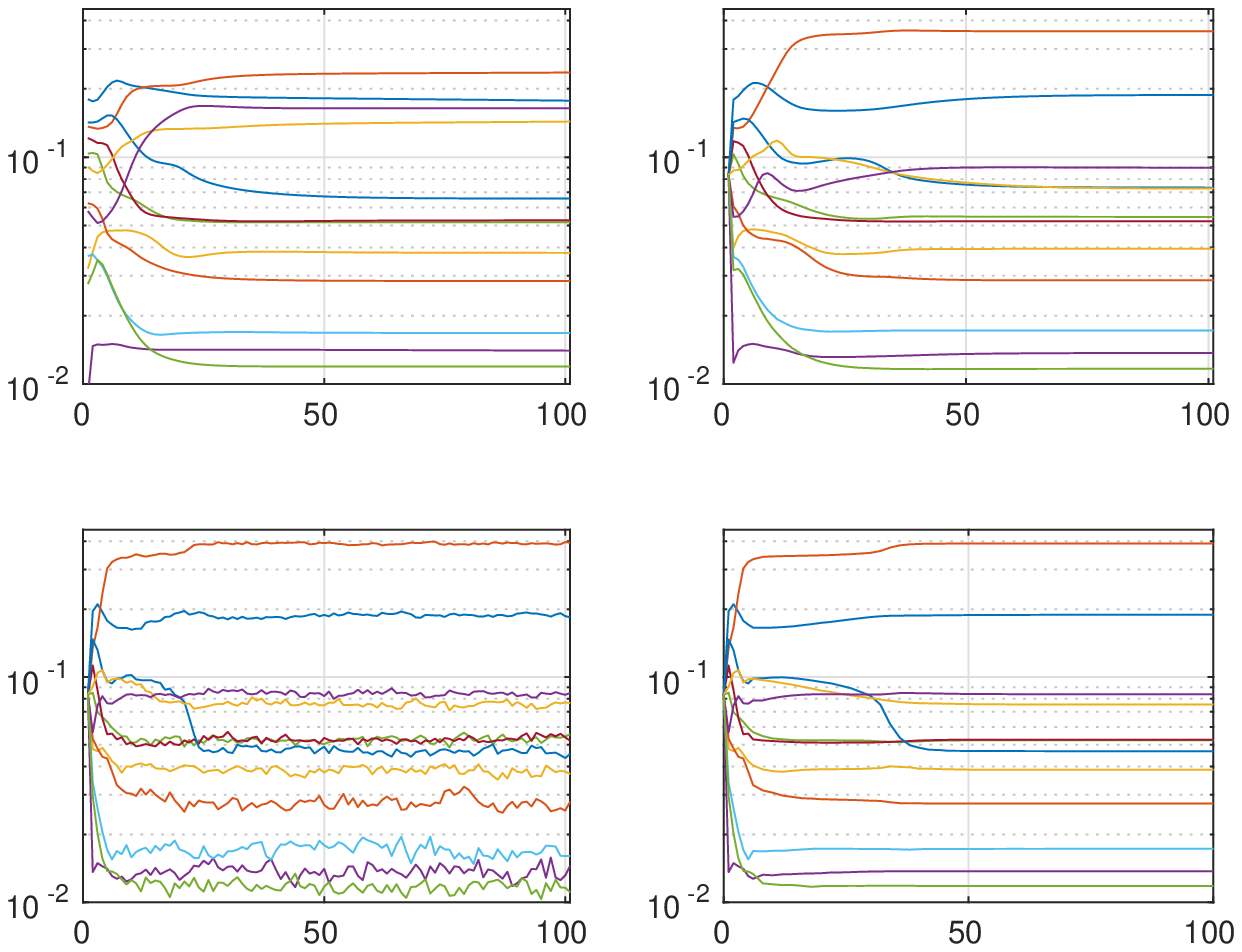}
  \caption{Evolution of the $g=12$ weights along one path of length
    $100$ epochs. All the paths start from the same value at time
    $t=0$. {\tt EM} (top left), {\tt iEM} (top right), {\tt Online EM}
    (bottom left) and {\tt h-FIEM} (bottom right).}
\label{fig:gauss:weight:path}
\end{figure}

\begin{figure}[h]
  \includegraphics[width=\columnwidth]{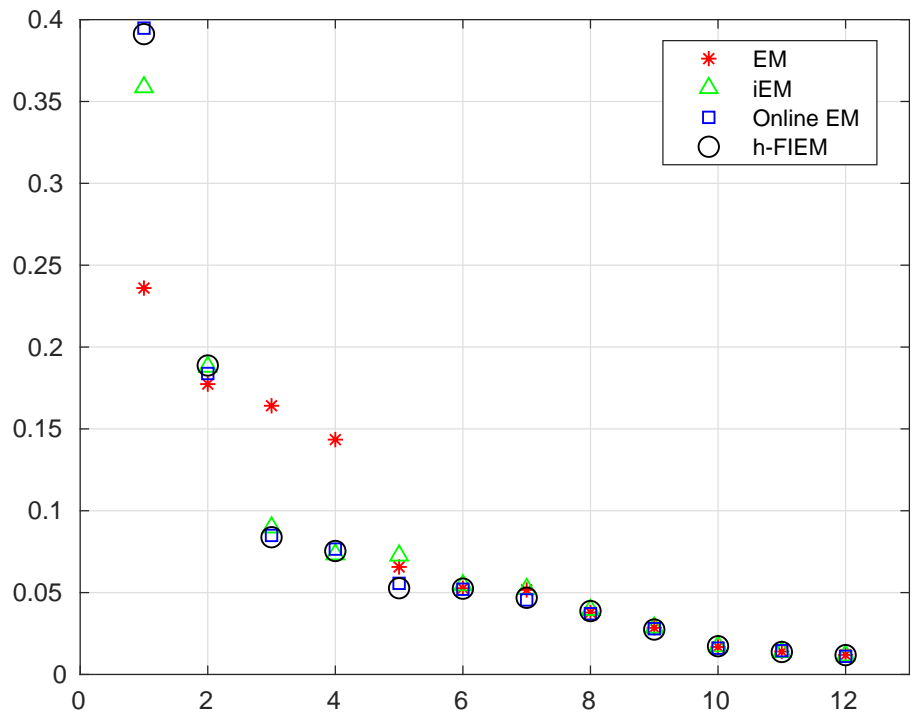}
\caption{Estimation of the $g=12$ weights of the mixture model.  The
  estimator is the value of the parameter obtained at the end of a
  single path of length $100$ epochs.}
\label{fig:gauss:weight:singlerun}
\end{figure}

\section{Proof}
\subsection{Proof of \autoref{sec:algo}}
\subsubsection{Proof of \autoref{lem:nablaV}}
\label{sec:proof:nablaV}
\paragraph{(Proof of \ref{lem:nablaV:item1}).} The statements are trivial and we only prove the first claim: if $s^\star = \bars \circ \map (s^\star)$ then by applying $\map$ (under the uniqueness assumption  A\autoref{hyp:Tmap}), we have $\map(s^\star) = (\map \circ \bars) \circ \map(s^\star)$ and the proof follows.

\paragraph{(Proof of \ref{lem:nablaV:item2}).} For $\param \in
  \Param^v$, set $D\phi(\param) \eqdef \left( \dot\phi(\param)
  \right)^T$. By A\autoref{hyp:regV}-\ref{hyp:model:F:C1} and a chain
  rule,
  \[
  \dot \lyap(s) =\left(\dot \map(s) \right)^T \ \left\{ \dot
  \R(\map(s)) - D\phi(\map(s)) \ \bars \circ \map(s) \right\} \eqsp.
  \]
  Moreover, using A\autoref{hyp:Tmap} and
  A\autoref{hyp:regV}-\ref{hyp:model:C1}, the minimum $\map(s)$ is a
  critical point of $\param \mapsto \L(s,\param)$: we have for any $s
  \in \Sset$, $\dot\R(\map(s)) - D\phi(\map(s)) \, s = 0$.  Hence,
\[
  \dot \lyap(s) = - \left(\dot \map(s) \right)^T \ D\phi(\map(s))
  \ h(s) = - \left(B(s) \right)^T \ h(s) \eqsp.
\]
A\autoref{hyp:regV}-\ref{hyp:regV:C1} implies that $B^T =B$ and the zeros of
$h$ are the zeros of $\dot \lyap$.

\subsubsection{Auxiliary result}
\label{sec:auxiliary:results}
  \begin{lemma}\label{lem:expfam:reg}
    Assume that $\Param$ and $\phi(\Param)$ are open; and $\phi$ is
    continuously differentiable on $\Param$. Then for all $i \in \{1,
    \ldots,n\}$, $\loss{i}$ is continuously differentiable on
    $\Param$.

    If in addition A\autoref{hyp:model}, A\autoref{hyp:bars}, A\autoref{hyp:Tmap} and
    A\autoref{hyp:regV}-\ref{hyp:model:C1} hold, then $F$
    (resp. $\lyap \eqdef F \circ \map$) is continuously differentiable on $\Param$
    (resp. on $\Sset$) and for any $\param \in \Param$,
  \[
  \dot F(\param) = - \left( \dot{\phi}(\param)\right)^T \, \bars(\param)
+ \dot \R(\param) \eqsp.
\]
\end{lemma}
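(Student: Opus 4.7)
The idea is to reduce everything to the smoothness of a log-partition function. For each $i \in [n]^\star$, introduce
\[
\psi_i(u) \eqdef -\log Z_i(u), \qquad Z_i(u) \eqdef \int_\Zset \tilde{h}_i(z)\,\exp(\pscal{\s_i(z)}{u})\,\mu(\rmd z),
\]
so that $\loss{i}(\param) = \psi_i \circ \phi(\param)$. Under A\autoref{hyp:model}, $Z_i$ is finite and positive on $\phi(\Param)$. Since $\phi(\Param)$ is open, the first task is to show that $\psi_i$ is $C^1$ on $\phi(\Param)$; the lemma then follows from the chain rule applied to $\loss{i} = \psi_i \circ \phi$.

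To prove $\psi_i \in C^1(\phi(\Param))$, I would fix $u_0 \in \phi(\Param)$, pick a closed ball $B(u_0,2r) \subset \phi(\Param)$, and apply the standard differentiation-under-the-integral argument. The dominating function relies on the elementary inequality $|x|\,\exp(r|x|) \leq \exp((r+\eta)|x|)/\eta$ for any $\eta>0$, which gives, on $B(u_0,r)$,
\[
|\s_i(z)|\,\tilde h_i(z)\exp(\pscal{\s_i(z)}{u}) \;\leq\; \eta^{-1}\,\tilde h_i(z)\,\bigl(\exp(\pscal{\s_i(z)}{u_0+\tilde u})+\exp(\pscal{\s_i(z)}{u_0-\tilde u})\bigr)
\]
for suitable $\tilde u$ with $\|\tilde u\|\leq r+\eta<2r$. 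Since both dominating integrands correspond to points in $\phi(\Param)$, they are $\mu$-integrable. Dominated convergence then yields that $Z_i$ is of class $C^1$ on $\phi(\Param)$, with $\dot Z_i(u)=\int \s_i(z)\tilde h_i(z)\exp(\pscal{\s_i(z)}{u})\mu(\rmd z)$, and hence $\dot\psi_i(u) = -\dot Z_i(u)/Z_i(u)$, continuous on $\phi(\Param)$. The main obstacle is exactly this construction of a $\mu$-integrable local dominant; once it is in place, the rest is routine.

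Next, I identify $-\dot\psi_i(\phi(\param))$ with $\bars_i(\param)$. From the definition of $p_i(z;\param)$ in the paper one reads $p_i(z;\param)=\tilde h_i(z)\exp(\pscal{\s_i(z)}{\phi(\param)})/Z_i(\phi(\param))$, so A\autoref{hyp:bars} and the formula for $\dot Z_i$ give
\[
\bars_i(\param) \;=\; \frac{\dot Z_i(\phi(\param))}{Z_i(\phi(\param))} \;=\; -\dot\psi_i(\phi(\param)).
\]
Applying the chain rule to $\loss{i} = \psi_i\circ \phi$ (using $\phi \in C^1(\Param)$) then yields $\dot\loss{i}(\param) = -(\dot\phi(\param))^T \bars_i(\param)$, and both factors are continuous under A\autoref{hyp:regV}-\ref{hyp:model:C1} and the continuity of $\dot\psi_i$ established above. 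Therefore each $\loss{i}$ is $C^1$ on $\Param$.

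For the second part, summing $\dot\loss{i}$ over $i$, dividing by $n$, and adding $\dot\R(\param)$ (continuous under A\autoref{hyp:regV}-\ref{hyp:model:C1}) gives
\[
\dot F(\param) \;=\; -(\dot\phi(\param))^T \bars(\param) + \dot\R(\param),
\]
with $\bars$ continuous as the sum of the $\bars_i$'s (continuity of each $\bars_i$ follows from the same dominated-convergence argument applied once more). Hence $F$ is $C^1$ on $\Param$. Finally, since $\map \in C^1(\Sset)$ by A\autoref{hyp:regV}-\ref{hyp:model:C1} and $\map(\Sset) \subseteq \Param$, the composition $\lyap = F\circ \map$ is $C^1$ on $\Sset$ by a further application of the chain rule, concluding the proof.
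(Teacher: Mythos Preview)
Your proposal is correct and follows essentially the same route as the paper: reduce $\loss{i}$ to $-\log Z_i \circ \phi$, establish that the partition function $Z_i$ is $C^1$ on the open set $\phi(\Param)$ with $\dot Z_i(u)=\int \s_i(z)\tilde h_i(z)\exp(\pscal{\s_i(z)}{u})\mu(\rmd z)$, and conclude by the chain rule. The only difference is that the paper obtains the $C^1$ property and the derivative formula for $Z_i$ by citing the standard exponential-family result (Sundberg, Brown), whereas you spell out the dominated-convergence argument explicitly; note that your domination bound should really be applied componentwise (with $\tilde u=(r+\eta)e_j$), since a single $\tilde u$ cannot simultaneously control $\|\s_i(z)\|$ in dimension $q>1$.
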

\begin{proof}
  A\autoref{hyp:model} and \cite[Proposition 3.8]{Sundberg:2019} (see
  also \cite[Theorem 2.2.]{Brown:1986}) imply that $L_i:\tau \mapsto
  \int_Z h_i(z) \, \exp\left({\pscal{s_i(z)}{\tau}} \right) \mu(\rmd
  z) $ is continuously differentiable on the interior of the set
 \[
 \{\tau \in \rset^q, \int_Z h_i(z) \, \exp\left({\pscal{s_i(z)}{\tau}}
 \right) \mu(\rmd z) < \infty \}
\]
and its derivative is
  \[
\int_\Zset s_i(z) \ h_i(z) \, \exp\left({\pscal{s_i(z)}{\tau}} \right)
\mu(\rmd z) \eqsp.
  \]
  This set contains $\phi(\Param)$ under A\autoref{hyp:model}.  The
  equality $\loss{i} = -\log( L_i \circ \phi)$ and the
  differentiability of composition of functions conclude the proof of
  the first item. The second one easily follows.
\end{proof}

\subsection{Proofs of \autoref{sec:FIEM:complexity}}
\label{sec:proofs}
For any $k \geq 0$ and $i \in [n]^\star$, we define
$\hatS^{<k,i}$ such that
\[
\Sronde^{k} = \frac{1}{n}\sum_{i=1}^n \bars_i \circ \map(\hatS^{< k,
  i}) \eqsp;
\]
it means $\hatS^{<0,i} \eqdef \hatS^0$ for all $i \in [n]^\star$ and for
$k \geq 0$,
 \begin{equation}\label{eq:memory:lastupdate} \hatS^{< k+1,i}
  = \hatS^{\ell},
 \end{equation}
with
\begin{equation}
\left\{ \begin{array}{ll} \ell = k & \text{if $I_{k+1} = i$,} \\ \ell
  \in [k-1]^\star & \text{if $I_{k+1} \neq i, I_{k} \neq i, \ldots,
    I_{\ell+1} =i$,} \\ \ell =0 & \text{otherwise.}
    \end{array}
 \right.
 \end{equation}
Define the filtrations, for $k \geq 0$,
  \begin{align*}
\F_k & \eqdef \sigma(\hatS^0, I_1, J_1, \ldots, I_{k}, J_k),  \\
\F_{k+1/2} & \eqdef \sigma(\hatS^0, I_1, J_1, \ldots, I_{k}, J_k,
I_{k+1}) \eqsp;
\end{align*}
note that $\hatS^k \in \F_k$ and $\Smem_{k+1,\cdot} \in \F_{k+1/2}$.
Set
   \[
H_{k+1} \eqdef \bars_{J_{k+1}} \circ \map(\hatS^k) -\hatS^k +
\frac{1}{n} \sum_{i=1}^n \Smem_{k+1,i} - \Smem_{k+1, J_{k+1}} \eqsp.
\]
\subsubsection{Proof of \autoref{theo:FIEM:NonUnifStop}}
\label{sec:proof:theo}
By \autoref{lem:nablaV} and
A\autoref{hyp:regV:bis}-\ref{hyp:regV:DerLip}, $\dot \lyap$ is
$L_{\dot \lyap}$-Lipschitz on $\rset^q$, and we have
\begin{align*}
  \lyap(\hatS^{k+1})  - \lyap(\hatS^k) & \leq \pscal{\hatS^{k+1} -
    \hatS^k}{\dot \lyap(\hatS^k)} + \frac{L_{\dot \lyap}}{2} \|
  \hatS^{k+1}- \hatS^{k} \|^2 \\ & \leq \pas_{k+1}
  \pscal{H_{k+1}}{\dot \lyap(\hatS^k)} + \pas_{k+1}^2 \frac{ L_{\dot
      \lyap}}{2} \| H_{k+1} \|^2.
\end{align*}
 Taking the expectation yields, upon noting that $\hatS^k \in \F_k$,
\begin{align*}
  & \PE\left[\lyap(\hatS^{k+1}) \right] - \PE\left[\lyap(\hatS^k)
    \right] \\ & \leq \pas_{k+1} \PE\left[\pscal{\PE\left[ H_{k+1} \vert \F_k \right]}{\dot \lyap(\hatS^k)} \right] + \pas_{k+1}^2
  \frac{L_{\dot \lyap}}{2} \PE\left[\| H_{k+1} \|^2
    \right] \\ & \leq \pas_{k+1} \PE\left[\pscal{h(\hatS^k)}{\dot \lyap(\hatS^k)} \right] +
  \pas_{k+1}^2 \frac{L_{\dot \lyap}}{2} \PE\left[\| H_{k+1} \|^2 \right] \\ & \leq - \pas_{k+1} v_{min}
  \PE\left[\| h(\hatS^k) \|^2 \right] +
  \pas_{k+1}^2 \frac{L_{\dot \lyap}}{2} \PE\left[\| H_{k+1} \|^2 \right] \\ & \leq - \pas_{k+1} \left( v_{min} -
  \pas_{k+1} \frac{L_{\dot \lyap}}{2} \right) \PE\left[\| h(\hatS^k) \|^2 \right]  \\
& \qquad + \pas_{k+1}^2
  \frac{L_{\dot \lyap}}{2} \PE\left[\| H_{k+1} - h(\hatS^k) \|^2 \right]
\end{align*}
where we used that $\PE\left[H_{k+1} \vert \F_k \right] = h(\hatS^k)$
and \autoref{lem:fromVdot:to:h}. Set
\begin{align*}
A_k & \eqdef \PE\left[\| h(\hatS^k) \|^2 \right] \eqsp, \\
B_{k+1}
& \eqdef \PE\left[\| \Sronde^{k+1} - \bars \circ \map(\hatS^k) \|^2
  \right] \eqsp.
\end{align*}
By \autoref{lem:control:field} and \autoref{prop:variance:field}, we
have for any $k \geq 0$:
\begin{align*}
  \PE\left[\lyap(\hatS^{k+1}) \right] - \PE\left[\lyap(\hatS^k)
    \right] &\leq - \pas_{k+1} \left( v_{min} - \pas_{k+1}
  \frac{L_{\dot \lyap}}{2} \right) A_k - \pas_{k+1}^2 \frac{L_{\dot
      \lyap}}{2} B_{k+1} \\ &+ \pas_{k+1}^2 \frac{L_{\dot \lyap}}{2}
  \PE\left[ \| \bars_{J_{k+1}} \circ \map(\hatS^k) -
    \Smem_{k+1,J_{k+1}} \|^2 \right] \\ & \leq T_{1,k} + T_{2,k+1}
\end{align*}
by setting
\begin{align*}
T_{1,k} &\eqdef - \pas_{k+1} \left( v_{min} - \pas_{k+1} \frac{L_{\dot
    \lyap}}{2} \right) A_k + \pas_{k+1}^2 \frac{L_{\dot \lyap}}{2}
\sum_{j=0}^{k-1} \tilde \Lambda_{j+1,k} A_j \\ T_{2,k+1} & \eqdef -
\pas_{k+1}^2 \frac{L_{\dot \lyap}}{2} \left\{ B_{k+1}
+\sum_{j=0}^{k-1} \tilde \Lambda_{j+1,k} (1+\beta_{j+1}^{-1})^{-1}
B_{j+1} \right\} \eqsp;
\end{align*}
by convention, $\sum_{j=0}^1 a_j=0$.  By summing from $k=0$ to $k =
\kmax-1$, we have
\begin{multline*}
  \sum_{k=0}^{\kmax-1} \pas_{k+1} \left( v_{min} - \pas_{k+1}
  \frac{L_{\dot \lyap}}{2} \right) A_k - \frac{L_{\dot \lyap} \,
    L^2}{2} \sum_{k=0}^{\kmax-2} \pas_{k+1}^2 \Lambda_{k} \ A_k
  \\ \leq \Delta \lyap - \frac{L_{\dot \lyap}}{2} \sum_{k=0}^{\kmax
    -1} \pas_{k+1}^2 \left(L^2 \Xi_k + 1 \right) B_{k+1},
\end{multline*}
where for $0 \leq k \leq \kmax-2$ and with the convention
$\Lambda_{\kmax-1} = \Xi_{\kmax-1} =0$,
  \begin{align*}
\Lambda_k & \eqdef \left(1 + \frac{1}{\beta_{k+1}}
\right) \  \sum_{j=k+1}^{\kmax-1} \pas_{j+1}^2 \ \left(\frac{n-1}{n}
\right)^{j-k} \ \prod_{\ell=k+2}^j \left(1+ \beta_\ell + \pas_\ell^2
L^2 \right) \\ & \leq  \left(1 + \frac{1}{\beta_{k+1}}
\right) \ \sum_{j=k+1}^{\kmax-1} \pas_{j+1}^2 \ \prod_{\ell=k+2}^j
\left(1 - \frac{1}{n}+ \beta_\ell + \pas_\ell^2 L^2 \right),  \\
\Xi_k & \eqdef   \left(1 + \frac{1}{\beta_{k+1}}
\right)^{-1} \Lambda_k = \frac{\Lambda_k \beta_{k+1}}{1+\beta_{k+1}} \eqsp.
  \end{align*}
  Hence,
  \begin{multline*}
\sum_{k=0}^{\kmax-1} \{\pas_{k+1} \left( v_{min} - \pas_{k+1}
\frac{L_{\dot \lyap}}{2} \right) - \pas_{k+1}^2 \Lambda_{k}
\frac{L_{\dot \lyap} L^2}{2} \} \ A_k \\ + \sum_{k=0}^{\kmax -1}
\pas_{k+1}^2 \{ 1 + \Xi_{k} L^2 \}\frac{ L_{\dot\lyap}}{2} B_{k+1}
\leq \Delta \lyap \eqsp.
  \end{multline*}

\subsubsection{Proof of \autoref{coro:optimal:sampling}}
\label{sec:proof:coro:optimal:sampling}
It is a follow-up of \autoref{theo:FIEM:NonUnifStop}; the quantities
$\alpha_k, \Lambda_k, \delta_k$ introduced in the statement of
\autoref{theo:FIEM:NonUnifStop} are used below without being defined
again.  We consider the case when for $\ell \in [\kmax]^\star$,
\[
\beta_\ell \eqdef \frac{1-\lambda}{n^\pb} \eqsp, \qquad \pas_\ell^2 \eqdef
\frac{C}{L^2 n^{2\pc} \kmax^{2\pd}} \eqsp,
\]
for some $\lambda \in \ooint{0,1}$, $C >0$ and $\pb, \pc, \pd$ to be
defined in the proof in such a way that \textit{(i)} $\alpha_k \geq
0$, \textit{(ii)} $\sum_{k=0}^{\kmax-1} \alpha_k$ is positive and as
large as possible. Since there will be a discussion on $(n,C,\lambda)$, we make more explicit the dependence of some constants upon these
quantities: $\alpha_k$ will be denoted by $\alpha_k(n,C,\lambda)$.

With these definitions, we have
\begin{align*}
1-\frac{\rho_n}{n} & \eqdef 1 - \frac{1}{n} + \beta_\ell + \pas_\ell^2
L^2 = 1 - \frac{1}{n} \left(1 - \frac{1-\lambda}{n^{\pb-1}} -
\frac{C}{n^{2\pc-1} \kmax^{2 \pd}} \right) \eqsp,
\end{align*}
and choose $(\pb,\pc,\pd, \lambda, C)$ such that
\begin{equation}
  \label{eq:proof:coro:optsample:a}
  \frac{1-\lambda}{n^{\pb-1}} + \frac{C}{n^{2\pc -1}
  \kmax^{2 \pd}} < 1 \eqsp,
\end{equation}
which ensures that $\rho_n\in \ooint{0,1}$. Hence, for any $k \in [\kmax-2]$,
\begin{align*}
\Lambda_k & \leq n^\pb \left(\frac{1}{n^\pb} + \frac{1}{1-\lambda}
\right) \frac{C}{L^2 n^{2\pc} \kmax^{2 \pd}}\sum_{j=k+1}^{\kmax-1}
\left( 1- \frac{\rho_n}{n}\right)^{j-k-1}  \\
&\leq \left(\frac{1}{n^\pb} + \frac{1}{1-\lambda} \right)
\frac{C}{L^2 \rho_n} \frac{1}{n^{2\pc-\pb-1} \kmax^{2 \pd}}\eqsp.
\end{align*}
 From this upper bound, we deduce for any $k \in [\kmax-1]$:
 $\alpha_k(n,C,\lambda) \geq \underline{\alpha}_n(C,\lambda)$ where
\begin{align} \label{eq:proof:coro:optsample:d}
 \underline{\alpha}_n(C,\lambda) \eqdef \frac{\sqrt{C}}{L n^{\pc}
   \kmax^{\pd}} \left( v_\min - \frac{L_{\dot \lyap}}{2 L}
 \frac{\sqrt{C}}{n^\pc \kmax^\pd} - \frac{L_{\dot \lyap}}{2 L}
 \frac{C^{3/2}}{\rho_n \, n^{3 \pc -\pb-1} \kmax^{3 \pd}}
 \left(\frac{1}{n^\pb} + \frac{1}{1-\lambda} \right)\right) \eqsp.
  \end{align}
From \eqref{eq:proof:coro:optsample:a} and
\eqref{eq:proof:coro:optsample:d}, we choose $\pb =1$, $\pc = 2/3$,
$\pd = 0$; which yields for $n \geq 1$, since $\rho_n = \lambda - C
n^{-1/3}$
\[
n^{2/3} \underline{\alpha}_n(C,\lambda) \geq \mathcal{L}_n(C,\lambda) \eqsp,
\]
with
\begin{align*}
\mathcal{L}_n(C,\lambda) &\eqdef \frac{L_{\dot \lyap} \sqrt{C}}{2 L^2}
\left( v_\min \frac{2L}{L_{\dot \lyap}}- \sqrt{C} f_n(C,\lambda)
\right) \eqsp, \\ f_n(C,\lambda) & \eqdef \frac{1}{n^{2/3}}+
\frac{C}{\lambda - Cn^{-1/3}} \left(\frac{1}{n} + \frac{1}{1-\lambda}
\right)\eqsp.
\end{align*}
Let $\mu \in \ooint{0,1}$. Fix $\lambda \in \ooint{0,1}$ and $C>0$
such that (see \eqref{eq:proof:coro:optsample:a} for the second condition)
\begin{equation}
  \label{eq:proof:coro:optsample:b}
 \sqrt{C} f_n(C,\lambda) = 2 \mu v_\min \frac{L}{L_{\dot \lyap}} \eqsp, \qquad
 \frac{1}{n^{1/3}} < \frac{\lambda}{C} \eqsp.
\end{equation}
This implies that $n^{2/3} \alpha_k(n,C,\lambda) \geq n^{2/3}
\underline{\alpha}_n(C,\lambda) \geq n^{2/3} \alpha_\star(C) \eqdef
\sqrt{C} (1-\mu) v_\min / L$.  We obtain an upper bound on
$\mathsf{E}_1$ by
\begin{equation*}
\mathsf{E}_1 \leq \frac{1}{\kmax \ \alpha_\star(C)}
\sum_{k=0}^{\kmax-1} \alpha_k(n,C,\lambda) \, \PE\left[ \| h(\hatS^k)
  \|^2 \right] \eqsp.
\end{equation*}
 For $\mathsf{E}_2$, since
$\delta_k \geq L_{\dot \lyap} \pas_{k+1}^2 /2$,
\begin{multline*}
\frac{L_{\dot \lyap} \, \sqrt{C}}{2L(1-\mu) n^{2/3}} \frac{1}{v_\min}
\mathsf{E}_2  \\
\leq \frac{L_{\dot \lyap} \, C}{2L^2 n^{4/3}}
\frac{1}{\kmax \ \alpha_\star(C)} \sum_{k=0}^{\kmax-1} \PE\left[ \|
  \Sronde^{k+1} - \bars \circ \map(\hatS^k)
  \|^2 \right] \\ \leq \frac{1}{\kmax \ \alpha_\star(C)}
\sum_{k=0}^{\kmax-1} \delta_k \PE\left[ \| \Sronde^{k+1} - \bars \circ \map(\hatS^k) \|^2 \right] \eqsp.
\end{multline*}
We then conclude by
\begin{equation}\label{eq:proof:coro:optsample:c}
 \frac{1}{\kmax \, \alpha_\star(C)} = \frac{n^{2/3}}{\kmax}
 \frac{L}{\sqrt{C} (1-\mu) v_\min}\eqsp,
\end{equation}
and use $\sqrt{C} f_n(C,\lambda) = 2 \mu v_\min L / L_{\dot \lyap}$.

\subsubsection{Proof of \autoref{coro:optimal:sampling:Ketn}}
\label{sec:proof:coro:optimal:sampling:Ketn}
It is a follow-up of \autoref{theo:FIEM:NonUnifStop}; the quantities
$\alpha_k, \Lambda_k, \delta_k$ introduced in the statement of
\autoref{theo:FIEM:NonUnifStop} are used below without being defined
again.

We consider the case when, for $\ell\in[\kmax]^\star$,
\[
\beta_\ell \eqdef \frac{1-\lambda}{n^\pb}, \qquad \pas_\ell^2 \eqdef \frac{C}{L^2 n^{2\pc} \kmax^{2\pd}}
\]
for some $\lambda \in \ooint{0,1}$, $C >0$ and $\pb, \pc, \pd$ to be
defined in the proof in such a way that \textit{(i)} $\alpha_k \geq
0$, \textit{(ii)} $\sum_{k=0}^{\kmax-1} \alpha_k$ is positive and as
large as possible. Since there will be a discussion on $(n,C,\lambda)$, we make more explicit the dependence of some constants upon these
quantities: $\alpha_k$ will be denoted by $\alpha_k(n,C,\lambda)$.

With these definitions, we have
\[
\rho \eqdef 1 - \frac{1}{n} + \beta_\ell + L^2 \pas_\ell^2 = 1
- \frac{1}{n} \left(1 - \frac{1-\lambda}{n^{\pb-1}} - \frac{C}{n^{2\pc-1}
  \kmax^{2 \pd}} \right) \eqsp,
\]
and choose $(\pb,\pc,\pd, \lambda, C)$ such that
\begin{equation}
  \label{eq:proof:coro:optsample:Ketn:a}
  \frac{1-\lambda}{n^{\pb-1}} + \frac{C}{n^{2\pc -1}
  \kmax^{2 \pd}} \leq 1 \eqsp,
\end{equation}
which ensures that $\rho\in \ocint{0,1}$. Hence, for any $ k \in
[\kmax-2]$,
\begin{align*}
\Lambda_k & \leq n^\pb \left(\frac{1}{n^\pb} + \frac{1}{1-\lambda}
\right) \frac{C}{L^2 n^{2\pc} \kmax^{2 \pd}}\sum_{j=k+1}^{\kmax-1}
\rho^{j-k-1}  \\
& \leq \left(\frac{1}{n^\pb} + \frac{1}{1-\lambda} \right)
\frac{C}{L^2 n^{2\pc-\pb} \kmax^{2 \pd-1}}\eqsp.
\end{align*}
From this upper bound, we obtain the following lower bound for any $k \in [\kmax-1]$: $\alpha_k(n,C,\lambda) \geq
\underline{\alpha}_n(C,\lambda)$ where
\begin{multline*}
  (n^\pc \kmax^\pd) \ \underline{\alpha}_n(C,\lambda)
  \eqdef \frac{\sqrt{C}}{L} \left( v_\min - \sqrt{C} \frac{L_{\dot \lyap}}{2
    L} \left\{ \frac{1}{n^\pc \kmax^\pd}  \right. \right. \\
  \left. \left.  + \frac{C}{n^{3 \pc -\pb}
    \kmax^{3 \pd-1}} \left(\frac{1}{n^\pb} + \frac{1}{1-\lambda}
  \right) \right\} \right) \eqsp.
  \end{multline*}
Based on this inequality and on
\eqref{eq:proof:coro:optsample:Ketn:a}, we choose $\pb =1$ and $\pc =
\pd = 1/3$; which yields for $n \geq 1$,
\begin{align*}
& (n \kmax)^{1/3} \ \underline{\alpha}_n(C,\lambda) =
  \mathcal{L}_n(C,\lambda) \eqdef \frac{\sqrt{C}L_{\dot \lyap} }{2L^2}
  \left( v_\min \frac{2L}{L_{\dot \lyap}}- \sqrt{C} \tilde
  f_n(C,\lambda) \right) \eqsp, \\ & \tilde f_n(C,\lambda) \eqdef
  \frac{1}{(n \kmax)^{1/3}} + C \left(\frac{1}{n} +
  \frac{1}{1-\lambda} \right) \eqsp.
  \end{align*}
Let $\mu \in \ooint{0,1}$. Fix $\lambda \in \ooint{0,1}$ and $C>0$
such that (see \eqref{eq:proof:coro:optsample:Ketn:a} for the second condition)
\begin{equation}
  \label{eq:proof:coro:optsample:Ketn:b}
 \sqrt{C} \tilde f_n(C,\lambda) = 2 \mu v_\min \frac{L}{L_{\dot \lyap}} \eqsp, \qquad
 \frac{n^{1/3}}{\kmax^{2/3}} \leq \frac{\lambda}{C} \eqsp.
\end{equation}
This implies that
\begin{multline*}
(n \kmax)^{1/3} \alpha_k(n,C,\lambda) \geq (n \kmax)^{1/3}
  \underline{\alpha}_n(C,\lambda) \\ \geq (n \kmax)^{1/3}
  \alpha_\star(C) \eqdef \sqrt{C}(1-\mu) v_\min /L \eqsp.
\end{multline*}
We obtain the upper bound on $\mathsf{E}_1$ by
\begin{equation*}
\mathsf{E}_1 \leq \frac{1}{\kmax \ \alpha_\star(C)}
\sum_{k=0}^{\kmax-1} \alpha_k(n,C,\lambda) \ \PE\left[ \| h(\hatS^k)
  \|^2 \right] \eqsp.
\end{equation*}
For $\mathsf{E}_2$ and since $\delta_k \geq L_{\dot \lyap}
\pas_{k+1}^2 /2$
\begin{multline*}
\frac{L_{\dot \lyap} \, \sqrt{C}}{2(1-\mu)L n^{1/3}}
\frac{1}{\kmax^{1/3} v_\min} \mathsf{E}_2 \leq \frac{L_{\dot \lyap} \,
  C}{2L^2 n^{2/3} \kmax^{2/3}} \frac{1}{\kmax \ \alpha_\star(C)} \ \sum_{k=0}^{\kmax-1} \PE\left[ \|
  \Sronde^{k+1} - \bars \circ \map(\hatS^k) \|^2 \right] \\ \leq
\frac{1}{\kmax \ \alpha_\star(C)} \sum_{k=0}^{\kmax-1} \delta_k
\PE\left[ \| \Sronde^{k+1}- \bars \circ \map(\hatS^k) \|^2 \right]
\eqsp.
\end{multline*}
We then conclude by
\begin{equation}\label{eq:proof:coro:optsample:Ketn:c}
  \frac{1}{\kmax \alpha_\star(C)} = \frac{n^{1/3}}{\kmax^{2/3}}
  \frac{L}{\sqrt{C} (1-\mu) v_\min}\eqsp,
\end{equation}
and use $\sqrt{C} \tilde f_n(C,\lambda) = 2 \mu v_\min L / L_{\dot
  \lyap}$.

\subsubsection{Proof of \autoref{coro:given:sampling}}\label{sec:proof:coro:givensample}
It is a follow-up of \autoref{theo:FIEM:NonUnifStop}; the quantities
$\alpha_k, \Lambda_k, \delta_k$ introduced in the statement of
\autoref{theo:FIEM:NonUnifStop} are used below without being defined
again.

Let $p_0, \ldots, p_{\kmax-1}$ be positive real numbers such that
$\sum_{k=0}^{\kmax-1} p_k=1$.  We consider the case when
\[
\beta_\ell \eqdef \frac{1-\lambda}{n^\pb} \eqsp, \qquad \pas_\ell^2
\eqdef \frac{C_\ell}{L^2 n^{2 \pc} \kmax^{2 \pd}} \eqsp,
\]
for $\lambda \in \ooint{0,1}$, $C_\ell >0$, and $\pb, \pc, \pd$ to be
defined in the proof.

The first step consists in the definition of a function $\mathcal{A}$
and of a family $\mathcal{C}$ of vectors $\underline{C} = (C_1,
\ldots, C_{\kmax}) \in (\rset^+)^{\kmax}$ such that
\[
\alpha_k \geq \mathcal{A}(C_{k+1}) \geq 0 \eqsp, \qquad
\sum_{\ell=0}^{\kmax-1}\mathcal{A}(C_{\ell+1}) >0 \eqsp.
\]
The second step proves that we can find $\underline{C} \in
\mathcal{C}$ such that $p_k = \mathcal{A}(C_{k+1}) /
\sum_{\ell=0}^{\kmax-1} \mathcal{A}(C_{\ell+1})$ for any $k \in [\kmax-1]$.

Such a pair $(\mathcal{A}, \underline{C})$ is not unique, and among the possible
ones, we indicate two strategies, all motivated by making the sum
$\sum_{\ell=0}^{\kmax-1} \mathcal{A}(C_{\ell+1})$ as large as possible.

{\bf Step 1- Definition of the function $\mathcal{A}$.}  With the definition of
the sequences $\pas_\ell$ and $\beta_\ell$, we have
\[
  1-\frac{\rho_{n,\ell}}{n} \eqdef 1 - \frac{1}{n} + \beta_\ell +
  \pas_\ell^2 L^2 = - \frac{1}{n} \left(1 -
  \frac{1-\lambda}{n^{\pb-1}} - \frac{C_\ell}{n^{2\pc-1} \kmax^{2
      \pd}} \right)
\]
and choose $(\pb,\pc,\pd, \lambda, C_\ell)$ such that
\begin{equation}
\label{eq:proof:coro:givensample:a}
\frac{1-\lambda}{n^{\pb-1}} + \frac{C_\max}{n^{2\pc -1} \kmax^{2 \pd}}
< 1 \eqsp, \ \text{where} \ C_\max \eqdef \max_\ell C_\ell \eqsp,
\end{equation}
which ensures that $\rho_{n,\ell} \in \ooint{0,1}$.  Define
\[
\rho_n \eqdef \min_\ell \rho_{n,\ell} = 1 -
\frac{1-\lambda}{n^{\pb-1}} - \frac{C_\max }{n^{2\pc-1} \kmax^{2 \pd}}
\eqsp.
\]Hence, for any $k \in [\kmax-2]$,
\begin{align*}
\Lambda_k & \leq n^\pb \left(\frac{1}{n^\pb} + \frac{1}{1-\lambda}
\right) \frac{1}{L^2 n^{2\pc} \kmax^{2 \pd}} \sum_{j=k+1}^{\kmax-1}
C_{j+1} \left(1-\frac{\rho_n}{n}\right)^{j-k-1} \\ &\leq
\left(\frac{1}{n^\pb} + \frac{1}{1-\lambda} \right) \frac{C_\max}{L^2
  \rho_n} \frac{1}{n^{2\pc-\pb-1} \kmax^{2 \pd}}\eqsp.
\end{align*}
 From this upper bound, we obtain the following lower bound on
 $\alpha_k$, for any $k \in [\kmax-1]$,
\[
\alpha_k \geq \frac{\sqrt{C_{k+1}} }{L n^{\pc} \kmax^{\pd}} \left(
v_\min - \frac{L_{\dot \lyap}}{2 L} \frac{\sqrt{C_{k+1}}}{n^\pc
  \kmax^\pd} - \frac{L_{\dot \lyap}}{2 L} \frac{C_\max
  \sqrt{C_{k+1}}}{\rho_n \, n^{3 \pc -\pb-1} \kmax^{3 \pd}}
\left(\frac{1}{n^\pb} + \frac{1}{1-\lambda} \right) \right) \eqsp.
\]
 Based on this inequality and on \eqref{eq:proof:coro:givensample:a},
 we choose $\pb =1$, $\pc = 2/3$, $\pd = 0$: this yields $\rho_n =
 \lambda -C_\max n^{-1/3}$ and $\alpha_k \geq \underline{\alpha}_k$
 with (see \eqref{eq:fn:statement} for the definition of $f_n$)
\begin{equation}\label{eq:proof:coro:givensample:alpha}
\underline{\alpha}_k \eqdef \frac{\sqrt{C_{k+1}} L_{\dot \lyap}}{2 L^2
  n^{2/3}} \left( v_\min \frac{2L}{L_{\dot \lyap}}- \sqrt{C_{k+1}}
f_n(C_\max,\lambda)\right) \eqsp;
\end{equation}
the condition \eqref{eq:proof:coro:givensample:a} gets into $n^{-1/3}<
\lambda / C_\max$.

Define the quadratic function $x \mapsto \mathcal{A}(x) \eqdef A x ( v_\min -
Bx)$ where
\begin{equation}\label{eq:proof:coro:givensample:def:A:B}
A \eqdef \frac{1}{L n^{2/3}} \eqsp, \quad B \eqdef f_n(C,\lambda)  \frac{L_{\dot \lyap}}{2L}\eqsp;
\end{equation}
we have $\underline{\alpha}_k = \mathcal{A}(\sqrt{C_{k+1}})$.  By
\autoref{lem:polynomial:function} in the supplementary material, $\mathcal{A}$ is increasing on $\ocint{0,
  v_\min /(2B)}$, reaches its maximum at $x_\star \eqdef v_\min /(2B)$
and its maximal value is $\mathcal{A}_\star \eqdef A v^2_\min / (4B)$. In
addition, its inverse $\mathcal{A}^{-1}$ exists on $\ocint{0, \mathcal{A}_\star}$.

{\bf Step 2- Choice of $C_1,\ldots,C_{\kmax}$.}  We are now looking
for $C_1, \ldots, C_{\kmax}$ such that
\[
p_k = \mathcal{A}(\sqrt{C_{k+1}}) / \sum_{\ell =0}^{\kmax-1}
\mathcal{A}(\sqrt{C_{\ell+1}})
\] or equivalently
\begin{equation}
  \label{eq:fromptoC}
\frac{p_k}{p_I} = \frac{\mathcal{A}(\sqrt{C_{k+1}})}{\mathcal{A}(\sqrt{C_I})}, \qquad I
\in \mathrm{argmax}_k p_k \eqsp.
\end{equation}
It remains to fix $\mathcal{A}(\sqrt{C_I})$ in such a way that $\mathcal{A}$ is invertible
on $\ocint{0, \sqrt{C_I}}$. Since we also want $\sum_\ell
\mathcal{A}(\sqrt{C_{\ell+1}}) = \mathcal{A}(\sqrt{C_I}) / p_I$ as large as possible, and
$\mathcal{A}$ is increasing on $\ocint{0,x_\star}$, we choose
\begin{equation}
  \label{eq:cond:Cmax:F}
  \sqrt{C_I} = \sqrt{C_\max} = x_\star = \frac{v_\min}{2B} \eqsp.
\end{equation}
Therefore, $C_\max$ solves the equation $\sqrt{C_\max} = v_\min /
(2B)$ or equivalently
\begin{equation} \label{eq:Cmax:lambda}
\frac{v_\min L }{L_{\dot \lyap}} = \sqrt{C_\max} f_n(C_\max, \lambda) \eqsp,
\end{equation}
under the constraint that $\lambda \in \ooint{0,1}$ and $n^{-1/3} <
\lambda /C_\max$.  When $C_\max$ is fixed, we set
\[
\sqrt{ C_{k+1}} \eqdef \mathcal{A}^{-1}\left(\frac{p_k}{\max_\ell p_l}
\mathcal{A}(\sqrt{C_\max}) \right).
\]
With these definitions, we have (see \eqref{eq:fromptoC})
\[
 \frac{1}{\sum_{k=0}^{\kmax-1} \mathcal{A}(\sqrt{C_{k+1}})} = \frac{\max_\ell
   p_\ell}{\mathcal{A}( \sqrt{C_\max})}.
 \]
 Remember that
\[
\mathcal{A}(\sqrt{C_\max}) = \mathcal{A}(x_\star) = v_\min \sqrt{C_\max} /(2 L n^{2/3}) \eqsp.
\]

 {\bf Step 3. Lower bound on $\delta_k$}
 We write
 \[
\delta_k \geq \frac{L_{\dot \lyap}}{2} \pas_{k+1}^2 \eqsp,
 \]
 so that
 \[
 \frac{\delta_k}{\sum_{k=0}^{\kmax-1} \mathcal{A}(\sqrt{C_{k+1}})} \geq
 \frac{L_{\dot \lyap} L }{ v_\min} n^{2/3} \frac{\max_\ell
   p_\ell}{\sqrt{C_\max}} \pas_{k+1}^2 \eqsp.
 \]

  \subsubsection{Auxiliary results}
  \label{subsec:auxiliary}
  \begin{lemma} \label{lem:control:field}
   Assume A\autoref{hyp:model}, A\autoref{hyp:bars} and
   A\autoref{hyp:Tmap}.  For any $k \geq 0$,
   \[
\PE\left[ \| H_{k+1} \|^2 \right]  = \PE \left[ \| H_{k+1} - h(\hatS^k) \|^2 \right] + \PE \left[ \|  h(\hatS^k) \|^2 \right],
  \]
and \begin{multline*} \PE \left[ \| H_{k+1} - h(\hatS^k) \|^2 \right]
  + \PE\left[\| \Sronde^{k+1}- \bars \circ \map(\hatS^k) \|^2 \right]
  \\ = \PE\left[ \| \bars_{J_{k+1}} \circ \map(\hatS^k) -
    \Smem_{k+1,J_{k+1}} \|^2 \right] \eqsp.
\end{multline*}
  \end{lemma}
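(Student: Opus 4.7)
\textbf{Plan of proof for \autoref{lem:control:field}.}

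The proof rests entirely on conditional-expectation identities, coupled with the Pythagorean decomposition $\PE[\|X\|^2] = \PE[\|X-\PE[X\mid\mathcal{G}]\|^2] + \PE[\|\PE[X\mid\mathcal{G}]\|^2]$ valid for any sub-$\sigma$-algebra $\mathcal{G}$. The first thing I would do is to record two ``oracle'' identities that follow from the fact that, conditionally on $\F_{k+1/2}$, the index $J_{k+1}$ is uniform on $[n]^\star$ and independent of everything else, while $\hatS^k$ and $\Smem_{k+1,\cdot}$ (hence $\Sronde^{k+1} = n^{-1}\sum_i \Smem_{k+1,i}$) are $\F_{k+1/2}$-measurable:
\begin{align*}
\PE\bigl[\bars_{J_{k+1}}\circ\map(\hatS^k)\,\bigm|\,\F_{k+1/2}\bigr] &= \bars\circ\map(\hatS^k), \\
\PE\bigl[\Smem_{k+1,J_{k+1}}\,\bigm|\,\F_{k+1/2}\bigr] &= \Sronde^{k+1}.
\end{align*}
Substituting these into the definition of $H_{k+1}$ immediately gives $\PE[H_{k+1}\mid\F_{k+1/2}] = \bars\circ\map(\hatS^k) - \hatS^k = h(\hatS^k)$; then by the tower property, since $h(\hatS^k)\in\F_k\subset\F_{k+1/2}$, also $\PE[H_{k+1}\mid\F_k] = h(\hatS^k)$.

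For the first displayed equality, I would write $H_{k+1} = (H_{k+1}-h(\hatS^k)) + h(\hatS^k)$, expand the square, and take expectations. The cross term is handled by conditioning on $\F_k$: since $h(\hatS^k)\in\F_k$,
\begin{equation*}
\PE\bigl[\pscal{H_{k+1}-h(\hatS^k)}{h(\hatS^k)}\bigm|\F_k\bigr] = \pscal{\PE[H_{k+1}\mid\F_k]-h(\hatS^k)}{h(\hatS^k)} = 0,
\end{equation*}
which yields $\PE[\|H_{k+1}\|^2] = \PE[\|H_{k+1}-h(\hatS^k)\|^2] + \PE[\|h(\hatS^k)\|^2]$.

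For the second equality, the key observation is the algebraic identity
\begin{equation*}
H_{k+1} - h(\hatS^k) = \bigl(\bars_{J_{k+1}}\circ\map(\hatS^k) - \Smem_{k+1,J_{k+1}}\bigr) - \bigl(\bars\circ\map(\hatS^k) - \Sronde^{k+1}\bigr),
\end{equation*}
obtained by canceling the two $\hatS^k$ terms. Setting $U \eqdef \bars_{J_{k+1}}\circ\map(\hatS^k) - \Smem_{k+1,J_{k+1}}$ and $V \eqdef \bars\circ\map(\hatS^k) - \Sronde^{k+1}$, the two oracle identities above give $\PE[U\mid\F_{k+1/2}] = V$, and $V$ is $\F_{k+1/2}$-measurable. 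Applying the Pythagorean decomposition conditional on $\F_{k+1/2}$ to $U = (U-V) + V$ and then taking total expectation yields $\PE[\|U\|^2] = \PE[\|U-V\|^2] + \PE[\|V\|^2]$, which is exactly the claimed equality.

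The only conceptual step is the identification $V = \PE[U\mid\F_{k+1/2}]$ together with the $\F_{k+1/2}$-measurability of $V$; no estimate on $\bars_i\circ\map$ is used (so A\autoref{hyp:regV:bis} is not needed here), and the proof is essentially bookkeeping around the two oracle identities. I do not foresee any real obstacle.
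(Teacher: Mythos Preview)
Your proposal is correct and follows essentially the same route as the paper: both establish $\PE[H_{k+1}\mid\F_{k+1/2}]=h(\hatS^k)$ for the first identity, and both rewrite $H_{k+1}-h(\hatS^k)$ as $U-\PE[U\mid\F_{k+1/2}]$ with $U=\bars_{J_{k+1}}\circ\map(\hatS^k)-\Smem_{k+1,J_{k+1}}$ for the second, then apply the conditional Pythagorean decomposition. The only cosmetic difference is that you condition on $\F_k$ for the first cross term while the paper conditions on $\F_{k+1/2}$; since $h(\hatS^k)\in\F_k\subset\F_{k+1/2}$, either works.
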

\begin{proof}
 Since $ \PE\left[ H_{k+1} \big \vert \F_{k+1/2}\right] = h(\hatS^k)$,
 we have
    \begin{align*}
     \PE\left[ \|H_{k+1} \|^2 \right] & = \PE\left[ \|H_{k+1} - h(
       \hatS^k) \|^2 \right] + \PE\left[ \| h(\hatS^k) \|^2 \right] \eqsp.
  \end{align*}
   In addition, upon noting that $\Smem_{k+1,i} \in \F_{k+1/2}$ for
   any $i \in [n]^\star$,
   \begin{align*}
     H_{k+1} - h( \hatS^k) & = \bars_{J_{k+1}} \circ \map(\hatS^k) -
     \Smem_{k+1,J_{k+1}} - \bars \circ \map(\hatS^k) + \Sronde^{k+1}
     \\ & =\bars_{J_{k+1}} \circ \map(\hatS^k) - \Smem_{k+1,J_{k+1}} - \PE\left[ \bars_{J_{k+1}} \circ \map(\hatS^k) -
       \Smem_{k+1,J_{k+1}} \Big \vert \F_{k+1/2}\right] \eqsp,
   \end{align*}
   we have
    \begin{multline*}
 \PE \left[ \| H_{k+1} - h(\hatS^k) \|^2 \right] + \PE\left[\|
   \Sronde^{k+1} - \bars \circ \map(\hatS^k) \|^2 \right] \\ =
 \PE\left[ \| \bars_{J_{k+1}} \circ \map(\hatS^k) -
   \Smem_{k+1,J_{k+1}} \|^2 \right] \eqsp.
    \end{multline*}
\end{proof}

\begin{proposition} \label{prop:variance:field}
  Assume A\autoref{hyp:model}, A\autoref{hyp:bars},
  A\autoref{hyp:Tmap} and
  A\autoref{hyp:regV:bis}-\ref{hyp:Tmap:smooth}.  Set $L^2 \eqdef
  n^{-1} \sum_{i=1}^n L_i^2$. Then
  \[
\PE\left[ \| \bars_{J_{1}} \circ \map(\hatS^0) -
    \Smem_{1,J_{1}} \|^2 \right] =0 \eqsp,
  \]
and for any $k \geq 1$ and $\beta_1, \ldots ,\beta_k>0$,
  \begin{multline*}
  \PE\left[ \| \bars_{J_{k+1}} \circ \map(\hatS^k) -
    \Smem_{k+1,J_{k+1}} \|^2 \right] \\ \leq \sum_{j=1}^k \widetilde
  \Lambda_{j,k} \left\{ \ \PE\left[\| h(\hatS^{j-1}) \|^2 \right] -
  \left(1 + \frac{1}{\beta_j} \right)^{-1} \PE\left[\| \Sronde^j -
    \bars \circ \map(\hatS^{j-1}) \|^2 \right] \right\} \eqsp,
  \end{multline*}
  where
\begin{align*}
 \widetilde \Lambda_{j,k} & \eqdef L^2 \, \left(\frac{n-1}{n}
 \right)^{k-j+1} \pas_j^2 \left(1 + \frac{1}{\beta_j} \right) \  \prod_{\ell=j+1}^k \left(1+ \beta_\ell + \pas_\ell^2 L^2 \right).
\end{align*}
By convention, $\prod_{\ell=k+1}^k a_\ell =1$.
  \end{proposition}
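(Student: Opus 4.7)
I proceed by induction on $k$. The base case $\PE[\|\bars_{J_1}\circ\map(\hatS^0) - \Smem_{1,J_1}\|^2]=0$ follows from the initialization $\Smem_{0,i}=\bars_i\circ\map(\hatS^0)$ together with the update rule in \autoref{algo:FIEM}, which force $\Smem_{1,i}=\bars_i\circ\map(\hatS^0)$ for every $i\in[n]^\star$ regardless of the draw of $I_1$, so the integrand is identically zero. For $k\geq 1$, set $T_k\eqdef\PE[\|\bars_{J_{k+1}}\circ\map(\hatS^k)-\Smem_{k+1,J_{k+1}}\|^2]$. Conditioning successively on $\F_{k+1/2}$ (to average out $J_{k+1}$) and on $\F_k$ (to average out $I_{k+1}$), and using that the $j=I_{k+1}$ term vanishes by construction since $\Smem_{k+1,I_{k+1}}=\bars_{I_{k+1}}\circ\map(\hatS^k)$, gives
\[
T_k = \frac{n-1}{n}\,\PE[U_k] \eqsp, \qquad U_k \eqdef \frac{1}{n}\sum_{j=1}^n \|\bars_j\circ\map(\hatS^k)-\Smem_{k,j}\|^2 \eqsp.
\]

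\textbf{One-step recursion.} Inserting $\pm\bars_j\circ\map(\hatS^{k-1})$ inside each norm, applying Young's inequality with parameter $\beta_k$, and using \autoref{hyp:regV:bis}-\ref{hyp:Tmap:smooth} together with $\hatS^k-\hatS^{k-1}=\pas_k H_k$, I obtain the pointwise bound
\[
U_k \leq (1+\beta_k)\,\widetilde A_{k-1} + (1+\beta_k^{-1})\,L^2\pas_k^2\,\|H_k\|^2 \eqsp,
\]
where $\widetilde A_{k-1}\eqdef n^{-1}\sum_j\|\bars_j\circ\map(\hatS^{k-1})-\Smem_{k,j}\|^2$. A parallel conditioning argument on $\F_{k-1}$ shows $\PE[\widetilde A_{k-1}]=T_{k-1}$, and \autoref{lem:control:field} supplies the exact identity $\PE[\|H_k\|^2]=\PE[\|h(\hatS^{k-1})\|^2]+T_{k-1}-\PE[\|\Sronde^k-\bars\circ\map(\hatS^{k-1})\|^2]$. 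Substituting produces a one-step recursion of the form
\[
T_k \leq a_k\, T_{k-1} + b_k\,\PE[\|h(\hatS^{k-1})\|^2] - c_k\,\PE[\|\Sronde^k-\bars\circ\map(\hatS^{k-1})\|^2] \eqsp,
\]
with explicit positive coefficients $a_k,b_k,c_k$ depending on $\beta_k,\pas_k,L,n$.

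\textbf{Iteration and main obstacle.} Unrolling this recursion from $k$ down to $j=1$ produces a sum over $j\in\{1,\dots,k\}$ in which the product of the intermediate $a_\ell$'s forms the $\ell$-product in $\widetilde\Lambda_{j,k}$, while the step-$j$ pair $(b_j,c_j)$ fixes the $j$-th prefactor and the relative weight of the subtracted control-variate variance. The delicate point --- and the main obstacle I anticipate --- is the tight bookkeeping that collapses the iterated coefficient to exactly $\bigl(\tfrac{n-1}{n}\bigr)^{k-j+1}\prod_{\ell=j+1}^k(1+\beta_\ell+\pas_\ell^2 L^2)$ and forces the step-$j$ subtracted term to carry the exact prefactor $(1+\beta_j^{-1})^{-1}$. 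A naive Young's application introduces an extra $\pas_k^2 L^2/\beta_k$ inside $a_k$ and a corresponding overhead on $c_k$; I plan to absorb this excess by reinjecting \autoref{lem:control:field} into the surplus to rewrite it as a further negative contribution of $\PE[\|\Sronde^k-\bars\circ\map(\hatS^{k-1})\|^2]$, which is precisely what the factor $(1+\beta_j^{-1})^{-1}$ encodes in the target. The boundary step $j=1$ is trivial because $\Sronde^1=\bars\circ\map(\hatS^0)$ exactly, so the subtracted variance vanishes at the innermost step of the induction.
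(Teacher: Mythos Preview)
Your base case and the identity $T_k=\tfrac{n-1}{n}\,\PE[U_k]$ are correct, and your one-step recursion via Young's inequality and \autoref{lem:control:field} is a valid bound. However, it does \emph{not} produce the coefficients stated in the proposition, and the fix you propose in your last paragraph does not work. With your decomposition you obtain
\[
T_k \leq \tfrac{n-1}{n}\bigl(1+\beta_k+(1+\beta_k^{-1})\pas_k^2 L^2\bigr)\,T_{k-1} + \tfrac{n-1}{n}(1+\beta_k^{-1})\pas_k^2 L^2\bigl(\PE[\|h(\hatS^{k-1})\|^2]-\PE[\|\Sronde^k-\bars\circ\map(\hatS^{k-1})\|^2]\bigr),
\]
so the recursion multiplier contains $(1+\beta_k^{-1})\pas_k^2 L^2$ instead of $\pas_k^2 L^2$, and the ratio of the $r$-coefficient to the $h$-coefficient is $1$ instead of $(1+\beta_k^{-1})^{-1}$. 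Your plan to ``reinject'' \autoref{lem:control:field} into the surplus $\tfrac{n-1}{n}\beta_k^{-1}\pas_k^2 L^2\,T_{k-1}$ cannot yield a further \emph{negative} $r_k$-term: the identity gives $T_{k-1}=\PE[\|H_k\|^2]-\PE[\|h(\hatS^{k-1})\|^2]+\PE[\|\Sronde^k-\bars\circ\map(\hatS^{k-1})\|^2]$, so the $r_k$-contribution appears with the \emph{wrong sign} and you also recreate a $\PE[\|H_k\|^2]$ term.

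The missing idea is to apply Lipschitz \emph{before} Young, tracking instead the auxiliary quantity $\Delta_k\eqdef\tfrac{n-1}{n^2}\sum_i L_i^2\,\PE[\|\hatS^k-\hatS^{<k,i}\|^2]$ (which dominates $T_k$). One then decomposes $\hatS^k-\hatS^{<k,i}=\pas_k H_k+(\hatS^{k-1}-\hatS^{<k-1,i})\un_{I_k\neq i}$ in the $\hatS$-space, where the second summand is $\F_{k-1/2}$-measurable and the first is \emph{linear} in $H_k$. This linearity lets you use the refined inequality
\[
\PE[\|U+V\|^2]\leq \PE[\|U\|^2]+\beta^{-1}\PE[\|\PE[U\mid\mathcal{G}]\|^2]+(1+\beta)\PE[\|V\|^2]\quad(V\in\mathcal{G}),
\]
so that the extra $\beta_k^{-1}$ lands only on $\|\PE[H_k\mid\F_{k-1/2}]\|^2=\|h(\hatS^{k-1})\|^2$, not on the full $\|H_k\|^2$. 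That is exactly what produces the clean factor $(1+\beta_\ell+\pas_\ell^2 L^2)$ in the product and the $(1+\beta_j^{-1})^{-1}$ on the subtracted variance. In your decomposition the analogue $\bars_j\circ\map(\hatS^k)-\bars_j\circ\map(\hatS^{k-1})$ is nonlinear in $H_k$, so this conditional-mean trick is unavailable.
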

\begin{proof}
   \begin{align*}
    \PE\left[ \| \bars_{J_{1}} \circ \map(\hatS^0) - \Smem_{1,J_{1}}
      \|^2 \right] = \frac{1}{n}\sum_{i=1}^n \PE\left[ \| \bars_{i}
      \circ \map(\hatS^0) - \Smem_{1,i} \|^2 \right] =0 \eqsp.
  \end{align*}
  Let $k \geq 1$. We write (see \eqref{eq:def:Smem})
\begin{multline*}
\Smem_{k+1,i} = \Smem_{k,i} \un_{I_{k+1} \neq i} + \bars_i \circ \map(\hatS^k) \un_{I_{k+1}=i}  \\
= \bars_i \circ \map(\hatS^{< k,i}) \un_{I_{k+1} \neq i} + \bars_i \circ \map(\hatS^k) \un_{I_{k+1}=i},
\end{multline*}
where $\hatS^{<\ell,i}$ is defined by
\eqref{eq:memory:lastupdate}. This yields, by
A\autoref{hyp:regV:bis}-\ref{hyp:Tmap:smooth}
\begin{align} \label{eq:definition:Delta}
\frac{1}{n} \sum_{i=1}^n \PE\left[\| \bars_{i} \circ \map(\hatS^k) -
  \Smem_{k+1,i} \|^2 \right] & = \frac{1}{n} \sum_{i=1}^n \PE\left[\|
  \bars_{i} \circ \map(\hatS^k) - \bars_i \circ \map (\hatS^{< k,i})
  \|^2 \un_{I_{k+1} \neq i} \right] \nonumber \\ & \leq \Delta_k
\eqdef \frac{n-1}{n^2} \sum_{i=1}^n L_i^2 \, \PE\left[\| \hatS^k -
  \hatS^{< k,i} \|^2 \right].
\end{align}
 We have
\[
\Delta_k = \frac{n-1}{n^2} \sum_{i=1}^n L_i^2 \, \PE\left[\| \hatS^k -
  \hatS^{k-1} + \left(\hatS^{k-1} - \hatS^{<k-1,i} \right) \un_{I_{k}
    \neq i} \|^2 \right]
\]
where we used in the last inequality that
\[
\hatS^{< k,i} = \hatS^{k-1} \un_{I_{k} =i} + \hatS^{< k-1,i}
\un_{I_{k} \neq i} \eqsp.
\] Upon noting that
$2 \pscal{\tilde U}{V} \leq \beta^{-1} \|\tilde U\|^2 +\beta \|V\|^2$
for any $\beta >0$, we have for any $\mathcal{G}$-measurable r.v. $V$
\[
  \PE\left[\|U + V \|^2 \right] \leq \PE\left[\|U\|^2 \right] +
  \beta^{-1} \PE\left[\| \PE\left[U \vert \mathcal{G} \right] \|^2
  \right]  + (1+ \beta) \PE\left[\|V\|^2\right] \eqsp.
\]
Applying this inequality with $\beta \leftarrow \beta_k$, $U
\leftarrow \hatS^k - \hatS^{k-1} = \pas_k H_k$ and $\mathcal{G}
\leftarrow \F_{k-1/2}$ yields
\begin{align*}
   \Delta_k & \leq \pas_k^2 \frac{n-1}{n} L^2 \, \PE\left[\| H_k \|^2
     \right] + \frac{\pas_k^2}{\beta_k} \frac{n-1}{n} L^2 \,
   \PE\left[\| \PE\left[ H_k \vert \F_{k-1/2} \right] \|^2 \right]
   \\ &+ (1+\beta_k) \frac{n-1}{n^2} \sum_{i=1}^n L_i^2 \,
   \PE\left[\|\hatS^{k-1} - \hatS^{< k-1,i} \|^2 \un_{I_{k} \neq i}
     \right].
\end{align*}
By Lemma~\ref{lem:control:field}  and \eqref{eq:definition:Delta}, we have
\begin{align*}
\PE\left[\| H_k\|^2 \right] & \leq \PE\left[ \| h( \hatS^{k-1}) \|^2
  \right]  +\Delta_{k-1} -\PE\left[\| \Sronde^k - \bars \circ
  \map(\hatS^{k-1}) \|^2 \right];
\end{align*}
for the second term, we use again $\PE\left[ H_k \vert \F_{k-1/2}
  \right]= h(\hatS^{k-1})$; for the third term, since $I_{k} \in
\F_{k-1/2}$, $\hatS^{k-1} \in \F_{k-1}$, $\hatS^{< k-1,i} \in
\F_{k-1}$, then
\[
\sum_{i=1}^n L_i^2 \, \PE\left[\|\hatS^{k-1} - \hatS^{< k-1,i} \|^2 \un_{I_{k} \neq i}
  \right] = n \Delta_{k-1}.
\]
Therefore, we established
\begin{multline*}
  \Delta_k \leq \left( 1+ \beta_k + \pas_k^2 L^2 \right) \frac{n-1}{n}
  \Delta_{k-1}  \\
+ \pas_k^2 (1 + \frac{1}{\beta_k}) L^2 \ \frac{n-1}{n}
  \,\PE\left[\| h(\hatS^{k-1}) \|^2 \right] \\ - \pas_k^2 L^2
  \frac{n-1}{n}\PE\left[\| \Sronde^k - \bars \circ \map(\hatS^{k-1})
    \|^2 \right].
\end{multline*}
The proof is then concluded by standard algebra upon noting that
$\Delta_0 = 0$.
\end{proof}



\clearpage
\newpage

\begin{center}
\large{\bf Supplementary material to "Fast Incremental Expectation Maximization for finite-sum  optimization: nonasymptotic convergence" }
\end{center}

This {\em supplementary material} provides
\begin{enumerate}
\item  proofs of some comments.
\item details and additional analyses for the numerical illustration
  on Gaussian Mixture Models (\autoref{sec:mixtureGaussian}).
\end{enumerate}

{\bf Notations.} Vectors are column vectors. For $a,b \in \rset^d$,
$\pscal{a}{b} = a^T b$ is the Euclidean scalar product; $\pscal{a}{b}$
denotes the standard Euclidean scalar product on $\rset^\ell$, for
$\ell \geq 1$; and $\|a\|$ the associated norm. For a matrix $A$,
$A^T$ is its transpose.

For a non negative integer $n$, $[n] \eqdef \{0, \cdots, n\}$ and
$[n]^\star \eqdef \{1, \cdots, n\}$. $a \wedge b$ is the minimum of
two real numbers $a,b$.

For two $p \times p$ matrices $A,B$, $\pscal{A}{B}$
is the trace of $B^T A$: $\pscal{A}{B} \eqdef \mathrm{Tr}(B^T
A)$. $\Id_p$ stands for the $p \times p$ identity matrix. $\otimes$
stands for the Kronecker product. $\mathcal{M}_p^+$ denotes the set of
the invertible $p \times p$ covariance matrices. $\mathrm{det}(A)$ is
the determinant of the matrix $A$.

$\mathcal{N}_p(\mu,\Gamma)$ denotes a $\rset^p$-valued Gaussian
distribution, with expectation $\mu$ and covariance matrix $\Gamma$.

\section{EM as a Majorize-Minimization algorithm}
\label{supp:sec:details2}
The following result shows that $\{\overline{F}(\cdot, \param'),
\param' \in \Param \}$ is a family of majorizing function of the
objective function $F$ from which a Majorize-Minimization approach for
solving \eqref{eq:problem} can be derived under
A\autoref{hyp:Tmap}. This MM algorithm is EM (see
\autoref{prop:MMproperty:item3}).
\begin{proposition}
  \label{prop:MMproperty}
  Assume A\autoref{hyp:model} and A\autoref{hyp:bars}.
  \begin{enumerate}
  \item  \label{prop:MMproperty:item1} For any $i \in [n]^\star$ and $\param' \in \Param$ we have 
   \[
\loss{i}(\cdot) \leq - \pscal{\bars_i(\param')}{\phi(\cdot)}+
    \mathcal{C}_i(\param') \eqsp.
\]
    \item \label{prop:MMproperty:item2} For any $\param' \in \Param$,
      we have $F \leq \overline{F}(\cdot, \param')$ and
      $\overline{F}(\param',\param') = F(\param')$.
      \item \label{prop:MMproperty:item3} Assume also
        A\autoref{hyp:Tmap}. Given $\param^0 \in \Param$, the sequence
        defined by $\param^{k+1} \eqdef \map \circ \bars (\param^k)$
        for any $k \geq 0$, satisfies $F(\param^{k+1}) \leq
        F(\param^k)$.
        \end{enumerate}
\end{proposition}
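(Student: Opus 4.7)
The three statements are the classical ingredients of the Majorize-Minimization justification of EM, and I would build them sequentially from a single Jensen-type inequality.

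For \ref{prop:MMproperty:item1}, the plan is to rewrite $\loss{i}(\param)$ as an integral against the probability measure $p_i(z;\param')\mu(\rmd z)$ rather than against $\tilde h_i(z)\mu(\rmd z)$. Concretely, from the definition of $p_i$ one has $\tilde h_i(z)=p_i(z;\param')\exp(-\pscal{s_i(z)}{\phi(\param')}-\loss{i}(\param'))$, so
\begin{equation*}
\loss{i}(\param)=\loss{i}(\param')-\log\!\int_\Zset\exp\!\bigl(\pscal{s_i(z)}{\phi(\param)-\phi(\param')}\bigr)\,p_i(z;\param')\,\mu(\rmd z).
\end{equation*}
Applying Jensen's inequality to the convex function $-\log$ and pulling the scalar product out of the integral yields
\begin{equation*}
\loss{i}(\param)\leq\loss{i}(\param')-\pscal{\bars_i(\param')}{\phi(\param)-\phi(\param')},
\end{equation*}
which is exactly the claim after rearrangement using the definition of $\mathcal{C}_i(\param')$. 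This is the only real analytic step of the proof; well-posedness of everything (finiteness of $\loss{i}$, existence of $\bars_i(\param')$) is granted by A\autoref{hyp:model} and A\autoref{hyp:bars}, so no technical obstacle arises.

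For \ref{prop:MMproperty:item2}, I would simply average the bound of \ref{prop:MMproperty:item1} over $i\in[n]^\star$, add $\R(\param)$ to both sides, and recognize the right-hand side as $\overline{F}(\param,\param')$. The tangency $\overline{F}(\param',\param')=F(\param')$ comes for free, because at $\param=\param'$ the integrand in the Jensen step is identically $1$, so the inequality is an equality coordinate-wise in $i$.

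For \ref{prop:MMproperty:item3}, the plan is the usual MM descent argument. The key observation is that, up to the additive constant $n^{-1}\sum_i\mathcal{C}_i(\param')$ (which depends only on $\param'$), we have $\overline{F}(\param,\param')=\L(\bars(\param'),\param)$ by \eqref{eq:definition-bar-L} and the definition of $\overline{F}$. Hence, under A\autoref{hyp:Tmap}, the map $\param\mapsto\overline{F}(\param,\param')$ has its unique global minimizer at $\map\circ\bars(\param')$. Chaining the minimization property with \ref{prop:MMproperty:item2} gives
\begin{equation*}
F(\param^{k+1})\leq\overline{F}(\param^{k+1},\param^k)\leq\overline{F}(\param^k,\param^k)=F(\param^k),
\end{equation*}
which is the monotonicity claim. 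No subtle point arises here; the whole difficulty of the proposition is concentrated in the Jensen step of \ref{prop:MMproperty:item1}.
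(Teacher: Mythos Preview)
Your proposal is correct and follows essentially the same approach as the paper: the Jensen inequality in \ref{prop:MMproperty:item1} (which you make more explicit by rewriting $\tilde h_i$ in terms of $p_i(\cdot;\param')$), the averaging over $i$ in \ref{prop:MMproperty:item2}, and the MM chain $F(\param^{k+1})\leq\overline{F}(\param^{k+1},\param^k)\leq\overline{F}(\param^k,\param^k)=F(\param^k)$ in \ref{prop:MMproperty:item3} all match the paper's proof.
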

\begin{proof}
\textit{(proof of \autoref{prop:MMproperty:item1}).}  From the
Jensen's inequality, it holds
  \begin{multline*}
\loss{i}(\param) - \loss{i}(\param') \leq - \int_\Zset
\pscal{\s_i(z)}{\phi(\param) - \phi(\param')} \ p_i(z;\param') \,
\mu(\rmd z)  \\
= -\pscal{\bars_i(\param')}{\phi(\param) - \phi(\param')}  \eqsp;
\end{multline*}
  which concludes the proof.

  \textit{(proof of \autoref{prop:MMproperty:item2})} From
  \eqref{eq:problem} and \autoref{prop:MMproperty:item1}, it holds
\[
F(\param) \leq -\pscal{\bars(\param')}{\phi(\param)} +\frac{1}{n}
\sum_{i=1}^n \mathcal{C}_i(\param') + \R(\param) \eqsp.
\]

\textit{(proof of \autoref{prop:MMproperty:item3})} From
\autoref{prop:MMproperty:item2} and the definition of $\map$, it holds
\[
F(\map \circ \bars (\param^k)) \leq \overline{F}(\map \circ \bars (\param^k),\param^k) \leq
\overline{F}(\param^k,\param^k) = F(\param^k) \eqsp.
\]
\end{proof}

\section{Proof of the comments in \autoref{sec:FIEM:complexity}}
\subsection{Comments in Section~\autoref{sec:FIEM:errorrate:case1}}
\label{secApp:proof:coro:optimal:sampling}
\paragraph{$\bullet$  The choice $\lambda = C $.} Since $n \geq 2$, the second
condition in \eqref{eq:proof:coro:optsample:b} is satisfied with
$\lambda =C$. \eqref{eq:proof:coro:optsample:c} is a decreasing
function of $C$ so that by the first condition in
\eqref{eq:proof:coro:optsample:b}, $C$ solves
\[
\sqrt{C} \left( \frac{1}{n^{2/3}}+ \frac{1}{1 - n^{-1/3}}
\left(\frac{1}{n} + \frac{1}{1-C} \right) \right) = 2 \mu v_\min
\frac{L}{L_{\dot \lyap}} \eqsp.
\]
A solution exists in $\ooint{0,1}$ and is unique (see
\autoref{lem:trivial2}); it is denoted by $C^\star$. Since the LHS is
lower bounded by $C \mapsto \sqrt{C} (1-C)^{-1}$ on $\ooint{0,1}$, $C^\star$ is
upper bounded by $C^+ \in \ooint{0,1}$ solving
\[
\sqrt{C} = 2 \mu v_\min
\frac{L}{L_{\dot \lyap}} (1-C) \eqsp.
\]
This yields $C^+ = (\sqrt{1+4A^2}-1)/(2A)$ with $A \eqdef 2 \mu v_\min
L / L_{\dot \lyap}$. Note that $f_n(C^\star,C^\star) \leq
f_2(C^\star,C^\star) \leq f_2(C^+, C^+)$; for the second inequality,
\autoref{lem:trivial2} is used again.

\paragraph{$\bullet$ Another choice, for any $n$ large enough.}
When $n \to \infty$, we have 
\[
\mathcal{L}_n(C,\lambda) \uparrow \mathcal{L}_\infty(C, \lambda)
\eqdef \frac{L_{\dot \lyap} \sqrt{C}}{2L^2} \left( v_\min \frac{2L}{L_{\dot \lyap}}-
\frac{C^{3/2}}{\lambda} \frac{1}{1-\lambda} \right) \eqsp.
\]
By \autoref{lem:trivial3} applied with $A \leftarrow v_\min/L$ and $B \leftarrow 2
L_{\dot \lyap} / L^2$, we have $\mathcal{L}_\infty(C,\lambda) \leq
\mathcal{L}_\infty(C_\star,\lambda_\star)$ where
\begin{align*}
\lambda_\star & \eqdef \frac{1}{2} \eqsp, \qquad C_\star \eqdef
\frac{1}{4} \left( \frac{v_\min L}{L_{\dot \lyap}}\right)^{2/3}\eqsp, \\
\mathcal{L}_\infty(C_\star,\lambda_\star) & = \frac{3}{8}
\frac{v_\min}{L} \left( \frac{v_\min L}{L_{\dot \lyap}}\right)^{1/3}
\eqsp.
\end{align*}
In the proof of \autoref{coro:optimal:sampling}, we established that
for any $\lambda \in \ooint{0,1}$ and $C>0$ such that $\lambda - C
n^{-1/3} \in \ooint{0,1}$, we have
\[
n^{2/3} \alpha_k(n,C, \lambda) \geq n^{2/3}
\underline{\alpha}_n(C,\lambda) \geq \mathcal{L}_n(C,\lambda) \eqsp.
\]
Set $\tilde N_\star \eqdef (v_\min L/ L_{\dot \lyap})^2/8$; for any $n
\geq \tilde N_\star$, we have $\lambda_\star - C_\star n^{-1/3} \in
\ooint{0,1}$ so that
\[
n^{2/3} \alpha_k(n,C_\star, \lambda_\star) \geq n^{2/3}
\underline{\alpha}_n(C_\star,\lambda_\star) \geq
\mathcal{L}_n(C_\star,\lambda_\star) \eqsp.
\]
This implies that for any $k \in [\kmax-1]$,
\[
  \lim_n n^{2/3} \alpha_k(n, C_\star, \lambda_\star) \geq \lim_n
  n^{2/3} \underline{\alpha}_n(C_\star, \lambda_\star) \geq
  \mathcal{L}_\infty(C_\star,\lambda_\star) > 0 \eqsp,
\]
thus showing that for any $n$ large enough - let us say $n \geq
N_\star$ (with $N_\star$ which only depends upon $L, L_{\dot \lyap},
v_\min$), we have for any $k \in [\kmax-1]$,
\[
 n^{2/3} \alpha_k(n, C_\star, \lambda_\star) \geq
 \mathcal{L}_\infty(C_\star,\lambda_\star) > 0 \eqsp.
 \]
 Therefore, we first write
 \[
\mathsf{E}_1 \leq \frac{1}{\kmax}
\frac{n^{2/3}}{\mathcal{L}_\infty(C_\star,\lambda_\star)}
\sum_{k=0}^{\kmax-1} \alpha_k(n,C_\star, \lambda_\star) \PE\left[
  \|h(\hatS_k)\|^2 \right] \eqsp;
\]
we then write, by using $\delta_k \geq \pas_{k+1}^2 L_{\dot \lyap}/2$
and $\pas_{k+1}^2 = C_\star / (L^2 n^{4/3})$,
\begin{align*}
\frac{1}{3 n^{2/3}} \left(\frac{L_{\dot \lyap}}{L v_\min}
\right)^{2/3} \mathsf{E_2} &= \frac{n^{2/3} L_{\dot \lyap}}{2
  \mathcal{L}_\infty(C_\star,\lambda_\star)} \pas_{k+1}^2 \mathsf{E_2}
\\ & \leq \frac{1}{\kmax}
\frac{n^{2/3}}{\mathcal{L}_\infty(C_\star,\lambda_\star)}
\sum_{k=0}^{\kmax-1} \delta_k(n,C_\star, \lambda_\star) \PE\left[ \|
  \Sronde^{k+1} - \bars \circ \map(\hatS^k) \|^2 \right]
\end{align*}
from which we obtain
\[
\mathsf{E}_1 + \frac{1}{3 n^{2/3}} \left(\frac{L_{\dot \lyap}}{L
  v_\min} \right)^{2/3} \mathsf{E_2} \leq \frac{1}{\kmax}
\frac{n^{2/3}}{\mathcal{L}_\infty(C_\star,\lambda_\star)} \Delta \lyap
\eqsp.
\]
This concludes the proof.

\subsection{Comments in Section~\autoref{sec:FIEM:errorrate:case2}}
\label{secApp:errorrate:case2}
\paragraph{Complexity.}
For $\tau>0$, set $C = \lambda \tau$. Then for any $\lambda \in
\ooint{0,1}$,
\[
\sqrt{\lambda \tau } \tilde f_n(\lambda \tau,\lambda) =
\frac{\sqrt{\lambda} \sqrt{\tau}}{(n \kmax)^{1/3}} + \lambda^{3/2}
\tau^{3/2} \left(\frac{1}{n} + \frac{1}{1-\lambda} \right) \eqsp,
\]
which is a continuous increasing function of $\lambda$, which tends to
zero when $\lambda \to 0$ and to $ + \infty$ when $\lambda \to 1$.
Hence, there exists an unique $\lambda_\star \in \ooint{0,1}$,
depending upon $L, L_{\dot \lyap},v_\min, \tau, \mu$ and $n ,\kmax$
such that 
\[
\sqrt{\lambda_\star \tau } \tilde f_n(\lambda_\star
\tau,\lambda_\star) = 2 \mu v_\min L / L_{\dot
  \lyap} \eqsp.
\] Note however that since
$\sqrt{\lambda \tau } \tilde f_n(\lambda \tau,\lambda) \geq
\lambda^{3/2} \tau^{3/2} / (1-\lambda)$ for any
$\lambda \in \ooint{0,1}$, then
$\lambda_\star$ is upper bounded by the unique solution
$\lambda^+ \in \ooint{0,1}$ satisfying
$ L_{\dot \lyap} \lambda^{3/2} \tau^{3/2} / (2L(1-\lambda)) = \mu
v_\min$ (see \autoref{lem:trivial:4}). Such a solution
$\lambda^+$ only depends upon $L, L_{\dot \lyap},v_\min, \tau,
\mu$. Hence, for any $\tau>0$,
\[
\tilde f_n(\lambda \tau, \lambda) \leq \sup_{n,\kmax} \tilde
f_n(\lambda^+(\tau) \tau, \lambda^+(\tau))
\]
and the RHS does not depend on $n, \kmax$. This inequality implies that 
\[
\frac{n^{1/3}}{\kmax^{2/3}} \frac{L \, \tilde f_n(\lambda
  \tau,\lambda)}{ \mu (1-\mu) v_\min^2}\leq
\frac{n^{1/3}}{\kmax^{2/3}} M \qquad M \eqdef \frac{L \, }{ \mu
  (1-\mu) v_\min^2} \sup_{n,\kmax} \tilde f_n(\lambda^+(\tau) \tau,
\lambda^+(\tau))\eqsp.
\]
Hence, there exists $M>0$ depending upon $L, L_{\dot \lyap},v_\min,
\tau, \mu$ such that for any $\varepsilon > 0$,
\begin{multline*}
  \kmax \geq \left( \tau^{3/2} \sqrt{n} \right) \vee \left( M \sqrt{n}
    \varepsilon^{-3/2}\right) \Longrightarrow
  \frac{n^{1/3}}{\kmax^{2/3}} \frac{L \, \tilde f_n(\lambda
    \tau,\lambda)}{ \mu (1-\mu) v_\min^2} \leq \varepsilon \eqsp.
\end{multline*}

\paragraph{Another choice of $(\lambda,C)$, for any $n$ large enough.}
In this section, we consider that there exists $\tau>0$ such that
$\sup_{n, \kmax} n^{1/3} \kmax^{-2/3} \leq \tau$, that $n \to \infty$
and that $n \kmax \to \infty$. In this asymptotic, we have
$\mathcal{L}_n(C,\lambda) \uparrow \mathcal{L}_\infty(C,\lambda)$
where
\[
\mathcal{L}_\infty(C, \lambda) \eqdef \frac{\sqrt{C}}{L} \left(
v_\min - \frac{L_{\dot \lyap}}{2 L} \frac{C^{3/2}}{1-\lambda} \right)
\eqsp.
\]
For any $(C,\lambda) \in \rset^+ \times \ooint{0,1}$ s.t. $\tau \leq
\lambda / C$, we have $\mathcal{L}_\infty(C, \lambda) \leq
\mathcal{L}_\infty(C_\star(\lambda), \lambda)$ where
\begin{align*}
C_\star(\lambda) & \eqdef \left( \frac{v_\min L}{2 L_{\dot
    \lyap}}\right)^{2/3} (1-\lambda)^{2/3} \eqsp;
\end{align*}
see \autoref{lem:trivial3}.  The condition $C \tau \leq \lambda$
implies that this inequality holds for any $\lambda \in
\coint{\lambda_\star,1}$ where $\lambda_\star$ is the unique solution
of (see \autoref{lem:trivial:4})
\begin{align*}
  & \left(\frac{v_\min L}{2 L_{\dot \lyap}} \right)^2
  (1-\lambda_\star)^2 = \lambda^3_\star /\tau^3 \eqsp.
\end{align*}
Since $\mathcal{L}_\infty(C_\star(\lambda),\lambda) = \frac{3}{4}
\left( \frac{v_\min^4 }{ 2 L^2 L_{\dot \lyap}}\right)^{1/3}
(1-\lambda)^{1/3}$, this quantity is maximal by choosing $\lambda =
\lambda_\star$. Therefore, for any $(C,\lambda) \in \rset^+ \times
\ooint{0,1}$, s.t. $\tau \leq \lambda/C$, we have
\[
\lim_n n^{1/3} \kmax^{1/3}
\underline{\alpha}_n(C_\star(\lambda_\star),\lambda_\star) =
\mathcal{L}_\infty(C_\star(\lambda_\star),\lambda_\star) > 0 \eqsp.
\]
For any $n$ large enough (with a bound which only
depends upon $L, L_{\dot \lyap}, v_\min, \tau$), we have
\[
\frac{1}{\kmax \alpha_\star(C_\star,\lambda_\star)} = \frac{n^{1/3}}{\kmax^{2/3}} \frac{4}{3}
\left( \frac{ 2 L^2 L_{\dot \lyap}}{v_\min^4 }\right)^{1/3}
(1-\lambda_\star)^{-1/3} \eqsp.
\]
First, we write
\[
\mathsf{E}_1 \leq \frac{n^{1/3}}{\kmax^{2/3} \,
  \mathcal{L}_\infty(C_\star(\lambda_\star),\lambda_\star) }
\sum_{k=0}^{\kmax-1} \alpha_k(n,C_\star(\lambda_\star), \lambda_\star)
\, \PE\left[ \| h(\hatS^k)\|^2\right] \eqsp.
\]
Then we write that by using $\delta_k \geq \pas_{k+1}^2 L_{\dot
  \lyap}/2$ and $\pas_{k+1}^2 = C_\star(\lambda_\star) / (L^2 n^{2/3}
\kmax^{2/3})$ that
\begin{align*}
 & \frac{2^{10/3} (1-\lambda_\star)^{-1/3} \mu^2 }{\tilde
    f_n^2(\lambda_\star \tau,\lambda_\star)} \left( \frac{L
    v_\min}{L_{\dot \lyap}} \right)^{2/3} \frac{1}{(n\kmax)^{1/3}}
  \mathsf{E_2} \\ &= \frac{L_{\dot \lyap}}{2 \,
    \mathcal{L}_\infty(C_\star(\lambda_\star),\lambda_\star)}
  \pas_{k+1}^2  n^{1/3} \kmax^{1/3} \mathsf{E_2} \\ & \leq \frac{n^{1/3} \kmax^{1/3}}{\kmax \,
    \mathcal{L}_\infty(C_\star(\lambda_\star),\lambda_\star)
  }\sum_{k=0}^{\kmax-1} \delta_k(n,C_\star(\lambda_\star),
  \lambda_\star) \PE\left[ \| \Sronde^{k+1} - \bars \circ
    \map(\hatS^k) \|^2 \right]
\end{align*}
We then conclude that
\begin{multline*}
  \mathsf{E}_1 + \frac{2^{10/3} (1-\lambda_\star)^{-1/3} \mu^2
  }{\tilde f_n^2(\lambda_\star \tau,\lambda_\star)} \left( \frac{L
    v_\min}{L_{\dot \lyap}} \right)^{2/3} \frac{1}{(n\kmax)^{1/3}}
  \mathsf{E_2} \\ \leq \frac{n^{1/3}}{\kmax^{2/3} \,
    \mathcal{L}_\infty(C_\star(\lambda_\star),\lambda_\star) }
  \ \Delta \lyap \eqsp. 
  \end{multline*}

\subsection{Comments in Section~\autoref{sec:FIEM:errorrate:case3}}
\label{secApp:errorrate:case3}
\paragraph{Case $\lambda = C$.} A simple strategy is to choose
$n \geq 2$ and $C_\max = \lambda$ solution of $v_\min/2 = \sqrt{C}
f_n(C,C)$. This solution exists and is unique, and it is upper bounded
by a quantity $C^+$ which depends only on $L, L_{\dot \lyap}, v_\min$
- see \autoref{secApp:proof:coro:optimal:sampling} for a similar
discussion.

\paragraph{Case $\lambda=1/2$.}
$f_n(C,\lambda)$ controls the errors $\mathsf{E}_i$ and we can choose
$\lambda \in \ooint{0,1}$ and then $C>0$ such that this quantity is
minimal; to make the computations easier, we minimize w.r.t. $\lambda$
the function $\lim_n f_n(C,\lambda)$: it behaves like $\lambda^{-1}
(1-\lambda)^{-1}$ so that we set $\lambda=1/2$.  The equation $
\sqrt{C} f_n(C,1/2) = v_\min L/L_{\dot \lyap}$ possesses an unique
solution $C_n$ in $\ooint{0, n^{1/3}/2}$.

Upon noting that $x \mapsto \sqrt{x} f_n(x,1/2)$ is lower bounded by
$x \mapsto 4 x^{3/2}$, $C_n$ satisfies
\[
C_n \leq \left(\frac{v_\min L}{4L_{\dot \lyap}} \right)^{2/3} \eqsp,
\]
thus showing that the constraint $n^{-1/3} < \lambda/C_n = 1/(2C_n)$
is satisfied for any $n$ such that $8 n> \left(v_\min L/L_{\dot \lyap}
\right)^{2}$.

\section{Technical Lemmas}
\begin{lemma} \label{lem:polynomial:function}
  Let $A,B, v>0$ and define $F(x) \eqdef A x (v - Bx)$ on
  $\rset$. Then the roots of $F$ are $\{0, v/B \}$; $F$ is positive on
  $\ooint{0, v/B}$; the maximal value of $F$ is $A v^2/(4B)$ and it is
  reached at $x_\star \eqdef v/2B$.
\end{lemma}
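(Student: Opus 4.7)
The statement is a routine fact about a downward-opening parabola, so the plan is just to verify each of the three claims directly from the factored form and from elementary calculus.

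First, I would read off the roots. Since $F(x) = Ax(v-Bx)$ is a product of two linear factors and $A > 0$, the zero set consists exactly of the roots of each factor: $x = 0$ and $x = v/B$. The sign claim then follows immediately by inspection: for $x \in \ooint{0,v/B}$ we have $x > 0$ and $v - Bx > 0$ (because $Bx < v$), hence both factors are positive and $A > 0$, so $F(x) > 0$ on this interval.

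For the maximum, I would expand $F(x) = Avx - ABx^2$ and compute $F'(x) = Av - 2ABx$. The unique critical point is $x_\star = v/(2B)$, and since $F''(x) = -2AB < 0$, the function is strictly concave and $x_\star$ is the global maximum. Evaluating,
\[
F(x_\star) = A \cdot \frac{v}{2B} \left( v - B \cdot \frac{v}{2B}\right) = \frac{Av}{2B} \cdot \frac{v}{2} = \frac{Av^2}{4B},
\]
which matches the claimed maximal value.

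There is no real obstacle here; the only thing to be careful about is ensuring the positivity conditions $A,B,v>0$ are invoked in each step (to guarantee that $v/B$ and $v/(2B)$ are positive, that $F'' < 0$ so the critical point is a maximum rather than a minimum, and that $F>0$ strictly on the open interval). The entire proof therefore fits in a few lines.
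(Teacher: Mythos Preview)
Your proof is correct and complete. The paper does not supply a proof for this lemma at all (it is listed among the technical lemmas and treated as self-evident), so your short verification via factoring for the roots and sign, and the first/second derivative test for the maximum, is exactly what is needed.
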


\begin{lemma}
  \label{lem:trivial2}
Let $a,b>0$ and define $F$ on $\ooint{0,1}$ by $F(x) = \sqrt{x} (a +
b/(1-x))$. $F$ is increasing on $\ooint{0,1}$ and for any $v>0$, there
exists an unique $x \in \ooint{0,1}$ such that $F(x) = v$.
\end{lemma}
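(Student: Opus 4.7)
The plan is to establish that $F$ is continuous and strictly increasing on $\ooint{0,1}$, with limits $0$ and $+\infty$ at the endpoints, so that the intermediate value theorem plus injectivity gives both existence and uniqueness of the preimage of any $v>0$.

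First, I would note that $F$ is a product/sum of functions that are $C^1$ on $\ooint{0,1}$ (namely $x\mapsto \sqrt{x}$, $x\mapsto a$ and $x\mapsto b/(1-x)$), so $F$ is $C^1$ there. Differentiating gives
\[
F'(x) = \frac{1}{2\sqrt{x}}\left(a + \frac{b}{1-x}\right) + \sqrt{x}\,\frac{b}{(1-x)^2}\eqsp.
\]
Since $a,b>0$ and $x\in\ooint{0,1}$, each of the three terms on the right is strictly positive. Hence $F'(x)>0$ on $\ooint{0,1}$, which proves that $F$ is strictly increasing.

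Next, I would read off the limits at the endpoints directly from the expression $F(x) = \sqrt{x}(a+b/(1-x))$: as $x \to 0^+$, $\sqrt{x}\to 0$ while $a+b/(1-x)\to a+b$ is bounded, so $F(x)\to 0$; as $x\to 1^-$, $\sqrt{x}\to 1$ and $b/(1-x)\to +\infty$, so $F(x)\to+\infty$. Given any $v>0$, the continuous strictly increasing function $F: \ooint{0,1} \to \ooint{0,+\infty}$ is then a bijection (by the intermediate value theorem for existence, and strict monotonicity for uniqueness), so there is a unique $x\in\ooint{0,1}$ with $F(x)=v$.

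There is no real obstacle here: the statement is an elementary calculus fact. The only thing to be mildly careful about is the behavior of $F'$ near $0$, where the factor $1/(2\sqrt{x})$ blows up; but this only reinforces positivity of $F'$ and does not affect monotonicity on the open interval $\ooint{0,1}$.
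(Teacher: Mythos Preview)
Your proof is correct and follows essentially the same approach as the paper: continuity plus strict monotonicity plus the endpoint limits $F(0^+)=0$ and $F(1^-)=+\infty$, then the intermediate value theorem. The paper's version is terser (it simply asserts that $F$ is continuous and increasing without computing $F'$), whereas you make the monotonicity explicit via the derivative.
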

\begin{proof}  $x
\mapsto F(x)$ is continuous and increasing on $\ooint{0,1}$, tends to
zero when $x \to 0$ and to $+\infty$ when $x \to 1$; therefore for any
$v>0$, there exists an unique $x\in \ooint{0,1}$ such that $F(x) = v$.
\end{proof}

\begin{lemma}
  \label{lem:trivial3}
  Let $A,B >0$.  The function $F: x \mapsto Ax - B x^4$ defined on
  $\ooint{0,\infty}$ reaches its unique maximum at $x_\star \eqdef A^{1/3}
  B^{-1/3} 4^{-1/3}$ and $F(x_\star) =3 A^{4/3} / (B 4^4)^{1/3}$.
\end{lemma}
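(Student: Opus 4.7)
The plan is a direct calculus argument. First, I would differentiate: since $F$ is smooth on $\ooint{0,\infty}$, we have $F'(x) = A - 4B x^3$, which is a strictly decreasing function of $x$ on $\ooint{0,\infty}$ (because its derivative $-12 B x^2$ is strictly negative for $x>0$). Setting $F'(x)=0$ gives the unique critical point
\[
x_\star = \left( \frac{A}{4B}\right)^{1/3} = \frac{A^{1/3}}{4^{1/3} B^{1/3}} \eqsp.
\]

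Next, I would argue that $x_\star$ is the unique global maximum on $\ooint{0,\infty}$. Since $F'$ is strictly decreasing with a single zero at $x_\star$, it is positive on $\ooint{0, x_\star}$ and negative on $\ooint{x_\star, \infty}$. Hence $F$ is strictly increasing on $\ooint{0,x_\star}$ and strictly decreasing on $\ooint{x_\star, \infty}$, so $x_\star$ is indeed the unique maximizer; one can also note for completeness that $F(x) \to -\infty$ as $x \to +\infty$ and $F(0^+) = 0$, ruling out maxima at the boundary of the domain.

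Finally, I would compute $F(x_\star)$ by substitution: using $B x_\star^3 = A/4$, we obtain
\[
F(x_\star) = A x_\star - B x_\star^4 = A x_\star - (B x_\star^3) \, x_\star = A x_\star - \frac{A}{4} x_\star = \frac{3A}{4} x_\star \eqsp,
\]
and substituting the expression for $x_\star$ gives
\[
F(x_\star) = \frac{3A}{4} \cdot \frac{A^{1/3}}{4^{1/3} B^{1/3}} = \frac{3 A^{4/3}}{4^{4/3} B^{1/3}} = \frac{3 A^{4/3}}{(4^4 B)^{1/3}} \eqsp,
\]
which is the announced value. There is no real obstacle here — the only point worth being careful about is the algebraic manipulation of the fractional exponents when rewriting $4 \cdot 4^{1/3} = 4^{4/3} = (4^4)^{1/3}$ in the final simplification.
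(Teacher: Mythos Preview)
Your proof is correct and follows essentially the same approach as the paper: compute $F'(x)=A-4Bx^3$, observe $F''(x)=-12Bx^2<0$ so $F'$ is decreasing, locate the unique zero of $F'$ at $x_\star=(A/(4B))^{1/3}$, and conclude that $F$ is increasing then decreasing. The paper's proof actually omits the explicit evaluation of $F(x_\star)$, so your substitution step is a small addition rather than a deviation.
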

\begin{proof}
$F'(x) = A -4 B x^3$ and $F''(x) = - 12 B x^2 < 0$; hence, $F'$ is
  decreasing. $F'(x) =0$ iff $x^3 = A/(4B)$, showing $F'>0$ on
  $\ooint{0,x_\star}$ with $x_\star \eqdef A^{1/3}/(4B)^{1/3}$. Hence,
  $F$ is increasing on $\ccint{0,x_\star}$ and then decreasing.
  \end{proof}

\begin{lemma}
  \label{lem:trivial:4}
 For any $v >0$, the function $x \mapsto (1-x)^2/x^3$ is decreasing on
 $\ooint{0,1}$ and there exists an unique $x \in \ooint{0,1}$ solving
 $(1-x)^2/x^3 = v$.
  \end{lemma}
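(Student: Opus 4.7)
The plan is to prove both claims by studying the derivative of $g(x) \eqdef (1-x)^2/x^3$ directly on $\ooint{0,1}$. First I would compute $g'(x)$ via the quotient rule and simplify: the numerator of $g'$ factors as $(1-x)\bigl[-2x - 3(1-x)\bigr] = (1-x)(x-3)$, so that
\[
g'(x) = \frac{(1-x)(x-3)}{x^4} \eqsp.
\]
On $\ooint{0,1}$ we have $1-x > 0$, $x-3 < 0$ and $x^4 > 0$, hence $g'(x) < 0$ throughout $\ooint{0,1}$, which shows that $g$ is strictly decreasing there.

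Next, I would observe that $g$ is continuous on $\ooint{0,1}$, with $\lim_{x \downarrow 0} g(x) = +\infty$ (since $(1-x)^2 \to 1$ and $x^3 \to 0^+$) and $\lim_{x \uparrow 1} g(x) = 0$. Combined with strict monotonicity, this implies that $g: \ooint{0,1} \to \ooint{0,+\infty}$ is a continuous bijection. Therefore, for any $v > 0$, the equation $g(x) = v$ admits a unique solution in $\ooint{0,1}$, which establishes the second claim.

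There is no serious obstacle here: the only slightly delicate point is the sign analysis of the factored derivative, but the factorization is elementary. The rest follows from the intermediate value theorem and strict monotonicity.
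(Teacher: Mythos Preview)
Your proof is correct and essentially identical to the paper's: both compute the derivative as $(1-x)(x-3)/x^4$ (the paper writes it equivalently as $-x^{-4}(x-3)(x-1)$), conclude strict monotonicity on $\ooint{0,1}$, and then use the limits at $0$ and $1$ together with continuity to obtain existence and uniqueness.
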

\begin{proof}
The derivative of $x \mapsto (1-x)^2/x^3$ is $-x^{-4} (x-3)(x-1)$ thus
showing that the function is decreasing on $\ooint{0,1}$; it tends to
$+\infty$ when $x \to 0$ and to $0$ when $x \to 1$. This concludes the
proof.\end{proof}

\section{Example:  Mixture of multivariate Gaussian distributions.}
Set
\begin{equation}
  \label{supp:eq:mixture:gaussian}
  f(y) \eqdef \sum_{\ell=1}^g \alpha_\ell \mathcal{N}_p(\mu_\ell,
  \Sigma)[y] \eqsp; \qquad \Gamma \eqdef \Sigma^{-1} \eqsp.
  \end{equation}
We write, up to the multiplicative constant $\sqrt{2 \pi}^{-p}$,
\begin{align*}
f(y) & = \sum_{z=1}^g \alpha_z \sqrt{\mathrm{det}(\Gamma)} \exp\left(-
\frac{1}{2}(y-\mu_z)^T \Gamma (y - \mu_z)\right) \\  & =
\sqrt{\mathrm{det}(\Gamma)} \ \exp\left( - \frac{1}{2} y^T \Gamma
y\right) \ \sum_{z=1}^g \exp\left( \sum_{\ell=1}^g 1_{z=\ell} \left\{
\ln \alpha_\ell - \frac{1}{2} \mu_\ell^T \Gamma \mu_\ell + \mu_\ell^T
\Gamma y \right\}\right) \eqsp; \qquad 
\end{align*}
    {\bf Parametric statistical model.} Set
    \[
\param \eqdef (\alpha_1, \ldots, \alpha_g, \mu_1, \ldots, \mu_g,
\Sigma) \eqsp,
    \]
and denote by $\mathcal{M}_p^+$ the set of the $p \times p$
positive definite matrices. Then we set
\[
\Param \eqdef \{\alpha_\ell \geq 0, \sum_{\ell=1}^g \alpha_\ell =1 \}
\times (\rset^{p})^g \times \mathcal{M}_p^+ \eqsp.
\]
{\bf Latent variable model in the exponential family.} The density \eqref{supp:eq:mixture:gaussian} is
of the form
\[
\exp(-\R_y(\param)) \ \sum_{z=1}^g \exp\left(\pscal{\s_y(z)}{\phi(\param)} \right)
\]
with
\[
\R_y(\param) \eqdef \frac{1}{2} y^T \Gamma y - \frac{1}{2}\ln \mathrm{det}(\Gamma)
\]
and $\s_y(z) \eqdef \A_y \rho(z) \in \rset^{g(1+p) } $ and $\phi = (\phi^{(1)}, \phi^{(2)})$ 
\begin{align}
  &  \A_y \eqdef \left[ \begin{matrix} \Id_{g}
      \\ \Id_g \otimes y \end{matrix}\right] \in \rset^{g(1+p) \times
    g} \\ & \rho(z) \eqdef \left[ \begin{matrix} \1_{z=1} \\ \ldots
      \\ \1_{z=g} \end{matrix}\right]  \in \rset^g \\ & \phi^{(1)}(\param) \eqdef
  \left[\ln \alpha_\ell - 0.5 \mu_\ell^T \Gamma \mu_\ell \right]_{1
    \leq \ell \leq g} \in \rset^g \label{supp:eq:def:phi1} \\ &
  \phi^{(2)}(\param) \eqdef \left[ (\Gamma \mu_\ell) \right]_{1
    \leq \ell \leq g} \in \rset^{pg} \label{supp:eq:def:phi2} \eqsp.
  \end{align}
Remember that $y^T A y = \mathrm{Tr}(A y y^T)$ is the scalar product
of $A$ and $y y^T$.

\subsection{The model}
Let $y_1, \ldots, y_n$ be $n$ $\rset^p$-valued observations; they are
modeled as the realization of a vector $(Y_1, \ldots, Y_n)$ with
distribution
\begin{itemize}
\item conditionally to a $[g]^\star$-valued vector of random
  variables $(Z_1, \ldots, Z_n)$, $(Y_1, \ldots, Y_n)$ are
  independent; and the conditional distribution of $Y_i$ is
  $\mathcal{N}_p(\mu_{Z_i}, \Sigma)$.
 \item the r.v.  $(Z_1, \ldots, Z_n)$ are i.i.d., $Z_1$ takes values
   on $[g]^\star$ with weights $\alpha_1, \ldots, \alpha_g$.
\end{itemize}
Equivalently, the random variables $(Y_1, \ldots, Y_n)$ are
independent with distribution $\sum_{\ell=1}^g \alpha_\ell \,
\mathcal{N}_p(\mu_\ell, \Sigma)$.

The goal is to estimate the parameter $\param \in \Param$ by a Maximum
Likelihood approach. 

\subsubsection{ The expression of $\loss{i}$ for $i \in [n]^\star$ and $\R$}
We want to minimize on $\Param$
\[
\param \mapsto \frac{1}{n} \sum_{i=1}^n \left(\R_{y_i}(\param) - \ln
\sum_{z=1}^g \exp\left(\pscal{\s_i(z)}{\phi(\param)} \right)\right)
\]
which is of the form $n^{-1} \sum_{i=1}^n \loss{i}(\param) +
\R(\param)$ with
\[
\loss{i}(\param) \eqdef - \ln \sum_{z=1}^g
\exp\left(\pscal{\s_i(z)}{\phi(\param)} \right) \eqsp, \qquad
\R(\param) \eqdef \frac{1}{2} \mathrm{Tr}\left(\Gamma \, \frac{1}{n}
\sum_{i=1}^n y_i y_i^T \right) - \frac{1}{2}\ln \mathrm{det}(\Gamma)
\eqsp;
\]
$\s_i$ is a shorthand notation
\begin{equation}
  \label{supp:eq:si}
  \s_i(z) \eqdef \s_{y_i}(z)  = \A_{y_i} \, \rho(z) \eqsp.
  \end{equation}
\subsubsection{The expression of $p_i(z,\param)$ and $\bars_i(\param)$ for $i \in [n]^\star$}
We have for any $u \in [g]^\star$,
\[
p_i(u,\param) \eqdef \frac{\alpha_u \ \mathcal{N}_p(\mu_u,
  \Sigma)[y_i]}{\sum_{\ell=1}^g \alpha_\ell \ 
  \mathcal{N}_p(\mu_\ell, \Sigma)[y_i]}
\]
so that 
\begin{align*}
  & \bars_i(\param) = \A_{y_i} \ \bar \rho_i(\param) \eqsp, 
  & \bar \rho_i(\param) = \left[ \begin{matrix} \bar \rho_{i,1}(\param) \\ \ldots  \\
 \bar \rho_{i,g}(\param)  \end{matrix} \right]\eqdef \left[ \begin{matrix} p_i(1,\param) \\ \ldots  \\
  p_i(g,\param)  \end{matrix} \right] \in \rset^g  \eqsp.
  \end{align*}

\subsubsection{The expression of $\map$.}
\label{secAPP:GMM:mapT} 
Let $s = (s^{(1)}, s^{(2)}) \in \rset^g \times \rset^{pg}$; we write
$\pscal{s}{\phi(\param)} = \sum_{j=1}^2
\pscal{s^{(j)}}{\phi^{(j)}(\param)}$ where $\phi^{(j)}$ are defined by
\eqref{supp:eq:def:phi1}-\eqref{supp:eq:def:phi2}.

Remember that $\map(s) = \mathrm{argmin}_{\param \in \Param}
-\pscal{s}{\phi(\param)}+\R(\param)$. We obtain $\map(s) = \{\alpha_1,
\cdots, \alpha_g, \mu_1, \cdots, \mu_g, \Sigma\}$ with
\begin{align*}
  \alpha_\ell &\eqdef \frac{s^{(1),\ell}}{ \sum_{u=1}^g s^{(1),u}}
  \eqsp, \\ \mu_\ell & \eqdef \frac{s^{(2),\ell}}{s^{(1),\ell}} \eqsp,
  \\ \Sigma & \eqdef \frac{1}{n} \sum_{i=1}^n y_i y_i^T -
  \sum_{\ell=1}^g s^{(1),\ell} \mu_\ell \, \mu_\ell^T \eqsp.
\end{align*}
The expressions of $\alpha_\ell, \mu_\ell$ are easily obtained; we
provide details for the covariance matrix. We have for any symmetric
matrix $H$
  \begin{align*}
  \ln \frac{\mathrm{det}(\Gamma+H)}{\mathrm{det}(\Gamma)} & = \ln
  \mathrm{det}(I + \Gamma^{-1}H) = \ln (1+ \mathrm{Tr}(\Gamma^{-1} H)
  +o(\|H\|)) \\ & = \mathrm{Tr}(\Gamma^{-1} H) +o(\|H\|) =
  \pscal{H}{\Gamma^{-1}} + o(\|H\|)
  \end{align*}
  $\map(s)$ depends on $\Gamma$ through the function
  \[
G(\Gamma) \eqdef -\frac{1}{2} \ln \mathrm{det}(\Gamma) + \frac{1}{2}
\pscal{\Gamma}{ \frac{1}{n}\sum_{i=1}^n y_i y_i^T + \sum_{\ell=1}^g
  s^{(1),\ell} \mu_\ell \mu_\ell^T} - \pscal{\Gamma}{\sum_{\ell=1}^g
  \mu_\ell (s^{(2),\ell})^T} \eqsp.
\]
Therefore
\[
G(\Gamma +H) - G(\Gamma) = \frac{1}{2} \pscal{H}{\Gamma^{-1}} + 
\frac{1}{2} \pscal{H}{ \frac{1}{n}\sum_{i=1}^n y_i y_i^T +
  \sum_{\ell=1}^g s^{(1),\ell} \mu_\ell \mu_\ell^T} -
\pscal{H}{\sum_{\ell=1}^g \mu_\ell (s^{(2),\ell})^T} \eqsp.
\]
This yields the update
\begin{align*}
\Sigma = \Gamma^{-1} & \eqdef \frac{1}{n}\sum_{i=1}^n y_i y_i^T +
\sum_{\ell=1}^g s^{(1),\ell} \mu_\ell \mu_\ell^T - 2 \sum_{\ell=1}^g
\mu_\ell (s^{(2),\ell})^T  \\
& = \frac{1}{n}\sum_{i=1}^n y_i y_i^T -
\sum_{\ell=1}^g s^{(1),\ell} \mu_\ell \mu_\ell^T
  \end{align*}
by using $\mu_\ell = s^{(2),\ell} / s^{(1),\ell}$.

  \subsubsection{The domain of $\map$.}
  \label{secApp:GMM:domainS}
 We will prove in the following sections that our algorithms {\bf all}
 require the computation of $\map(\hatS)$ for $\hatS$ of the form
 $n^{-1} \sum_{i=1}^n \A_{y_i} \widehat{\rho}_i$. Therefore, let us
 restrict our attention to the case \[ s = \frac{1}{n} \sum_{i=1}^n
 \A_{y_i} \, \rho_i \eqsp, \qquad \rho_i = \left[ \begin{matrix}
     \rho_{i,1} \\ \ldots \\ \rho_{i,g} \end{matrix} \right] \in
 \rset^g \eqsp,
 \]
 and let us formulate sufficient conditions on $\rho_{i,\ell}$ so that $\map(s) \in \Param$.
  \paragraph{The weights.} For all $\ell \in [g]^\star$,  we want $\alpha_\ell \in \ccint{0,1}$. Therefore, it is required
  \[
\forall \ell \in [g]^\star, \qquad \frac{\sum_{i=1}^n
  \rho_{i,\ell}}{\sum_{i=1}^n \sum_{u=1}^g \rho_{i,u}} \in \ccint{0,1}
\eqsp.
  \]
  \paragraph{The expectations.}
Upon noting that the expression of the log-likelihood of an
observation $y$ is unchanged if $y \leftarrow y-c$ and $\mu_\ell
\leftarrow \mu_\ell-c$ for any $c \in \rset^p$, we must have
  \[
\frac{1}{n} \sum_{i=1}^n y_i y_i^T - \sum_{\ell=1}^g s^{(1),\ell}
\mu_\ell \mu_\ell^T = \frac{1}{n} \sum_{i=1}^n (y_i-c) (y_i-c)^T -
\sum_{\ell=1}^g s^{(1),\ell} (\mu_\ell-c) (\mu_\ell-c)^T
\]
for any $c \in \rset^p$. This yields
\[
\sum_{\ell=1}^g s^{(1),\ell} =1 \eqsp, \qquad \sum_{\ell=1}^g
s^{(1),\ell} \mu_\ell = \frac{1}{n} \sum_{i=1}^n y_i \eqsp.
\]
Equivalently
\begin{equation}\label{eq:condition:poireau}
\frac{1}{n}\sum_{i=1}^n \sum_{\ell=1}^g \rho_{i,\ell} = 1, \qquad \frac{1}{n}
\sum_{i=1}^n y_i = \frac{1}{n}\sum_{i=1}^n \sum_{\ell=1}^g \rho_{i,\ell} y_i \eqsp.
\end{equation}
\paragraph{The covariance matrix.}
Finally, $\Sigma$ has to be definite positive: we have
\begin{align*}
\Sigma & = \frac{1}{n} \sum_{i=1}^n y_i y_i^T - \frac{1}{n}
\sum_{\ell=1}^g \left( \sum_{i=1}^n \rho_{i,\ell}\right)
\left(\sum_{i=1}^n \ \frac{\rho_{i,\ell}}{\sum_{j=1}^n \rho_{j,\ell}}
y_i \right) \left( \sum_{i=1}^n \frac{\rho_{i,\ell}}{\sum_{j=1}^n
  \rho_{j,\ell}} y_i \right)^T \\ & = \frac{1}{n} \sum_{i=1}^n \left(1
- \sum_{\ell=1}^g \rho_{i,\ell} \right) y_i y_i^T \\ & + \frac{1}{n} \sum_{i=1}^n
\sum_{\ell=1}^g \rho_{i,\ell} \left( y_i - \sum_{j=1}^n
\frac{\rho_{j,\ell}}{\sum_{r=1}^n \rho_{r,\ell}} y_j \right) \left(
y_i - \sum_{j=1}^n \frac{\rho_{j,\ell}}{\sum_{r=1}^n \rho_{r,\ell}}
y_j \right)^T \eqsp.
  \end{align*}

  \paragraph{As a conclusion,}  these conditions are satisfied if
  \[
\rho_{i,\ell} \geq 0, \qquad \sum_{\ell=1}^g \rho_{i, \ell} =1 \eqsp.
\]
Therefore, the domain of $\map$ contains
\[
\mathcal{S} \eqdef \left\{\frac{1}{n} \sum_{i=1}^n \A_{y_i} \rho_{i}:
\rho_i = (\rho_{i,1}, \ldots, \rho_{i,g}) \in (\rset_+)^g,
\sum_{\ell=1}^g \rho_{i,\ell}=1 \right\} \eqsp.
\]

\subsection{Algorithms}
\label{secAPP:GMM:algos}
\subsubsection{Notations}
Given $\param \in \Param$, define the a posteriori distribution for
all $i \in [n]^\star$ and $u \in [g]^\star$,
\[
p_i(u,\param) \eqdef \frac{\alpha_u \, \mathcal{N}_p(\mu_u,
  \Sigma)[y_i]}{\sum_{\ell=1}^g \alpha_\ell \, \mathcal{N}_p(\mu_\ell,
  \Sigma)[y_i]} \eqsp.
\]
For all $i \in [n]^\star$, set
\begin{align*}
  \bars_i(\param) & \eqdef \A_{y_i} \ \rho_i(\param) \ \qquad
  \rho_i(\param) = \left[ \begin{matrix}\rho_{i,1}(\param) \\ \cdots
      \\ \rho_{i,g}(\param) \end{matrix} \right] \eqdef
  \left[ \begin{matrix} p_i(1,\param) \\ \cdots
      \\ p_i(g,\param) \end{matrix} \right] \eqsp.
  \end{align*}
   For a subset $\batch \subseteq \{1, \ldots, n\}$ of size $\lbatch$,
   define
   \[
\bars_\batch(\param) \eqdef \frac{1}{\lbatch} \sum_{i \in \batch}
\bars_i(\param) = \frac{1}{\lbatch} \sum_{i=1}^n \A_{y_i} \,
\rho_i(\param) \, \1_{i \in \batch} \eqsp.
\]

\subsubsection{The EM algorithm}
\label{sec:supp:GMM:EM}
              {\bf Input.} the current value of the parameter
              $\param^k$.

              \noindent {\bf One iteration.} Compute the statistic
\begin{align*}
  & \bars(\param^k) \eqdef \frac{1}{n} \sum_{i=1}^n \A_{y_i} \,
  \rho_i(\param^k) \eqsp.
\end{align*}
Update the parameter $\param^{k+1} = \map(\bars(\param^k))$.

\noindent {\bf Is the statistic in the domain of $\map$ ?} 
We have $\param^{k+1} = \map(\bars(\param^k))$ with $\bars(\param^k) =
n^{-1} \sum_{i=1}^n \A_{y_i} \ \rho_i(\param^k)$. It is easily seen
that 
\[
\rho_{i,\ell}(\param^{k}) \geq 0, \qquad \sum_{\ell=1}^g
\rho_{i,\ell}(\param^{k}) = 1 \eqsp,
\]
which implies that $\hatS^{k+1} \in \mathcal{S}$ and therefore,
$\param^{k+1} \in \Param$. 

\noindent {\bf Update the parameters.} for $\ell \in [g]^\star$
\begin{align*}
  \alpha_\ell^{k+1} &\eqdef \frac{1}{n} \sum_{i=1}^n
  \rho_{i,\ell}(\param^k) \eqsp, \\ \mu_\ell^{k+1} & \eqdef
  \frac{\sum_{i=1}^n \rho_{i,\ell}(\param^k) \ y_i}{ \sum_{i=1}^n
    \rho_{i,\ell}(\param^k)}= \frac{1}{n \alpha_\ell^{k+1}}
  \sum_{i=1}^n \rho_{i,\ell}(\param^k) \ y_i \eqsp, \\ \Sigma^{k+1} &
  \eqdef \frac{1}{n} \sum_{i=1}^n y_i y_i^T - \sum_{\ell=1}^g
  \alpha_\ell^{k+1} \ \mu_\ell^{k+1} \ (\mu_\ell^{k+1})^T \eqsp.
  \end{align*}

\begin{algorithm}[htbp]
  \KwData{$\kmax \in \nset$, $\param^\init \in \Param$, $\Sigma_\star
    \eqdef n^{-1} \sum_{i=1}^n y_i y_i^T$} \KwResult{The EM sequence:
    $(\hatS^k, \param^k), k \in [\kmax]$} {\tcc*[h]{Initialization}}
  \; Compute $\hatS^{0} = n^{-1} \sum_{i=1}^n \A_{y_i} \,
  \rho_i(\param^\init)$ for $j=1,2$ \; Set $\param^0 = \param^\init$
  \; \For{$k=0, \ldots, \kmax-1$}{\tcc*[h]{Update the statistics} \;
    \qquad $\hatS^{k+1} = n^{-1} \sum_{i=1}^n \A_{y_i}
    \ \rho_{i}(\param^k) $ \; \tcc*[h]{Update the parameter
      $\param^{k+1}$} \; \qquad $\alpha_\ell^{k+1} = n^{-1}
    \sum_{i=1}^n \rho_{i,\ell}(\param^k)$ for $\ell \in [g]^\star$\;
    \qquad $\mu_\ell^{k+1} = n^{-1} \sum_{i=1}^n
    \rho_{i,\ell}(\param^k) \, y_i / \alpha_\ell^{k+1}$ for $\ell \in
        [g]^\star$ \; \qquad $\Sigma^{k+1} = \Sigma_\star -
        \sum_{\ell=1}^g \alpha_\ell^{k+1} \mu_\ell^{k+1}
        (\mu_\ell^{k+1})^T$}
    \caption{The EM algorithm. Total number of calls to the examples:
      $\kmax \times n$.  } \label{algo:EM}
\end{algorithm}
 
\subsubsection{The iEM algorithm}
\label{sec:supp:GMM:iEM}
{\bf Input:}
\begin{itemize}
\item the current value of the parameter $\param^k$
  \item a step size $\pas_{k+1} \in \ccint{0,1}$.
 \item the current value of the statistic $\hatS^k = n^{-1}
   \sum_{i=1}^n \A_{y_i} \hat \rho_i^k$ where $\hat \rho_{i,\ell}^k \geq
   0$ and $ \sum_{\ell=1}^g \hat \rho_{i,\ell}^k =1$.
\item the current memory vectors $\Smem^k_i = \A_{y_i} \, \tilde
  \rho^k_i$ for $i=1, \ldots, n$, where $\sum_{\ell=1}^g \tilde
  \rho^k_{i,\ell} =1$.
\item the current mean of this vector $\Sronde^k = n^{-1} \sum_{i=1}^n
  \A_{y_i} \tilde \rho^k_i$.
\end{itemize}
    {\bf One iteration.}  Sample at random a set $\batch_{k+1}$ of
    $\lbatch$ integers in $[n]^\star$, possibly with
    replacement. \\ Update the {\tt memory} quantities: for $i \notin
    \batch_{k+1}$, $\Smem_i^{k+1} = \Smem^k_i$ and otherwise for all
    $i \in \batch_{k+1}$, $\Smem_i^{k+1} \eqdef \A_{y_i}
    \ \rho_i(\param^k)$. \\ Update its mean
\begin{align*}
\Sronde^{k+1} & \eqdef \Sronde^k + \frac{1}{n} \left( \sum_{i \in
  \batch_{k+1}} \Smem_i^{k+1} - \sum_{i \in \batch_{k+1}} \Smem^k_i
\right) \\ & = \frac{1}{n} \sum_{i=1}^n \A_{y_i} \ \left( (1- \1_{i
  \in \batch_{k+1}}) \tilde \rho^k_i + \rho_i(\param^k) \1_{i \in
  \batch_{k+1}}\right) \eqsp.
\end{align*}
Update the statistics $\hatS $ by setting
\begin{align*}
  \hatS^{k+1} & \eqdef (1-\pas_{k+1}) \hatS^k + \pas_{k+1}
  \Sronde^{k+1} \\ & = \frac{1}{n} \sum_{i=1}^n \A_{y_i} \ \left(
  (1-\pas_{k+1}) \hat \rho_i^k + \pas_{k+1} (1- \1_{i \in \batch_{k+1}})
  \tilde \rho^k_i + \pas_{k+1} \rho_i(\param^k) \1_{i \in
    \batch_{k+1}} \right) \eqsp.
\end{align*}
{\bf Induction assumption on the expression of $\Sronde^{k+1}$ and
  $\hatS^{k+1}$.}  $\Sronde^{k+1}$ is of the form $n^{-1} \sum_{i=1}^n
\A_{y_i} \tilde \rho_i^{k+1}$ with, for any $i \in [n]^\star$,
\[
\tilde \rho_i^{k+1} \eqdef (1- \1_{i \in \batch_{k+1}}) \tilde
\rho^k_i + \rho_i(\param^k) \1_{i \in \batch_{k+1}}\eqsp;
\]
is is easily seen that $\sum_{\ell=1}^g \tilde \rho_{i,\ell}^{k+1} =
1$ and $\rho_{i,\ell}^{k+1} \geq 0$.

$\hatS^{k+1}$ is of the form $n^{-1} \sum_{i=1}^n \A_{y_i} \hat
\rho_i^{k+1}$ with, for any $i \in [n]^\star$,
\[
\hat \rho_i^{k+1} \eqdef (1-\pas_{k+1}) \hat \rho_i^k + \pas_{k+1} (1-
\1_{i \in \batch_{k+1}}) \tilde \rho^k_i + \pas_{k+1} \rho_i(\param^k)
\1_{i \in \batch_{k+1}} \eqsp.
\]
 Since $\sum_{\ell=1}^g \rho_{i,\ell}(\param^k) = \sum_{\ell=1}^g \hat
 \rho_{i,\ell} = \sum_{\ell=1}^g \tilde \rho^k_{i,\ell}=1$ then
 $\sum_{\ell=1}^g \hat \rho_{i,\ell}^{k+1} =1 $. In addition, since
 $\rho_i, \hat \rho_{i, \ell}^k$ and $\tilde \rho_{i,\ell}^k$ are non
 negative and $\pas_{k+1} \in \ccint{0,1}$, then $\hat
 \rho_{i,\ell}^{k+1}$ is non negative.

\noindent {\bf It the statistic $\hatS^{k+1}$ in the domain of $\map$
  ?}  We have just established that $\hat \rho_{i,\ell}^{k+1} \geq 0$
and $\sum_{\ell=1}^g \hat \rho_{i,\ell}^{k+1} =1 $.  Consequently
$\hatS^{k+1} \in \mathcal{S}$ which implies that $\map(\hatS^{k+1})
\in \Param$.

\noindent {\bf Update the parameters:}
\begin{align*}
  \alpha_\ell^{k+1} & \eqdef \hatS_\ell = \frac{1}{n } \sum_{i=1}^n
  \hat \rho_{i,\ell}^{k+1} \eqsp, \\ \mu_\ell^{k+1} & \eqdef
  \frac{\hatS_{g+(\ell-1)p+1:g+\ell p}^{k+1}}{\hatS_\ell^{k+1}} =
  \frac{1}{n \alpha_\ell^{k+1}} \sum_{i=1}^n y_i \, \hat
  \rho_{i,\ell}^{k+1}\eqsp, \\ \Sigma^{k+1} & \eqdef \frac{1}{n}
  \sum_{i=1}^n y_i y_i^T - \sum_{\ell=1}^g\alpha_\ell^{k+1}
  \mu_\ell^{k+1} (\mu_\ell^{k+1})^T \eqsp.
\end{align*}

\begin{algorithm}[htbp]
  \KwData{$\kmax \in \nset$, $\param^\init \in \Param$, $\pas \in
    \ocint{0,1}$, $\lbatch \in \nset$, $\Sigma_\star \eqdef
    \frac{1}{n} \sum_{i=1}^n y_i y_i^T$.}  \KwResult{The incremental
    EM sequence: $(\hatS^k, \param^k), k \in [\kmax]$}
         {\tcc*[h]{Initialization}} \; Compute $\Smem_{0,i} = \A_{y_i}
         \, \rho_i(\param^\init)$ for $i \in [n]^\star$ \; Compute
         $\Sronde^{0} = \hatS^{0} = n^{-1} \sum_{i=1}^n \A_{y_i} \,
         \rho_i(\param^\init)$ \; Set $\param^0 = \param^\init$ \;
         \For{$k=0, \ldots, \kmax -1$}{Sample a mini-batch
           $\batch_{k+1}$ of size $\lbatch$ \; \tcc*[h]{Update the
             memory quantities} \; $\Smem_{k+1,i} = \Smem_{k,i}$ for
           $i \notin \batch_{k+1}$\; $\Smem_{k+1,i} = \A_{y_i} \,
           \rho_i(\param^k)$ for $i \in \batch_{k+1}$ \;
           $\Sronde^{k+1} = \Sronde^{k} + n^{-1} \sum_{i \in
             \batch_{k+1}} \left( \Smem_{k+1,i} - \Smem_{k,i} \right)$
           \; \tcc*[h]{Update the statistics} \; $\hatS^{k+1} =
           (1-\pas) \hatS^{k} + \pas \Sronde_{k+1}$ \; \tcc*[h]{Update
             the parameter $\param^{k+1}$} \; $\alpha_\ell^{k+1} =
           \hatS^{k+1}_\ell$ for $\ell \in [g]^\star$\;
           $\mu_\ell^{k+1} = \hatS^{k+1}_{g+(\ell-1)p+1:g+\ell p} /
           \hatS^{k+1}_\ell$ for $\ell \in [g]^\star $ \;
           $\Sigma^{k+1} = \Sigma_\star - \sum_{\ell=1}^g
           \alpha_\ell^{k+1} \mu_\ell^{k+1} (\mu_\ell^{k+1})^T $}
    \caption{The incremental EM algorithm. Total number of calls to
      the examples: $n+ \kmax \times \lbatch$.} \label{algo:EM}
\end{algorithm}

\subsubsection{The Online-EM algorithm}
\label{sec:supp:GMM:onlineEM}
{\bf Input.}
\begin{itemize}
\item the current value of the parameter $\param^k$
  \item a step size $\pas_{k+1} \in \ccint{0,1}$
\item the current value of the statistics $\hatS^k = n^{-1}
  \sum_{i=1}^n \A_{y_i} \ \hat \rho_i^k$ such that $\hat
  \rho_{i,\ell}^k\geq 0$ and $ n^{-1} \sum_{i=1}^n \sum_{\ell=1}^g
  \hat{\rho}_{i,\ell}^{k} =1$;
\end{itemize}
\noindent {\bf One iteration.} Sample at random a set $\batch_{k+1}$
of $\lbatch$ integers in $\{1, \ldots, n\}$, with NO replacement; and
compute the statistics
\begin{align*}
   \hatS^{k+1} & \eqdef (1-\pas_{k+1}) \hatS^{k} + \pas_{k+1} \frac{1}{\lbatch}
   \sum_{i \in \batch_{k+1}} \A_{y_i} \ \rho_i(\param^k) \eqsp,  \\ & = \frac{1}{n}
   \sum_{i=1}^n \A_{y_i} \ \left( (1-\pas_{k+1}) \hat \rho_i^k +
  \pas_{k+1}  \frac{n}{\lbatch} \rho_i(\param^k) \, \1_{i \in \batch_{k+1}} \right)  \eqsp.
\end{align*}
It is of the form $\hatS^{k+1} = n^{-1} \sum_{i=1}^n \A_{y_i} \hat
\rho_i^{k+1}$ with
\[
\hat \rho_i^{k+1} \eqdef (1-\pas_{k+1}) \hat \rho_i^k + \pas_{k+1}
\frac{n}{\lbatch} \rho_i(\param^k) \, \1_{i \in \batch_{k+1}} \eqsp.
\]
\noindent {\bf Induction assumption on the expression of
  $\hatS^{k+1}$.} Since $\rho_{i,\ell}(\param^k) \geq 0$, $\hat
\rho_{i,\ell}^k \geq 0$ and $\pas_{k+1} \in \ccint{0,1}$,  then $\hat
\rho_{i,\ell}^{k+1} \geq 0$.

Since $\batch_{k+1}$ is sampled with NO replacement, we have
$\sum_{i=1}^n \1_{i \in \batch_{k+1}} = \lbatch$ thus implying, by
using the induction assumption $n^{-1} \sum_{i=1}^n \sum_{\ell=1}^g
\hat{\rho}_{i,\ell}^{k} =1$ that
\[
n^{-1} \sum_{i=1}^n \sum_{\ell=1}^g \hat{\rho}_{i,\ell}^{k+1} =1
\eqsp.
\]
\noindent{\bf Is the statistic $\hatS^{k+1}$ in the domain of $\map$ ?}
We have $\param^{k+1} = \map(\hatS^{k+1})$ with $\hatS^{k+1} = n^{-1}
\sum_{i=1}^n \A_{y_i} \ \hat{\rho}_i^{k+1}$. Unfortunately, we can not
prove (even by induction) that $\sum_{\ell=1}^g \hat
\rho_{i,\ell}^{k+1} = 1$ for all $i \in [n]^\star$.  Therefore, we do
not have necessarily $\hatS^{k+1} \in \mathcal{S}$.

\noindent {\bf Update the parameters.} For $\ell \in [g]^\star$,
\begin{align*}
  \alpha_\ell^{k+1} &\eqdef \hatS_\ell^{k+1} = \frac{1}{n} \sum_{i=1}^n \hat
  \rho_{i,\ell}^{k+1} \eqsp, \\ \mu_\ell^{k+1} & \eqdef
  \frac{\hatS_{g+(\ell-1)p+1:g+\ell p}^{k+1}}{\hatS_\ell^{k+1}} = \frac{1}{n
    \alpha_\ell^{k+1}} \sum_{i=1}^n \hat \rho_{i,\ell}^{k+1} \,  y_i\eqsp, \\ \Sigma^{k+1} & \eqdef
  \frac{1}{n} \sum_{i=1}^n y_i y_i^T - \sum_{\ell=1}^g
 \alpha_\ell^{k+1} \mu_\ell^{k+1} (\mu_\ell^{k+1})^T \eqsp.
  \end{align*}

\begin{algorithm}[htbp]
  \KwData{$\kmax \in \nset$, $\param^\init \in \Param$, $\pas \in
    \ocint{0,1}$, $\lbatch \in \nset$, $ \Sigma_\star \eqdef
    \frac{1}{n} \sum_{i=1}^n y_i y_i^T$.}  \KwResult{The Online EM
    sequence: $(\hatS^k, \param^k), k \in [\kmax]$}
         {\tcc*[h]{Initialization}} \; Compute $\hatS^{0} = n^{-1}
         \sum_{i=1}^n \A_{y_i} \, \rho_i(\param^\init)$ for $j=1,2$ \;
         Set $\param^0 = \param^\init$ \; \For{$k=0, \ldots, \kmax
           -1$}{Sample a mini-batch $\batch_{k+1}$ of size $\lbatch$
           with no replacement \; \tcc*[h]{Update the statistics} \; $
           \bars_{\batch_{k+1}}(\param^{k}) = \lbatch^{-1} \sum_{i \in
             \batch_{k+1}} \A_{y_i} \, \rho_i(\param^k)$ \;
           $\hatS^{k+1} = (1-\pas) \hatS^{k} + \pas
           \bars_{\batch_{k+1}}(\param^{k})$ \; \tcc*[h]{Update the
             parameter $\param^{k+1}$} \; $\alpha_\ell^{k+1} =
           \hatS^{k+1}_\ell$ for $\ell \in [g]^\star$\;
           $\mu_\ell^{k+1} = \hatS^{k+1}_{g+(\ell-1)p+1:g+\ell p} /
           \alpha_\ell^{k+1}$ for $\ell \in [g]^\star$ \;
           $\Sigma^{k+1} = \Sigma_\star - \sum_{\ell=1}^g
           \alpha_\ell^{k+1} \mu_\ell^{k+1} (\mu_\ell^{k+1})^T$}
    \caption{The Online EM algorithm. Total number of calls to the
      examples: $n+ \kmax \times \lbatch$.} \label{algo:EM}
\end{algorithm}

\subsubsection{The FIEM algorithm}
\label{sec:supp:GMM:FIEM}
{\bf Input.}
\begin{itemize}
\item the current value of the parameter $\param^k$
  \item a step size $\pas_{k+1} \in \ccint{0,1}$.
 \item the current value of the statistics $\hatS^k = n^{-1}
   \sum_{i=1}^n \A_{y_i} \, \hat \rho_i^k$
\item the current memory vectors $\Smem^k_i = \A_{y_i} \, \tilde
  \rho^k_i $, with $\sum_{\ell=1}^g \hat \rho^k_{i,\ell}=1$.
\item the current mean of this vector $\Sronde^k = n^{-1} \sum_{i=1}^n
  \A_{y_i} \, \tilde \rho^k_i$, with $\sum_{\ell=1}^g \tilde
  \rho^k_{i,\ell}=1$ and $\tilde \rho^k_{i,\ell} \geq 0$.
\end{itemize}
{\bf One iteration.}  Sample a mini-batch $\batch_{k+1}$ of size
$\lbatch$ and update the {\tt memory} quantities: for $i \notin
\batch_{k+1}$, $\Smem_i^{k+1} = \Smem^{k}_i$ and otherwise for $i \in
\batch_{k+1}$, $\Smem_i^{k+1} \eqdef \A_i \, \rho_i(\param^k)$. \\
Update the mean of this memory quantity
\begin{align*}
\Sronde^{k+1} & \eqdef \Sronde^k + \frac{1}{n} \sum_{i \in
  \batch_{k+1}} \left( \Smem_i^{k+1} - \Smem_i^k \right) \\ & =
\frac{1}{n} \sum_{i=1}^n \A_{y_i} \, \left( (1- \1_{i \in \batch_{k+1}})
\tilde \rho^k_i + \rho_i(\param^k) \1_{i \in \batch_{k+1}} \right) \eqsp.
\end{align*}
Sample a second mini-batch $\batch'_{k+1}$ of size $\lbatch$. Compute
\begin{align*}
  & \bars_{\batch'_{k+1}}(\param^k) \eqdef \frac{1}{\lbatch} \sum_{i
    \in \batch'_{k+1}} \A_{y_i} \ \rho_i(\param^k) \eqsp.
  \end{align*}
Set $\Smem_{\batch'_{k+1}} \eqdef \lbatch^{-1} \sum_{j\in\batch'_{k+1}} \Smem_j^{k+1}$,
and update the statistics $\hatS^{k+1}$
\begin{align*}
   \hatS^{k+1} & \eqdef \hatS^k + \pas_{k+1} \left(
   \bars_{\batch'_{k+1}}(\param^k) - \hatS^k + \Sronde^{k+1} -
   \Smem_{\batch'_{k+1}} \right) \eqsp, \\ & = \frac{1}{n}
   \sum_{i=1}^n \A_{y_i} \ \left( (1-\pas_{k+1}) \hat \rho_i^k +
   \pas_{k+1} \frac{n}{\lbatch} \rho_i(\param^k) \1_{i \in
     \batch'_{k+1}} + \pas_{k+1} (1- \1_{i \in \batch_{k+1}}) \tilde
   \rho^k_i \right. \\ & \left. + \pas_{k+1} \rho_i(\param^k) \1_{i
     \in \batch_{k+1}} - \pas_{k+1} \frac{n}{\lbatch} \left( \tilde
   \rho_i^k \1_{i \notin \batch_{k+1}} + \rho_i(\param^k)\1_{i \in
     \batch_{k+1}} \right) \1_{i \in \batch'_{k+1}} \right)\eqsp.
\end{align*}
$\hatS^{k+1}$ is of the form $n^{-1} \sum_{i=1}^n \A_{y_i} \hat \rho_i^{k+1}$ with
\begin{align*}
  \hat \rho_i^{k+1} & \eqdef (1-\pas_{k+1}) \hat \rho_i^k + \pas_{k+1}
  \frac{n}{\lbatch} \rho_i(\param^k) \1_{i \in \batch'_{k+1}} +
  \pas_{k+1} (1- \1_{i \in \batch_{k+1}}) \tilde \rho^k_i \\ & +
  \pas_{k+1} \rho_i(\param^k) \1_{i \in \batch_{k+1}} - \pas_{k+1}
  \frac{n}{\lbatch} \left( \tilde \rho_i^k \1_{i \notin \batch_{k+1}}
  + \rho_i(\param^k)\1_{i \in \batch_{k+1}} \right) \1_{i \in
    \batch'_{k+1}} \eqsp.
\end{align*}
{\bf Induction assumption on $\Sronde^{k}$ and $\hatS^k$.} Since
$\sum_{\ell=1}^g \rho_{i,\ell} = \sum_{\ell=1}^g \hat \rho^k_{i,\ell}
=1$, then it is easily seen that $\sum_{\ell=1}^g \hat
\rho^{k+1}_{i,\ell} = 1$. 

We have $\Sronde^{k+1} = n^{-1} \sum_{i=1}^n \A_{y_i} \tilde
\rho_i^{k+1}$ with $\tilde \rho_i^{k+1} = (1- \1_{i \in \batch_{k+1}})
\tilde \rho^k_i + \rho_i(\param^k) \1_{i \in \batch_{k+1}}$.  It is
easily seen that under the induction assumption $\sum_{\ell=1}^g
\tilde \rho_{i,\ell}^{k+1} = 1$ and $\tilde \rho_{i,\ell}^{k+1} \geq
0$.

\noindent{\bf Is the statistic $\hatS^{k+1}$ in the domain of $\map$
  ?} The property $\hat \rho_{i,\ell}^{k+1} \geq 0$ may fail even
assuming that $\hat \rho_{i,\ell}^{k} \geq 0$.

\noindent{\bf Update the parameters}
\begin{align*}
  \alpha_\ell^{k+1} &\eqdef\hatS_\ell^{k+1} \eqsp, \\ \mu_\ell^{k+1} &
  \eqdef \frac{\hatS_{g + (\ell -1)p+1 :g+\ell
      p}^{k+1}}{\hatS_\ell^{k+1}} \eqsp, \\ \Sigma^{k+1} & \eqdef
  \frac{1}{n} \sum_{i=1}^n y_i y_i^T - \sum_{\ell=1}^g
  \alpha_\ell^{k+1} \mu_\ell^{k+1} (\mu_\ell^{k+1})^T \eqsp.
\end{align*}

\begin{algorithm}[htbp]
  \KwData{$\kmax \in \nset$, $\param^\init \in \Param$, $\pas \in \ocint{0,1}$,
    $\lbatch \in \nset$, $\Sigma_\star \eqdef n^{-1} \sum_{i=1}^n y_i
    y_i^T$.} \KwResult{The FIEM sequence: $(\hatS^k, \param^k), k \in [\kmax]$} {\tcc*[h]{Initialization}} \; Compute $\Smem_{0,i}
  = \A_{y_i} \, \rho_i(\param^\init)$ for $i =1, \ldots, n$ \; Compute
  $\Sronde^{0} = \hatS^{0} = n^{-1} \sum_{i=1}^n \A_{y_i} \,
  \rho_i(\param^\init)$ \; Set $\param^0 = \param^\init$ \; \For{$k=0,
    \ldots, \kmax -1$}{Sample independently two mini-batches
    $\batch_{k+1}$ and $\batch'_{k+1}$, both of size $\lbatch$ \;
    \tcc*[h]{Update the memory quantities} \; $\Smem_{k+1,i} =
    \Smem_{k,i}$ for $i \notin \batch_{k+1}$ \; $\Smem_{k+1,i} =
    \A_{y_i} \, \rho_i(\param^k)$ for $i \in \batch_{k+1}$ \;
    $\Sronde^{k+1} = \Sronde^{k} + n^{-1} \sum_{i \in \batch_{k+1}}
    \left( \Smem_{k+1,i} - \Smem_{k,i} \right)$ \; \tcc*[h]{Update the
      statistics} \; $\bars_{\batch'_{k+1}}(\param^k) = \lbatch^{-1}
    \sum_{r \in \batch'_{k+1}} \A_{y_r} \, \rho_r(\param^k)$ \;
    $\Smem_{k+1,\batch'_{k+1}} = \lbatch^{-1} \sum_{r \in
      \batch'_{k+1}} \Smem_{k+1,r}$ \; $\hatS^{k+1} = (1-\pas)
    \hatS^{k} + \pas \left( \bars_{\batch'_{k+1}}(\param^k) +
    \Sronde_{k+1} - \Smem_{k+1,\batch'_{k+1}}\right)$ \;
    \tcc*[h]{Update the parameter $\param^{k+1}$} \;
    $\alpha_\ell^{k+1} = \hatS^{k+1}_\ell$ for $\ell \in [g]^\star$\;
    $\mu_\ell^{k+1} = \hatS^{k+1}_{g + (\ell-1)p+1:g+\ell p} /
    \hatS^{k+1}_\ell$ for $\ell \in [g]^\star $ \; $\Sigma^{k+1} =
    \Sigma_\star - \sum_{\ell=1}^g \alpha_\ell^{k+1} \mu_\ell^{k+1}
    (\mu_\ell^{k+1})^T$}
    \caption{The FIEM algorithm. Total number of calls to
      the examples: $n+ \kmax \times 2\lbatch$.} \label{algo:EM}
\end{algorithm}

\clearpage
\newpage

\subsection{Additional plots}

\begin{figure*}[htbp]
  \includegraphics[width=\textwidth]{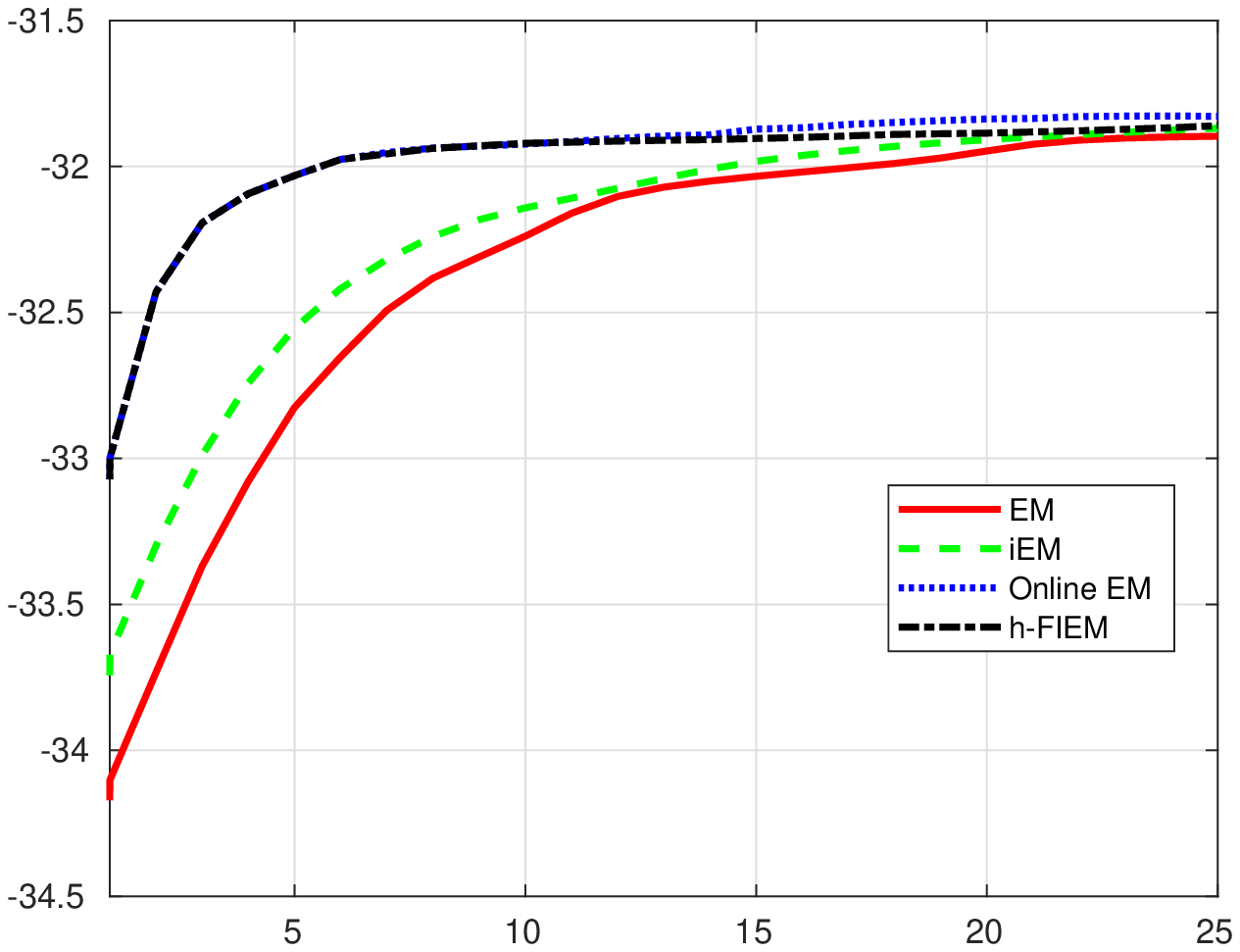}
\caption{Evolution of the normalized log-likelihood: average over $10$
  independent paths of length $100$ epochs. The first $25$ epochs are
  displayed. All the paths start from the same value at time $t=0$,
  with a normalized log-likelihood equal to $-58.31$.}
\end{figure*}

\begin{figure*}[htbp]
  \includegraphics[width=\textwidth]{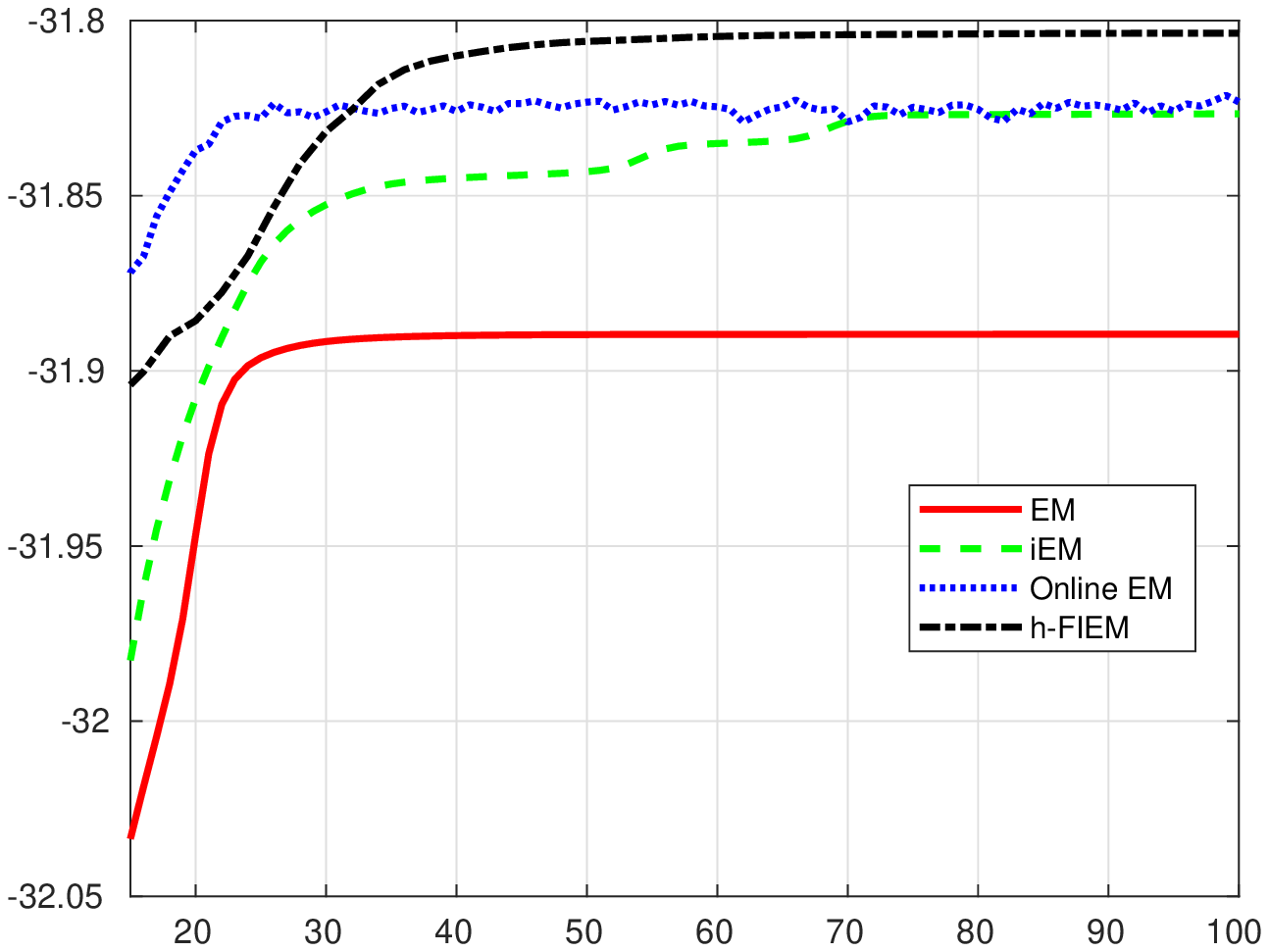}
\caption{Evolution of the normalized log-likelihood: average over $10$
  independent paths of length $100$ epochs.  The first $14$ epochs are
  discarded. All the paths start from the same value at time $t=0$,
  with a normalized log-likelihood equal to $-58.31$.}
\end{figure*}

\begin{figure*}[htbp]
  \includegraphics[width=\textwidth]{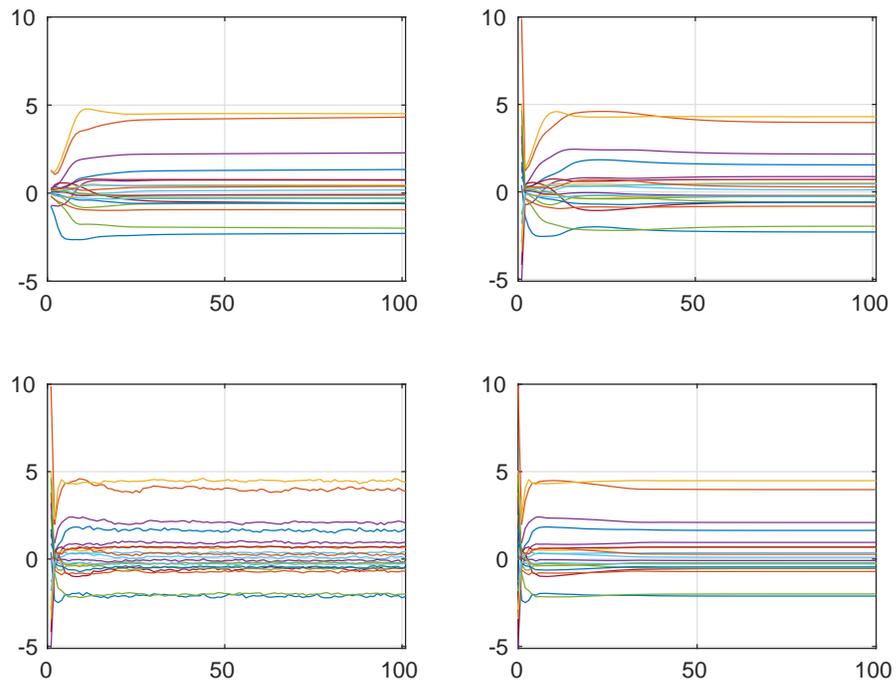}
  \caption{Evolution of the $p=20$ components of one of the
    expectation vector $\mu_\ell$ along one path of length $100$
    epochs. All the paths start from the same value at time $t=0$. EM
    (top left), iEM (top right), Online EM (bottom left) and h-FIEM
    (bottom right).}
\end{figure*}

\begin{figure*}[htbp]
  \includegraphics[width=\textwidth]{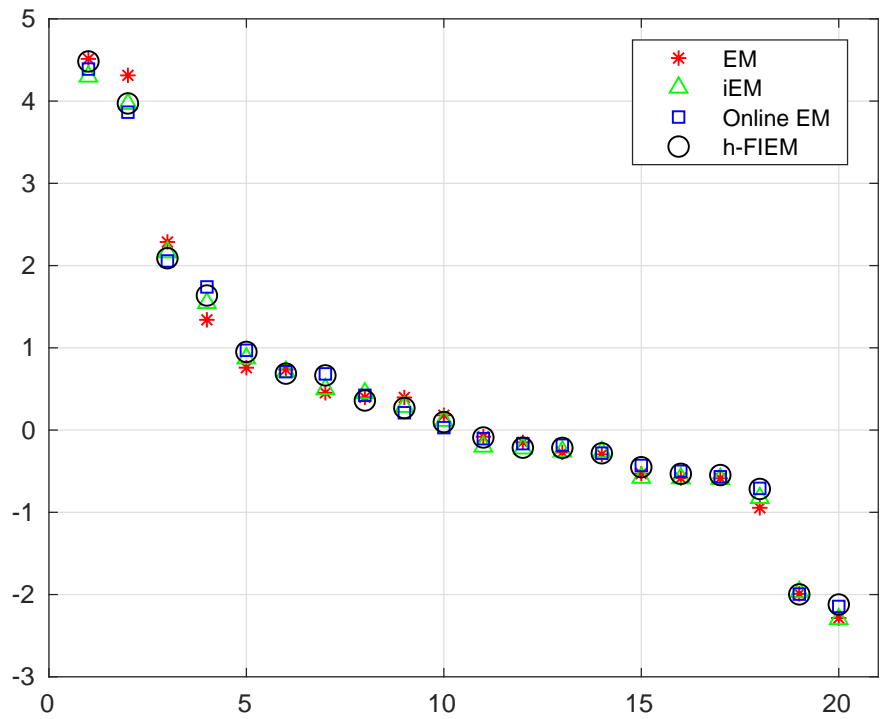}
\caption{Estimation of the $p=20$ components of one of the expectation
  vector $\mu_\ell$. The estimator is the value of the parameter
  obtained at the end of a single path of length $100$ epochs.}
\end{figure*}

\begin{figure*}[htbp]
  \includegraphics[width=\textwidth]{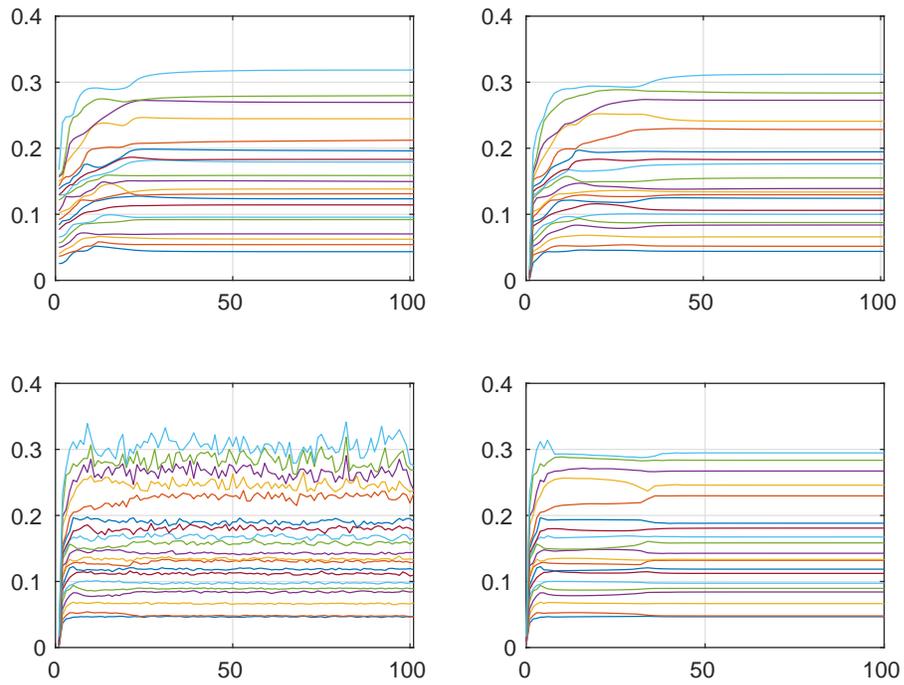}
  \caption{Evolution of the $p=20$ eigenvalues of the covariance
    matrix $\Sigma$ along one path of length $100$ epochs. All the
    paths start from the same value at time $t=0$. EM (top left), iEM
    (top right), Online EM (bottom left) and h-FIEM (bottom right).}
\end{figure*}

\begin{figure*}
  \includegraphics[width=\textwidth]{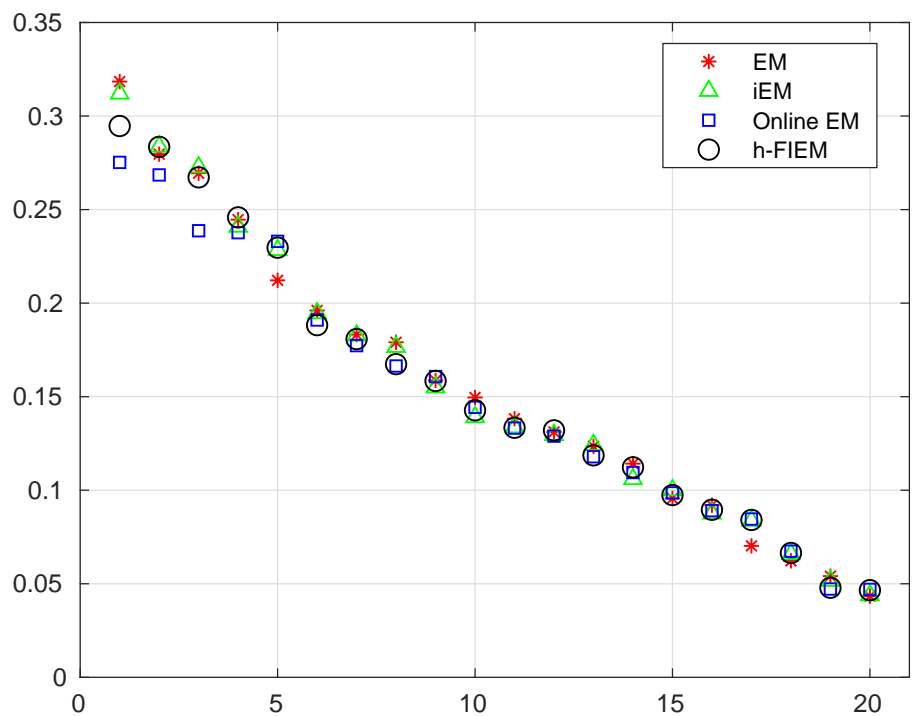}
\caption{Estimation of the $p=20$ eigenvalues of the covariance matrix
  $\Sigma$.  The estimator is the value of the parameter obtained at
  the end of a single path of length $100$ epochs.}
\end{figure*}

\end{document}